    \newtheorem{theorem}{Theorem}
    \newtheorem{lemma}{Lemma}
    \newtheorem{corollary}{Corollary}
    \newtheorem{definition}{Definition}
    \newtheorem{example}{Example}
    \let\MYoriglatexcaption\caption
    \renewcommand{\caption}[2][\relax]{\MYoriglatexcaption[#2]{#2}}
  \renewcommand{\raggedright}{\leftskip=0pt \rightskip=0pt plus 0cm}
\begin{document}
%
\title{Fast Stochastic Ordinal Embedding with Variance Reduction and Adaptive Step Size}
%
%
%
%

\author
{
  Ke~Ma,~\IEEEmembership{Member,~IEEE,}
  Jinshan~Zeng,~
  Jiechao~Xiong,~
  Qianqian~Xu,~\IEEEmembership{Senior Member,~IEEE,}
  Xiaochun~Cao$^*$,~\IEEEmembership{Senior Member,~IEEE,}
  Wei~Liu,~\IEEEmembership{Senior Member,~IEEE,}
  Yuan~Yao$^*$
  \IEEEcompsocitemizethanks
  {
    \IEEEcompsocthanksitem K. Ma is with the School of Computer Science and Technology, University of Chinese Academy of Sciences, Beijing 100049, China, and with the Artificial Intelligence Research Center, Peng Cheng Laboratory, Shenzhen 518055, China,  and part of this work was performed when he was in the Key Laboratory of Information Security, Institute of Information Engineering, Chinese Academy of Sciences, Beijing 100093, China, and in the School of Cyber Security, University of Chinese Academy of Sciences, Beijing 100049, China. E-mail: make@ucas.ac.cn\protect\\
    \IEEEcompsocthanksitem J. Zeng is with the School of Computer Information Engineering, Jiangxi Normal University, Nanchang, Jiangxi 330022, China, and part of this work was performed when he was with the Department of Mathematics, Hong Kong University of Science and Technology, Clear Water Bay, Kowloon, Hong Kong. E-mail: jsh.zeng@gmail.com\protect\\
    \IEEEcompsocthanksitem J. Xiong is with the Tencent AI Lab, Shenzhen, Guangdong, China. E-mail: jcxiong@tencent.com\protect\\
    \IEEEcompsocthanksitem Q. Xu is with the Key Laboratory of Intelligent Information Processing, Institute of Computing Technology, Chinese Academy of Sciences, Beijing 100190, China. E-mail: qianqian.xu@vipl.ict.ac.cn, xuqianqian@ict.ac.cn\protect\\
    \IEEEcompsocthanksitem X. Cao is with the State Key Laboratory of Information Security, Institute of Information Engineering, Chinese Academy of Sciences, Beijing, 100093, China, and with the Cyberspace Security Research Center, Peng Cheng Laboratory, Shenzhen 518055, China, and with the School of Cyber Security, University of Chinese Academy of Sciences, Beijing 100049, China. Corresponding Author. E-mail: caoxiaochun@iie.ac.cn\protect\\
    \IEEEcompsocthanksitem W. Liu is with the Tencent AI Lab, Shenzhen, Guangdong, China. E-mail: wl2223@columbia.edu\protect\\
    \IEEEcompsocthanksitem Y. Yao is with the Department of Mathematics, and by courtesy, the Department of Computer Science and Engineering, Hong Kong University of Science and Technology, Clear Water Bay, Kowloon, Hong Kong. Corresponding Author. E-mail: yuany@ust.hk\protect\\
  }
  \thanks{Manuscript received September 05, 2018; revised June 12, 2019.}
}

\markboth{IEEE TRANSACTIONS ON KNOWLEDGE AND DATA ENGINEERING,~Vol.~0, No.~0, August~2019}%
{Shell \MakeLowercase{\textit{et al.}}: Bare Demo of IEEEtran.cls for Computer Society Journals}
%

\IEEEtitleabstractindextext
{
	\begin{abstract}
  \justifying
  Learning representation from relative similarity comparisons, often called ordinal embedding, gains rising attention in recent years. Most of the existing methods are based on semi-definite programming (\textit{SDP}), which is generally time-consuming and degrades the scalability, especially confronting large-scale data. To overcome this challenge, we propose a stochastic algorithm called \textit{SVRG-SBB}, which has the following features: i) achieving good scalability via dropping positive semi-definite (\textit{PSD}) constraints as serving a fast algorithm, i.e., stochastic variance reduced gradient (\textit{SVRG}) method, and ii) adaptive learning via introducing a new, adaptive step size called the stabilized Barzilai-Borwein (\textit{SBB}) step size. Theoretically, under some natural assumptions, we show the $\boldsymbol{O}(\frac{1}{T})$ rate of convergence to a stationary point of the proposed algorithm, where $T$ is the number of total iterations. Under the further Polyak-\L{}ojasiewicz assumption, we can show the global linear convergence (i.e., exponentially fast converging to a global optimum) of the proposed algorithm. Numerous simulations and real-world data experiments are conducted to show the effectiveness of the proposed algorithm by comparing with the state-of-the-art methods, notably, much lower computational cost with good prediction performance.
	\end{abstract}

	\begin{IEEEkeywords}
		Ordinal Embedding, SVRG, Non-Convex Optimization, Barzilai-Borwein (BB) Step Size, .
	\end{IEEEkeywords}
}

\maketitle

\IEEEdisplaynontitleabstractindextext

%
\IEEEpeerreviewmaketitle

\IEEEraisesectionheading{\section{Introduction}
\label{sec:introduction}}

\IEEEPARstart{O}{rdinal} embedding aims to learn the representation of data as points in a low-dimensional embedded space. Here the ``low-dimensional'' means the embedding dimension is much smaller than the number of data points. The distances between these points agree with a set of relative similarity comparisons. Relative comparisons are often collected via the participators who are asked to answer questions like:

\emph{``Is the similarity between object $i$ and $j$ larger than the similarity between $l$ and $k$?"}

The feedback of these questions provide us with a set of quadruplets, \textit{i.e.}, $(i,j,l,k)$ which indicates that the similarity between object $i$ and $j$ is larger than the similarity between $l$ and $k$. These relative similarity comparisons are the supervision information for ordinal embedding. Without prior knowledge, the relative similarity comparisons always involve all objects, and the number of potential quadruplets could be $\boldsymbol{O}(n^4)$. Even under the so-called ``local'' setting where we restrict $l=i$, the triple-wise comparisons, $(i,j,k)$, also have the complexity $\boldsymbol{O}(n^3)$.

The ordinal embedding problem was firstly studied by \cite{Shepard1962a,Shepard1962b,Kruskal1964a,Kruskal1964b} in the psychometric society. In recent years, it has drawn a lot of attention in machine learning \cite{jamieson2011low,Ailon:2012:ALA:2503308.2188390,53e99af7b7602d97023851bf,2015arXiv150102861A,amid2015multiview,NIPS2016_6554}, statistic ranking \cite{McFee:2011:LMS:1953048.1953063,kevin2011active,NIPS2012_0599}, artificial intelligence \cite{Heikinheimo2013TheCA,503}, information retrieval \cite{7410580}, and computer vision \cite{wah2014similarity,wilberKKB2015concept}, etc.

Most of the ordinal embedding methods are based on the semi-definite programming (\textit{SDP}). Some typical methods include the Generalized Non-Metric Multidimensional Scaling (\textit{GNMDS}) \cite{agarwal2007generalized}, Crowd Kernel Learning (\textit{CKL}) \cite{tamuz2011adaptiive}, and Stochastic Triplet Embedding (\textit{STE/TSTE}) \cite{vandermaaten2012stochastic}. The main idea of such methods is to formulate the ordinal embedding problem into a convex, low-rank \textit{SDP} problem with respect to the Gram matrix of the embedding points. In order to solve such a \textit{SDP} problem, the traditional methods generally employ the projection gradient descent to satisfy the positive semi-definite constraint, where the singular value decomposition (\textit{SVD}) is required at each iteration. This inhibits the popularity of this type of methods for large-scale and online ordinal embedding applications.

To handle the large-scale ordinal embedding problem, we reformulate the considered problem using the embedding matrix instead of its Gram matrix. By taking advantage of this new non-convex formulation, the positive semi-definite constraint is eliminated. Furthermore, we exploit the well-known stochastic variance reduced gradient (\textit{SVRG}) method to efficiently solve the developed formulation, which is a fast stochastic algorithm proposed in \cite{rie2013accelerating}. Generally, step size, one essential hyper-parameter, should be tuned in \textit{SVRG}. It is a difficult task in practice as the Lipschitz constant is hard to estimate. To facilitate the use of \textit{SVRG}, Tan et al. \cite{NIPS2016_6286} introduced the well-known, adaptive step size called the Barzilai-Borwein (\textit{BB}) step size \cite{barzilai1988two}, and proved its linear convergence in the strongly convex case. However, as shown in our simulations (see, Figure \ref{fig:step}), the absolute value of the original \textit{BB} step size varies dramatically regarding the epoch number, when applied to our developed ordinal embedding formulation. One major reason is that our developed ordinal embedding model is not strongly convex, and even non-convex. Thus, in such setting, the denominator of \textit{BB} step size might be very close to zero, leading to the instability of the \textit{BB} step size. We add another positive term to the non-negative denominator of \textit{BB} step size which overcomes such instability of the original \textit{BB} step size. Similar to the original version, the new step size is adaptive with almost the same computational cost. More importantly, the new step size is more stable than the original \textit{BB} step size, and can be applied to more general case beyond the strongly convexity assumption. Henceforth, we call the new method as \textit{stabilized Barzilai-Borwein (SBB)} step size. By incorporating the \textit{SBB} step size with \textit{SVRG}, we propose a new stochastic algorithm called \textit{SVRG-SBB} for efficiently solving the considered ordinal embedding model.

In summary, our main contributions can be shown as follows:
\begin{itemize}[leftmargin=*]
\item
{We propose a non-convex framework for the ordinal embedding problem via considering the original embedding variable rather than its Gram matrix. We get rid of the positive semi-definite (\textit{PSD}) constraint on the Gram matrix, and thus, our proposed algorithm is \textit{SVD}-free and has better scalability than the existing convex ordinal embedding methods.}

\item
{The introduced \textit{SBB} step size can overcome the instability of the original \textit{BB} which comes from the absence of strongly convexity. More importantly, the proposed \textit{SVRG-SBB} algorithm outperforms most of the state-of-the-art methods as shown by numerous simulations and real-world data experiments, in the sense that \textit{SVRG-SBB} often significantly reduces the computational cost.}

\item
{We establish $\boldsymbol{O}(\frac{1}{T})$ convergence rate of \textit{SVRG-SBB} in the sense of converging to a stationary point, where $T$ is the total number of iterations. Such result is comparable with the existing convergence results in the literature.}
\end{itemize}

\begin{figure}[thb!]
	\centering
	\includegraphics[width = 0.6\columnwidth]{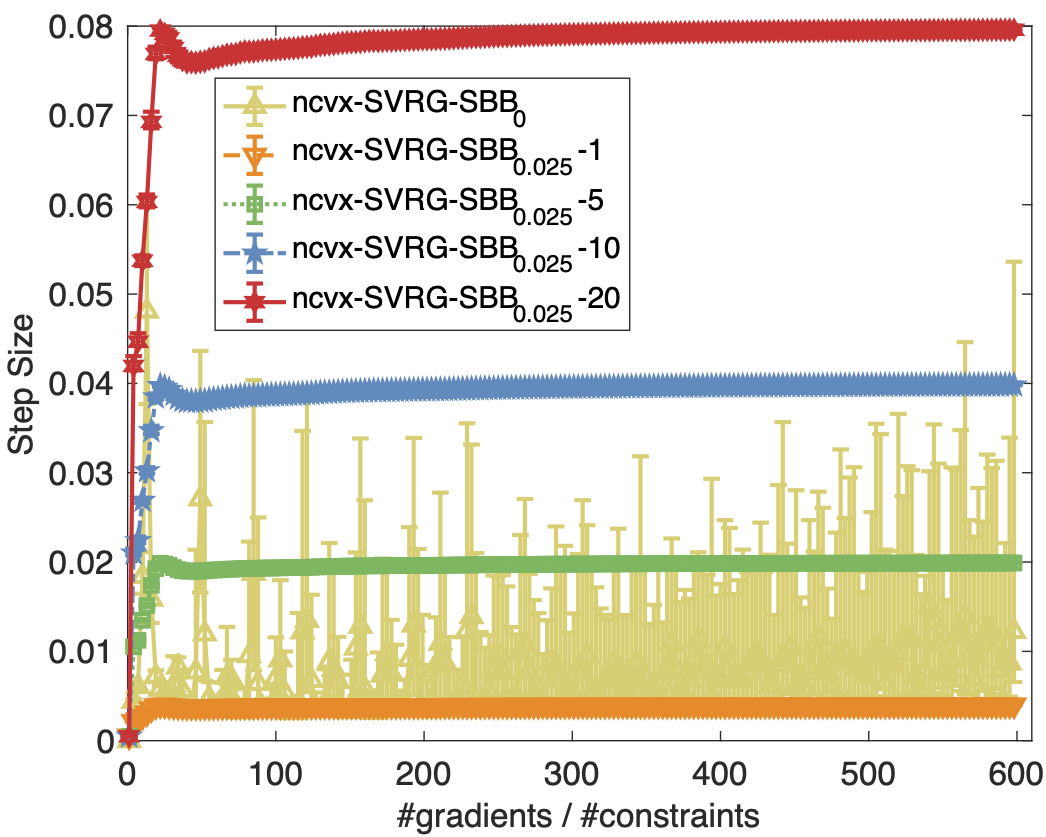}
	\caption{Step sizes along iterations of SVRG-SBB$_\epsilon$ on the synthetic data, where the dark yellow curve of \textbf{ncvx-SVRG-SBB$_0$} is exactly the varying of the \textit{BB} step size in this setting.}
	\label{fig:step}
\end{figure}

This paper is an extension of our conference work \cite{DBLP:conf/aaai/MaZXXCLY18}, where we propose the basic \textit{SVRG-SBB} method which derives the adaptive step size. But there still exist some limitations in our conference method. First, the original \textit{SVRG-SBB} does not incorporate with mini-batch paradigm which provides a computationally efficient process than single point update. Second, we observe that the well-known ``local optimal'' of non-convex problem does not have serious impact on the embedding result. The empirical success in non-convex ordinal embedding poses a new problem that under what conditions the non-convex stochastic algorithms may find the global optima effectively. We provide a possible answer of this question with the help of the Polyak-\L{}ojasiewicz (\textit{PL}) condition. Finally, we summarize the existing ordinal embedding method and propose the generalized ordinal embedding framework which generalizes the existing classification-based methods including \textit{GNMDS, CKL} and \textit{STE/TSTE}. We hope the new framework will guide the future research directions.
\\\\
\textbf{Organization}
\\\\
The remainder of the paper is organized as follows.
In Section 2, we describe the mathematical formulation of the generalized ordinal embedding problem.
Section 3 shows the development of the \textit{SVRG-SBB} algorithm for non-convex ordinal embedding.
Section 4 establishes the convergence analysis of the proposed algorithm.
Comprehensive experimental validation based on simulated and real-world datasets are demonstrated in Section 5. We conclude this paper in Section 6.

\section{Generalized Ordinal Embedding}

Throughout the paper, we denote scalars, vectors, matrices and sets as lower case letters ($x$), bold lower case letters ($\boldsymbol{x}$), bold capital letters ($\boldsymbol{X}$) and calligraphy upper case letters ($\mathcal{X}$). $x_i$ and $x_{ij}$ denote the $i_{\text{th}}$ element of vector $\boldsymbol{x}$ and $(i, j)$ entry of matrix $\boldsymbol{X}$, respectively. For any $\boldsymbol{x}\in\mathbb{R}^{p}$, $\|\boldsymbol{x}\|_2$ denotes its $\ell_2$ norm. $\boldsymbol{I}_n$ is the identity matrix with size $n\times n$ and the subscript $n$ would be omitted if there is no confusion. For any $\boldsymbol{X}\in\mathbb{R}^{p\times n}$, $\|\boldsymbol{X}\|_F$ and $\textit{rank}(\boldsymbol{X})$ denote the Frobenius norm and rank of $\boldsymbol{X}$. $\textit{vec}(\boldsymbol{X})$ is the vectorization operator on $\boldsymbol{X}$ by column. For any square matrix $\boldsymbol{G}\in\mathbb{R}^{n\times n}$, $\textit{tr}(\boldsymbol{G})$ is the trace of $\boldsymbol{G}$. $[n]$ is the set of $\{1,\dots,n\}$. For any $\boldsymbol{X}\in\mathbb{R}^{p\times n}$, $\boldsymbol{G}=\boldsymbol{X}^{\top}\boldsymbol{X}$ is the Gram matrix. For any $(\boldsymbol{x}_i,\boldsymbol{x}_j)\in\mathcal{X}\times\mathcal{X}$ where $\mathcal{X}\subset\mathbb{R}^p$, $d_{ij}=d(\boldsymbol{x}_i,\boldsymbol{x}_j)$ is the distance between $\boldsymbol{x}_i$ and $\boldsymbol{x}_j$, and $\boldsymbol{D}=\{d^2(\boldsymbol{x}_i,\boldsymbol{x}_i)\}$ is the squared distance matrix of $\boldsymbol{X}$. Here the distance $d:\mathbb{R}^{p}\times\mathbb{R}^{p}\rightarrow\mathbb{R}_{+}$ depends on the embedded space. In case of the Euclidean space, we adopt the Euclidean distance if not specified. $\mathbb{E}[\cdot]$ represents the expectation.

Let $\mathcal{O} = \{\boldsymbol{o}_1,\dots,\boldsymbol{o}_n\}$ be a collection of objects, $\mathcal{X} \subset \mathbb{R}^p$ be a low-dimensional embedded space where $p\ll n$, and $\psi^*:\mathcal{O}\times\mathcal{O}\rightarrow\mathbb{R}_+$ be a dissimilarity function of $\mathcal{O}$ where $\psi^*_{ij}$ is the dissimilarity measure between $o_i$ and $o_j$. The traditional multi-dimensional scaling (\textit{MDS}) methods embed $\mathcal{O}$ into $\mathcal{X}$ based on $\Psi^*=\{\psi^*_{ij},\ i,j\in[n], i\neq j\}$. However, there is always a lack of a dissimilarity function $\psi^*$ that can evaluate the objects $\mathcal{O}$ properly for real-world applications, e.g., \cite{503,wilberKKB2015concept,DBLP:conf/icdm/Ukkonen17,DBLP:conf/nips/MasonJN17}. As an alternative, ordinal embedding methods incorporate human knowledge into the loop and relax the requirement of $\Psi^*$.

By collecting a partially ordered set which assesses dissimilarity on a relative scale by human, ordinal embedding methods establish relative dissimilarity of $\mathcal{X}$ and obtain embedding $\boldsymbol{X} =\{\boldsymbol{x}_i:\ \boldsymbol{x}_i\in\mathcal{X},\ \boldsymbol{o}_i\in\mathcal{O},\ i\in[n]\}$

based on the dissimilarity comparisons. Specifically, given a dissimilarity function $\phi:\mathcal{X}\times\mathcal{X}\rightarrow\mathbb{R}_+$ and $\phi_{ij}=\phi(\boldsymbol{x}_i,\boldsymbol{x}_j)$ is the dissimilarity between $\boldsymbol{x}_i$ and $\boldsymbol{x}_j$, we collect a set of quadruplets, that is,
\begin{equation}
	\begin{aligned}
		& \mathcal{Q} &=&\ \ \left\{ q\ |\ q=(i,j,l,k),\ (\phi_{ij},\ \phi_{lk})\in\Phi^{2}\right\},
	\end{aligned}
\end{equation}
and define $\Phi^2$ as
\begin{equation}
	\begin{aligned}
		& \Phi^2 &=&\ \ \left\{(\phi_{ij},  \phi_{lk}) \ |\ \phi_{ij}<\phi_{lk},\ i,\ j,\ l,\ k\in[n], \right.\\
		& & &\ \ \ \ \ \ \ \ \ \ \ \ \ \ \ \ \ \ \ \ \left.i\neq j,\ l\neq k,\ (i,j)\neq(l,k)\right\}.
	\end{aligned}
\end{equation}
Although the embedding $\boldsymbol{X}$ and $\Phi=\{\phi_{ij}\ |\ i,j\in[n], i\neq j\}$ is unknown, human knowledge can help to determine $\phi_{ij}<\phi_{lk}$ or not and generate $\mathcal{Q}$. The goal of ordinal embedding is to estimate $\boldsymbol{X}$ or $\Phi$ based on $\mathcal{Q}$.

One common class of ordinal embedding methods tries to formulate it as a classification problem (generally, a binary classification problem, say, \cite{agarwal2007generalized,tamuz2011adaptiive,vandermaaten2012stochastic,Terada2014LocalOE,amid2015multiview}). Given an ordered quadruplet $q=(i,j,l,k)$ and the associated ordered pair $(\phi_{ij}, \phi_{lk})$, the corresponding label $y_q$ can be defined as follows
\begin{equation}
  \label{eq:ordinal_label}
  y_{q}
  \left\{
  \begin{matrix}
  >0,&\phi_{ij}<\phi_{lk},\\
  <0,&\phi_{ij}>\phi_{lk}.
  \end{matrix}
  \right.
\end{equation}
Here we ignore the multi-class case, e.g. $y_q$ could be $\{-1, 0, +1\}$ and $y_q = 0$ indicates that $\phi_{ij}$ and $\phi_{lk}$ have the same value. As it is exceptionally rare in the practical applications and has no obvious improvement of the results whether we include multi-class label or not, we only consider the binary case in our generalized ordinal embedding (\textit{GOE}) problem.

Let $\mathcal{Y}_\mathcal{Q}:=\{y_q, q\in \mathcal{Q}\}$ be the corresponding label set.
Given an embedding candidate $\boldsymbol{X}$ and a classifier $h:\mathbb{R}_+\times\mathbb{R}_+\rightarrow\mathcal{Y}_\mathcal{Q}$, the empirical misclassification error can be defined as follows
\begin{equation}
  \label{eq:empirical_error}
  \mathcal{L}_{\mathcal{Q},\phi,h}(\boldsymbol{X}, \mathcal{Y}_\mathcal{Q}) = \frac{1}{|\mathcal{Q}|} \sum_{q\in\mathcal{Q}} \ell(h(\phi(\boldsymbol{x}_i,\boldsymbol{x}_j),\phi(\boldsymbol{x}_l,\boldsymbol{x}_k), y_q)),
\end{equation}
where $|\mathcal{Q}|$ represents the cardinality of the set $\mathcal{Q}$, and $\ell: \mathbb{R} \times \mathbb{R} \rightarrow \mathbb{R}_+\cup \{0\}$ is a specific loss function such as hinge loss or logistic loss.
Therefore, the \textit{GOE} problem can be formulated as the following minimization problem,
\begin{equation}
	\label{opt:nonconvex_goe}
	\underset{\boldsymbol{X}\in\mathbb{R}^{p\times n}}{\min}\ \ \mathcal{L}_{\mathcal{Q},\phi, h}(\boldsymbol{X}, \mathcal{Y}_\mathcal{Q}).
\end{equation}
In practice, the dissimilarity function $\phi$ is generally taken as the squared Euclidean distance $\phi(\boldsymbol{x}_i,\boldsymbol{x}_j)=d^2_{ij} = \|\boldsymbol{x}_i - \boldsymbol{x}_j\|_2^2$, and the empirical loss \eqref{eq:empirical_error} can be written as
\begin{equation}
	\label{eq:empirical_error_distance}
	\begin{aligned}
		& & &\ \ \mathcal{L}_{\mathcal{Q},\phi,h}(\boldsymbol{X}, \mathcal{Y}_\mathcal{Q})\ =\ \mathcal{L}_{\mathcal{Q},h}(\boldsymbol{D},\mathcal{Y}_\mathcal{Q})\\
		& &=&\ \ \frac{1}{|\mathcal{Q}|} \sum_{q\in\mathcal{Q}} \ell(h(d^2_{ij},d^2_{lk}, y_q)),
	\end{aligned}
\end{equation}
where $\boldsymbol{D}$ is the squared Euclidean distance matrix of $\boldsymbol{X}$.

Besides \eqref{opt:nonconvex_goe}, the following \textit{SDP} based formulation of the ordinal embedding is commonly used in the literature (\cite{agarwal2007generalized,tamuz2011adaptiive,vandermaaten2012stochastic}). Let $\boldsymbol{G} = \boldsymbol{X^\top X}$ be the Gram matrix of $\boldsymbol{X}$. There exists a bijection between the Gram matrix $\boldsymbol{G}\in\mathbb{S}^{n}_+$, $\mathbb{S}^{n}_+$ is the $n$-dimensional positive semi-definite cone, the set of all symmetric positive semidefinite matrices in $\mathbb{R}^{n\times n}$ and the squared Euclidean distance matrix $\boldsymbol{D}$ as $d^2_{ij} \ =\ \|\boldsymbol{x}_i - \boldsymbol{x}_j\|_2^2\ =\ g_{ii}-2g_{ij}+g_{jj}$, where $g_{ij}$ is the $(i,j)$ element of $\boldsymbol{G}$. 

We change the variable $\boldsymbol{D}$ in empirical loss \eqref{eq:empirical_error_distance} as $\boldsymbol{G}$
\begin{equation}
	\label{eq:empirical_error_gram}
	\begin{aligned}
		& & &\ \ \mathcal{L}_{\mathcal{Q},h}(\boldsymbol{D},\mathcal{Y}_\mathcal{Q})\ =\ \mathcal{L}_{\mathcal{Q},h}(\boldsymbol{G},\mathcal{Y}_\mathcal{Q})\\
		& &=&\ \ \frac{1}{|\mathcal{Q}|} \sum_{q\in\mathcal{Q}} \ell(h(g_{ii}-2g_{ij}+g_{jj},g_{ll}-2g_{lk}+g_{kk}, y_q)).	
	\end{aligned}
\end{equation}
According to \eqref{opt:nonconvex_goe}, \eqref{eq:empirical_error_distance} and \eqref{eq:empirical_error_gram}, the ordinal embedding problem can be formulated as the following \textit{SDP} problem with respect to $\boldsymbol{G}$, i.e.,
\begin{equation}{}
	\label{opt:convex_goe}
	\underset{\boldsymbol{G}\in\mathbb{S}^{n}_+,\ \textit{rank}(\boldsymbol{G})\leq p}{\min}\ \ \mathcal{L}_{\mathcal{Q},h}(\boldsymbol{G},\mathcal{Y}_\mathcal{Q}),
\end{equation}
the positive semi-definite constraint $\boldsymbol{G}\in\mathbb{S}^{n}_+$ or $\boldsymbol{G}\succeq 0$ comes from the fact that the Gram matrix $\boldsymbol{G}$ is positive semi-definite matrix; the rank constraint comes from the fact that $\textit{rank}(\boldsymbol{G})\leq \textit{rank}(\boldsymbol{X})\leq \min(n,p)=p$. Note that the formulation \eqref{opt:convex_goe} is generally convex. However, the computational complexity of such \textit{SDP} problem is very high, which degrades the scalability of this kind of methods. This motivates us to directly obtain embedding $\boldsymbol{X}$ from \eqref{opt:nonconvex_goe}.

\section{Development of SVRG-SBB}

Since \eqref{opt:nonconvex_goe} is an unconstrained optimization problem, \textit{SVD} and regularization parameter tuning are both avoided. However, without any prior knowledge on $\mathcal{O}$, the sample complexity of $\mathcal{Q}$ is $\boldsymbol{O}(n^4)$. Because of the expense of full gradients and inverse of Hessian matrix computation in each iteration, the traditional full batch optimization methods, i.e. gradient descent and (quasi-)Newton method, are not suitable for solving such large-scale problem where $n$ would be larger than thousands.
Instead of the full-batch methods, we introduce the stochastic algorithm to solve the non-convex problem \eqref{opt:nonconvex_goe}. One open issue in stochastic optimization is how to choose an appropriate step size in practice. Traditional methods include that using a constant step size to track the iterations, adopting a diminishing step size to enforce convergence, or tuning a step size empirically which can be time-consuming. Recently, \cite{NIPS2016_6286} proposed to use the Barzilai-Borwein (\textit{BB}) method to automatically compute step sizes in \textit{SVRG} for strongly convex objective function. Their method is called ``\textit{SVRG-BB}''. In the following part, we will analyze the existing problem of \textit{BB} method when it is adopted in the non-convex problem. Furthermore, we propose the stabilized Barzilai-Borwein (\textit{SBB}) step size which alleviates these issues and establish the non-asymptotic convergence analysis of the proposed ``\textit{SVRG-SBB}'' algorithm.

\subsection{The Existing Problems of Barzilai-Borwein Method}

In machine learning and data mining, we often encounter the unconstrained minimization problem (\ref{opt:nonconvex_goe}) as a finite-sum problem. Let $f_1,\dots,f_n$ be a sequence of vector function as $f_i:\mathbb{R}^{p}\rightarrow\mathbb{R}$, and our goal is to obtain an approximation solution of the following finite-sum problem
\begin{equation}
  \label{eq:finit-sum}
  \underset{\boldsymbol{x}}{\min}\ f(\boldsymbol{x})=\frac{1}{n}\sum^n_{i=1}f_i(\boldsymbol{x}),
\end{equation}
where $n$ is the training sample size, and each $f_i$ is the cost function or loss function corresponding to the $i^{\text{th}}$ training sample. Regardless the convexity of $f$, the choice of step size in stochastic optimization always depends on the Lipschitz constant of $f$, which is usually difficult to estimate in practice. \textit{BB} step size has been incorporated with \textit{SVRG} in \cite{NIPS2016_6286} but it is restricted to the case where their assumptions are each $f_i$ is convex, differentiable and $f$ is strongly convex. This assumptions are adopted due to the use of a strongly-convex regularization such as the squared $\ell_2$-norm. However, there are many important large-scale non-convex optimization problems, such as neural network.

The original \textit{BB} method, proposed by Barzilai and Borwein in \cite{barzilai1988two}, has been proven to be very effective in solving nonlinear optimization problems via gradient descent. One possible choice of \textit{BB} step size $\eta_t$ is
\begin{equation}
    \label{eq:original_bb_stepsize}
    \eta_t = \frac{\|\triangle{\boldsymbol{x}}_t\|^2}{\triangle{\boldsymbol{x}}_t^\top\triangle{\boldsymbol{y}}_t}
\end{equation}
where $\triangle{\boldsymbol{x}}_t = {\boldsymbol{x}}_t-{\boldsymbol{x}}_{t-1}$ and $\triangle{\boldsymbol{y}}_t = \nabla f({\boldsymbol{x}}_t)-\nabla f({\boldsymbol{x}}_{t-1})$. Actually \textit{BB} step size is a possible solution of the so-called ``secant equation''.

As the original \textit{BB} method approximates the inverse of Hessian matrix of $f$ at $\boldsymbol{x}_t$ by $\frac{1}{\eta_t}\boldsymbol{I}$, there exist some inherent drawbacks toward extending the step size \eqref{eq:original_bb_stepsize} to non-convex optimization problems. If $f$ is differentiable and $\mu$-strongly convex, it holds that
\begin{equation}
    \triangle{\boldsymbol{x}}_t^\top\triangle{\boldsymbol{y}}_t\geq\mu\|\triangle {\boldsymbol{x}}_t\|^2>0
\end{equation}
which implies $\eta_t$ is always positive. However, if $f$ is differentiable and convex, we have
\begin{equation}
	\label{eq:curvature}
    \triangle{\boldsymbol{x}}_t^\top\triangle{\boldsymbol{y}}_t\geq0
\end{equation}
and \eqref{eq:original_bb_stepsize} might approach $\infty$ when the denominator of \eqref{eq:original_bb_stepsize} is extremely small.
Furthermore, if the differentiable function $f$ is non-convex,
the denominator of \eqref{eq:original_bb_stepsize} might even be negative that makes \textit{BB} method fail.

\begin{example}
	given a quadratic optimization problem
\begin{equation}
    \label{eq:quadric_opt}
    \underset{{\boldsymbol{x}}\in\mathbb{R}^{p}}{\min} f({\boldsymbol{x}}) := \frac{1}{2}\boldsymbol{x}^\top\boldsymbol{A} {\boldsymbol{x}}
\end{equation}
where $\boldsymbol{A}$ is a diagonal matrix with $\{\lambda_i\}^p_{i=1}$ as the diagonal entries, we set $p=3, \lambda_1=1, \lambda_2=0, \lambda_3=-1 $. The initial ${\boldsymbol{x}}_0=[0, 0, 1]^\top$ and by gradient descent with \textit{BB} step size, ${\boldsymbol{x}}_1 = {\boldsymbol{x}}_0-\eta_0\boldsymbol{A} {\boldsymbol{x}}_0 = [0, 0, 1+\eta_0]$. The corresponding value of \eqref{eq:original_bb_stepsize} is $-1$. If the initial ${\boldsymbol{x}}_0=[0, 1, 0]^\top$, the denominator of \eqref{eq:original_bb_stepsize} is $0$.
\end{example}

\subsection{SVRG-SBB for Ordinal Embedding}
An intuitive way to overcome the flaw of \textit{BB} step size is to keep the denominator of \eqref{eq:original_bb_stepsize} positive and control the lower bound of $\triangle\boldsymbol{x}_t^\top\triangle\boldsymbol{y}_t$ in each iteration, which leads to our proposed stabilized Barzilai-Borwein (\textit{SBB}) step size shown as follows
\begin{equation}
    \label{eq:sbb_step}
    \eta_{\epsilon,t} =
    \frac{\|\triangle{\boldsymbol{x}}_t\|^2}{|\triangle{\boldsymbol{x}}_t^\top\triangle{\boldsymbol{y}}_t|+\epsilon\|\triangle{\boldsymbol{x}}_t\|^2}.
\end{equation}
Actually, as shown by our latter convergence result, if the Hessian of the objective function $\nabla^2 f(\boldsymbol{x})$ is nonsingular and the magnitudes of its eigenvalues are lower bounded by some positive constant $\mu$, then we can take $\epsilon=0$. In this case, we call the referred step size \textit{SBB}$_0$ henceforth. Even if we have no information about the Hessian of the objective function in practice, the \textit{SBB}$_\epsilon$ step size with an $\epsilon>0$ is just a more conservative version of \textit{SBB}$_0$ step size.

From \eqref{eq:sbb_step}, if the gradient $\nabla f_i$ is Lipschitz continuous with constant $L>0$ (i.e., $\|\nabla f_i ({\boldsymbol{x}}) - \nabla f_i({\boldsymbol{y}})\| \leq L \|{\boldsymbol{x}} - {\boldsymbol{y}}\|, \forall\ {\boldsymbol{x}}, {\boldsymbol{y}} \in \mathbb{R}^p$), then the \textit{SBB}$_\epsilon$ step size can be bounded as follows
\begin{align}
    \label{eq:bound-sbb}
    \frac{1}{L+\epsilon} \leq \eta_{\epsilon,t} \leq \frac{1}{\epsilon},
\end{align}
where the lower bound is obtained by the $L$-Lipschitz continuity of $\nabla f$, and the upper bound is directly derived by its specific form. Furthermore, if $\nabla^2 f(\boldsymbol{x})$ is nonsingular and its eigenvalues have a lower bound $\mu>0$, the bound of \textit{SBB}$_0$ becomes
\begin{align}
    \label{eq:bound-sbb0}
    \frac{1}{L} \leq \eta_{0,t} \leq \frac{1}{\mu}.
\end{align}

The proposed \textit{SVRG-SBB} algorithm is described in \textbf{Algorithm} \ref{alg:svrg-sbb}. As shown in Figure \ref{fig:step}, \textit{SBB}$_\epsilon$ step size with a positive $\epsilon$ can make it more stable when \textit{SBB}$_0$ step size is unstable and varies dramatically. Moreover, \textit{SBB}$_\epsilon$ step size usually changes significantly only at the initial several epochs, and then quickly gets very stable. This is mainly because there are many iterations in an epoch of \textit{SVRG-SBB}, and the algorithm might close to a stationary point after only a few epochs. After the initial epochs, the \textit{SBB}$_\epsilon$ step sizes might be very close to the inverse of the objective function curvature as $\epsilon$ is small enough.

The main difference between \textbf{Algorithm} \ref{alg:svrg-sbb} and the former version is that we adopt mini-batch in the inner loop. Mini-batching is a useful strategy for large-scale optimization problem, especially in multi-core and distributed settings as it greatly helps one exploiting parallelism and reducing the communication burden. When the mini-batch size is $1$, \textbf{Algorithm} \ref{alg:svrg-sbb} reduces to our former algorithm. To incorporate mini-batches, we replace single sample gradient update with sampling (with replacement) a subset $\mathcal{I}_t\subset[n]$ with its cardinality $|{\cal I}_t|=b$, where $b \in \mathbb{N}$ is the mini-batch size. We update the $\boldsymbol{x}^{s}_{t}$ by the modified \textit{SBB} step size as ${\boldsymbol{x}}^{s}_{t+1}={\boldsymbol{x}}^{s}_{t} - b\eta_{\epsilon,s-1} {\boldsymbol{u}}^{s}_{t}$, where $\eta_{t,s-1}$ is specified in \eqref{Eq:eta-epsilon-s} and $\boldsymbol{u}^{s}_{t}$ is defines as
\begin{equation*}
    {\boldsymbol{u}}^{s}_{t} = \frac{1}{b}\underset{i_t\in\mathcal{I}_t}{\sum}\left(\nabla f_{i_t}({\boldsymbol{x}}^{s}_t) - \nabla f_{i_t}(\tilde{{\boldsymbol{x}}}^{s})\right)+\nabla f(\tilde{\boldsymbol{x}}^s).
\end{equation*}

\begin{algorithm}
    \small
    \caption{SVRG-SBB for \eqref{eq:finit-sum}}\label{alg:svrg-sbb}
    \begin{algorithmic}
        \REQUIRE
        $\epsilon\geq0$, update frequency $m$, mini-batch size $b$, maximal number of iterations $S$, initial step size $\eta_0$ (only used in the first epoch), initial point $\tilde{{\boldsymbol{x}}}^0 \in \mathbb{R}^{p}$, and the number of training samples $n$
        \FOR{$ s=0$ to $ S-1$}
        \STATE
          \begin{equation*}
            \boldsymbol{g}^{s+1} = \nabla f(\tilde{{\boldsymbol{x}}}^s) =\frac{1}{n}\sum^n_{i=1}\nabla f_i(\tilde{{\boldsymbol{x}}}^s)
          \end{equation*}
        \IF {$ s>0$}
        \STATE
              \begin{align}
              & \triangle {\boldsymbol{x}}_s = \tilde{{\boldsymbol{x}}}^{s}-\tilde{{\boldsymbol{x}}}^{s-1}, \
               \triangle{\boldsymbol{x}}_s =\ \ \ {\boldsymbol{g}}^{s+1}-{\boldsymbol{g}}^{s},\nonumber\\
              & \eta_{\epsilon,s} = \frac{1}{m}\cdot\frac{\|\triangle {\boldsymbol{x}}_s\|^2}{|\langle \triangle {\boldsymbol{x}}_s, \triangle\boldsymbol{y}_s \rangle|+\epsilon\|\triangle {\boldsymbol{x}}_s\|^2} \label{Eq:eta-epsilon-s}
              \end{align}
        \ENDIF
        \STATE ${\boldsymbol{x}}^{s+1}_0=\tilde{{\boldsymbol{x}}}^s$
        \FOR{$  t=0$ to $ m-1$}
        \STATE uniformly pick $ \mathcal{I}_t\subset[n]$ with $|\mathcal{I}_t|=b$
        \STATE ${\boldsymbol{u}}^{s+1}_t = \frac{1}{b}\underset{i_t\in\mathcal{I}_t}{\sum}\left(\nabla f_{i_t}({\boldsymbol{x}}^{s+1}_{t})-\nabla f_{i_t}(\tilde{\boldsymbol{x}}^s)\right)+\boldsymbol{g}^{s+1}$
        \STATE ${\boldsymbol{x}}^{s+1}_{t+1}= {\boldsymbol{x}}^{s+1}_{t}-b \eta_{\epsilon,s}{\boldsymbol{x}}^{s+1}_t$
        \ENDFOR
        \STATE $\tilde{\boldsymbol{x}}^{s+1}={\boldsymbol{x}}^{s+1}_{m}$
        \ENDFOR
        \ENSURE ${ \boldsymbol{x}_{\mathrm{out}}}$ is chosen uniformly from $\{\{\boldsymbol{x}^{s}_{t}\}_{t=1}^{m}\}_{s=1}^{S}$.
    \end{algorithmic}
\end{algorithm}

\section{Convergence Analysis of \textit{SVRG-SBB}}
\label{sc:convergence}

In this section, we first establish a sublinear rate of convergence (to a stationary point) of the proposed \textit{SVRG-SBB} under the mild smoothness condition, and then show the linear rate of convergence (i.e., converging exponentially fast to a global optimum) of its modular version under the furthered Polyak-{\L}ojasiewicz (PL) property \cite{Polyak1963}.

\subsection{Sublinear convergence under smoothness}
\label{sc:sublinear-smooth}

Throughout this section, we assume that each $f_i$ in \eqref{eq:finit-sum} is $L$-smoothness with a constant $L>0$, i.e., $f_i$ is continuously differentiable and its gradient $\nabla f_i$ is $L$-Lipschitz, shown as follows:
\begin{equation}
	\label{definition:lsmooth}
	\|\nabla f_i(\boldsymbol{x})-\nabla f_i(\boldsymbol{y})\|\leq L\|\boldsymbol{x}-\boldsymbol{y}\|,\ \forall\ \boldsymbol{x},\boldsymbol{y}\in\mathbb{R}^p.
\end{equation}
The $L$-smoothness assumption is very general to derive the convergence of an algorithm in literature (say, \cite{nesterov2013introductory}, \cite{pmlr-v48-reddi16}, \cite{Zeng2018-aistats}, \cite{Zeng2019}).
As shown in the supplementary material, all the loss functions adopted in the experiments for \textit{GOE} problem are verified to satisfy this assumption.

In the following, we provide a key lemma that illustrates the convergence behavior of the inner loop of \textbf{Algorithm \ref{alg:svrg-sbb}}. To state this lemma, we first define several positive constants and sequences which are used in our analysis. Given a positive sequence $\{\beta_s\}_{s=0}^{S-1}$, for any $0\leq s \leq S-1$, we define
\begin{align}
&\rho_s := 1+b \eta_{\epsilon,s}\beta_s + 2b\eta_{\epsilon,s}^2L^2, \label{eq:rhos}
\end{align}
and for any $t=0,\ldots,m-1$,
\begin{align}
\label{eq:ct}
c_{t,s} := \rho_s c_{t+1,s} + b \eta_{\epsilon,s}^2L^3
\end{align}
with ${c}_{m,s}=0$,
and
\begin{align}
\label{eq:Gammas}
\Gamma_{t,s} :=
b \eta_{\epsilon,s} \left[1-c_{t+1,s}\beta_s^{-1} -b\eta_{\epsilon,s}(L+2c_{t+1,s})\right].
\end{align}
Based on these sequences, we present the lemma as follows.

\begin{lemma}
\label{keylemma-minibatch}
Suppose that each $f_i$ is $L$-smoothness (i.e., satisfying \eqref{definition:lsmooth}).
Let $\{{\boldsymbol{x}}_t^{s+1}\}_{t=1}^m$ be a sequence generated by \textbf{Algorithm \ref{alg:svrg-sbb}} at the $s^{\text{th}}$ inner loop, $s=0,\ldots, S-1$. Let $b, m, \epsilon$ and $\beta_s$  be chosen such that
\begin{align}
\label{Eq:cond-gamma}
b\eta_{\epsilon,s}(L+2c_{t+1,s}) + c_{t+1,s}\beta_s^{-1} <1, \  t=0,\ldots,m-1,
\end{align}
then
\[
  \mathbb{E}[\|\nabla f({\boldsymbol{x}}_t^{s+1})\|^2] \leq \frac{{R}_{t,s+1} - {R}_{t+1,s+1}}{{\Gamma}_{t,s}},
\]
where
${R}_{t,s+1}:= \mathbb{E}[f({\boldsymbol{x}}_t^{s+1}) + {c}_{t,s} \|{\boldsymbol{x}}_t^{s+1} - \tilde{\boldsymbol{x}}^s\|^2].$
\end{lemma}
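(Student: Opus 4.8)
The plan is a one-step potential-function (Lyapunov) analysis of the inner loop. With the potential $R_{t,s+1}=\mathbb{E}[f(\boldsymbol{x}_t^{s+1})+c_{t,s}\|\boldsymbol{x}_t^{s+1}-\tilde{\boldsymbol{x}}^s\|^2]$ as given, the target is the per-iteration decrease $R_{t,s+1}-R_{t+1,s+1}\geq\Gamma_{t,s}\,\mathbb{E}[\|\nabla f(\boldsymbol{x}_t^{s+1})\|^2]$; the stated inequality then follows simply by dividing through, and this division is legitimate precisely because hypothesis \eqref{Eq:cond-gamma} is exactly the statement that $\Gamma_{t,s}>0$.

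First I would fix the inner-loop index $s+1$ and condition on the $\sigma$-algebra $\mathcal{F}_t$ generated by all randomness up to the point where $\boldsymbol{x}_t^{s+1}$ is produced. Since $\tilde{\boldsymbol{x}}^s$, $\tilde{\boldsymbol{x}}^{s-1}$, and hence the step size $\eta_{\epsilon,s}$, are $\mathcal{F}_t$-measurable, $\eta_{\epsilon,s}$ may be treated as a deterministic scalar in the update, and the only fresh randomness is the mini-batch $\mathcal{I}_t$. Two properties of the variance-reduced direction $\boldsymbol{u}_t^{s+1}$ are then used: conditional unbiasedness, $\mathbb{E}[\boldsymbol{u}_t^{s+1}\mid\mathcal{F}_t]=\nabla f(\boldsymbol{x}_t^{s+1})$, immediate from uniform with-replacement sampling; and a conditional second-moment bound of the form $\mathbb{E}[\|\boldsymbol{u}_t^{s+1}\|^2\mid\mathcal{F}_t]\leq 2\|\nabla f(\boldsymbol{x}_t^{s+1})\|^2+\frac{2L^2}{b}\|\boldsymbol{x}_t^{s+1}-\tilde{\boldsymbol{x}}^s\|^2$, coming from the $L$-Lipschitzness of each $\nabla f_i$ together with the variance-reduction structure of the estimator (the $1/b$ factor being the mini-batch gain under with-replacement sampling).

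Next I would substitute the inner update $\boldsymbol{x}_{t+1}^{s+1}=\boldsymbol{x}_t^{s+1}-b\eta_{\epsilon,s}\boldsymbol{u}_t^{s+1}$ into the $L$-smoothness descent inequality for $f$, take conditional expectation, and apply the two properties above to obtain a bound of the form $\mathbb{E}[f(\boldsymbol{x}_{t+1}^{s+1})\mid\mathcal{F}_t]\leq f(\boldsymbol{x}_t^{s+1})-(b\eta_{\epsilon,s}-Lb^2\eta_{\epsilon,s}^2)\|\nabla f(\boldsymbol{x}_t^{s+1})\|^2+b\eta_{\epsilon,s}^2L^3\|\boldsymbol{x}_t^{s+1}-\tilde{\boldsymbol{x}}^s\|^2$. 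In parallel I would expand $\|\boldsymbol{x}_{t+1}^{s+1}-\tilde{\boldsymbol{x}}^s\|^2$ around $\boldsymbol{x}_t^{s+1}$, take conditional expectation, bound the cross term with Young's inequality using the parameter $\beta_s$ (this is precisely where $\beta_s$ enters), and again invoke the second-moment bound, arriving at $\mathbb{E}[\|\boldsymbol{x}_{t+1}^{s+1}-\tilde{\boldsymbol{x}}^s\|^2\mid\mathcal{F}_t]\leq\rho_s\|\boldsymbol{x}_t^{s+1}-\tilde{\boldsymbol{x}}^s\|^2+(2b^2\eta_{\epsilon,s}^2+b\eta_{\epsilon,s}\beta_s^{-1})\|\nabla f(\boldsymbol{x}_t^{s+1})\|^2$, with $\rho_s$ exactly as defined in \eqref{eq:rhos}. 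Adding the first inequality to $c_{t+1,s}$ times the second and taking total expectations, the coefficient of $\|\boldsymbol{x}_t^{s+1}-\tilde{\boldsymbol{x}}^s\|^2$ collapses to $\rho_s c_{t+1,s}+b\eta_{\epsilon,s}^2L^3=c_{t,s}$ by the recursion \eqref{eq:ct}, so those terms recombine into $R_{t,s+1}$, while the coefficient of $\|\nabla f(\boldsymbol{x}_t^{s+1})\|^2$ becomes exactly $-\Gamma_{t,s}$ as in \eqref{eq:Gammas}. Rearranging and dividing by $\Gamma_{t,s}>0$ completes the proof.

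The main obstacle, and the step where constants must be tracked carefully, will be the conditional second-moment bound for the mini-batch variance-reduced estimator: one must account for the with-replacement sampling so that the variance of $\boldsymbol{u}_t^{s+1}$ indeed carries the $1/b$ factor, while keeping all constants consistent with the particular definitions of $\rho_s$, $c_{t,s}$ and $\Gamma_{t,s}$; once that bound is in hand, the rest is routine manipulation of the descent lemma and Young's inequality, together with the telescoping bookkeeping that merges the two recursions into the potential. A secondary point worth making explicit is the $\mathcal{F}_t$-measurability of $\eta_{\epsilon,s}$, which is what justifies treating the step size as a constant when taking inner-loop conditional expectations.
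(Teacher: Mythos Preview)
Your proposal is correct and follows essentially the same route as the paper: the paper first proves the identical mini-batch variance bound $\mathbb{E}[\|\boldsymbol{u}_t^{s+1}\|^2]\le 2\mathbb{E}[\|\nabla f(\boldsymbol{x}_t^{s+1})\|^2]+\tfrac{2L^2}{b}\mathbb{E}[\|\boldsymbol{x}_t^{s+1}-\tilde{\boldsymbol{x}}^s\|^2]$ as a separate lemma, then combines the $L$-smoothness descent inequality with the Young-inequality expansion of $\|\boldsymbol{x}_{t+1}^{s+1}-\tilde{\boldsymbol{x}}^s\|^2$ (parameter $\beta_s$) exactly as you describe, obtaining $R_{t+1,s+1}\le R_{t,s+1}-\Gamma_{t,s}\mathbb{E}[\|\nabla f(\boldsymbol{x}_t^{s+1})\|^2]$ and concluding. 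The only cosmetic difference is that the paper plugs the variance bound in after forming the weighted sum rather than before, and it does not spell out the $\mathcal{F}_t$-measurability of $\eta_{\epsilon,s}$ that you (rightly) make explicit.
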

Lemma \ref{keylemma-minibatch} shows that the inner loop of SVRG-SBB would decrease along the defined Lyapunov function $R_{t,s}$, which contains the function value sequence itself as well as a proximal term $\|{\boldsymbol{x}}_t^{s+1} - \tilde{\boldsymbol{x}}^s\|^2$. Particularly, $R_{0,s+1} = \mathbb{E}[f(\tilde{\boldsymbol{x}}^{s})]$ and $R_{m,s+1} = \mathbb{E}[f(\tilde{\boldsymbol{x}}^{s+1})]$. This lemma indicates that
\begin{align*}
\mathbb{E}[f(\tilde{\boldsymbol{x}}^{s})] - \mathbb{E}[f(\tilde{\boldsymbol{x}}^{s+1})] \geq \sum_{t=0}^{m-1} \Gamma_{t,s} \mathbb{E}[\|{\boldsymbol{x}}_t^{s+1}\|^2] \geq 0.
\end{align*}
If $f$ is lower bounded (say, lower bounded by $0$), the above inequality demonstrates that the function value sequence $\{f(\tilde{\boldsymbol{x}}^s)\}$ converges in expectation. We present its proof latter in Appendix A for the completeness.

Lemma \ref{keylemma-minibatch} implies the decreasing property and thus functional value convergence of the outer loop sequence generated by Algorithm \ref{alg:svrg-sbb}.
In the next, we provide an abstract result on the convergence rate of Algorithm \ref{alg:svrg-sbb}, whose proof is presented latter in Appendix B.

\begin{theorem}
\label{Thm:intermediate-result}
Suppose that each $f_i$ is $L$-smoothness (i.e., satisfying \eqref{definition:lsmooth}).
Let $\{\boldsymbol{x}_t^s\} (s=0,\ldots S-1, t=0,\ldots,m-1)$ be a sequence generated by Algorithm \ref{alg:svrg-sbb}.
Let $b, m, \epsilon$ and $\{\beta_s\}_{s=0}^S$  be chosen such that \eqref{Eq:cond-gamma} holds for any $s=0,\ldots,S-1$ and $t=0,1,\ldots,m-1$.
There holds
\begin{align*}
\mathbb{E}[\|\nabla f(\boldsymbol{x}_{\mathrm{out}})\|^2] \leq \frac{f(\tilde{\boldsymbol{x}}^0)-f({\boldsymbol{x}}^*)}{T \gamma_S},
\end{align*}
where $T= m  S$ is the total number iterations, $\boldsymbol{x}^*$ is an optimal solution to \eqref{eq:finit-sum}, and $\gamma_S = \min_{0\leq s \leq S-1} \min_{0\leq t\leq m-1} \Gamma_{t,s}$.
\end{theorem}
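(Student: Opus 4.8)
The plan is to read off Theorem \ref{Thm:intermediate-result} as a telescoping corollary of Lemma \ref{keylemma-minibatch}, using that $\boldsymbol{x}_{\mathrm{out}}$ is a uniformly random iterate among the $T=mS$ inner iterations. First I would invoke Lemma \ref{keylemma-minibatch}: under the standing assumption that \eqref{Eq:cond-gamma} holds for all $s$ and $t$, the quantities $\Gamma_{t,s}$ are positive, so $\gamma_S = \min_{s,t}\Gamma_{t,s}>0$, and for every $s=0,\ldots,S-1$ and $t=0,\ldots,m-1$,
\[
\mathbb{E}[\|\nabla f(\boldsymbol{x}_t^{s+1})\|^2] \leq \frac{R_{t,s+1}-R_{t+1,s+1}}{\Gamma_{t,s}} \leq \frac{R_{t,s+1}-R_{t+1,s+1}}{\gamma_S},
\]
where the second inequality uses $\Gamma_{t,s}\geq\gamma_S>0$ together with $R_{t,s+1}-R_{t+1,s+1}\geq 0$, the latter being forced by the first inequality since its left-hand side is a squared norm.

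Next I would telescope twice. Summing the displayed bound over $t=0,\ldots,m-1$ and using the identities $\boldsymbol{x}_0^{s+1}=\tilde{\boldsymbol{x}}^s$, $\boldsymbol{x}_m^{s+1}=\tilde{\boldsymbol{x}}^{s+1}$ (hence $R_{0,s+1}=\mathbb{E}[f(\tilde{\boldsymbol{x}}^s)]$ and $R_{m,s+1}=\mathbb{E}[f(\tilde{\boldsymbol{x}}^{s+1})]$, as noted right after Lemma \ref{keylemma-minibatch}) collapses the right-hand side to $\gamma_S^{-1}\big(\mathbb{E}[f(\tilde{\boldsymbol{x}}^s)]-\mathbb{E}[f(\tilde{\boldsymbol{x}}^{s+1})]\big)$. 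Summing this over $s=0,\ldots,S-1$ telescopes once more to give
\[
\sum_{s=0}^{S-1}\sum_{t=0}^{m-1}\mathbb{E}[\|\nabla f(\boldsymbol{x}_t^{s+1})\|^2] \leq \frac{f(\tilde{\boldsymbol{x}}^0)-\mathbb{E}[f(\tilde{\boldsymbol{x}}^S)]}{\gamma_S} \leq \frac{f(\tilde{\boldsymbol{x}}^0)-f(\boldsymbol{x}^*)}{\gamma_S},
\]
the final step being $f(\boldsymbol{x}^*)\leq \mathbb{E}[f(\tilde{\boldsymbol{x}}^S)]$ since $\boldsymbol{x}^*$ minimizes $f$.

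Finally, because $\boldsymbol{x}_{\mathrm{out}}$ is drawn uniformly from the $T=mS$ iterates appearing on the left, taking expectation over the sampling index yields $\mathbb{E}[\|\nabla f(\boldsymbol{x}_{\mathrm{out}})\|^2]=\frac{1}{T}\sum_{s,t}\mathbb{E}[\|\nabla f(\boldsymbol{x}_t^{s+1})\|^2]$; dividing the previous display by $T$ gives the claimed $\boldsymbol{O}(1/T)$ bound. There is no deep obstacle here — the substantive work is already contained in Lemma \ref{keylemma-minibatch} — but I would be careful about two bookkeeping points: aligning the index range $t=0,\ldots,m-1$, $s=0,\ldots,S-1$ used in the lemma with the precise collection over which $\boldsymbol{x}_{\mathrm{out}}$ is averaged (so the uniform-sampling identity is exact), and making the telescoping of the inner sum land cleanly on the consecutive snapshots $\tilde{\boldsymbol{x}}^s,\tilde{\boldsymbol{x}}^{s+1}$, which is exactly where $\boldsymbol{x}_0^{s+1}=\tilde{\boldsymbol{x}}^s$ and $\boldsymbol{x}_m^{s+1}=\tilde{\boldsymbol{x}}^{s+1}$ are needed.
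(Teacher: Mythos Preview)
Your proposal is correct and follows essentially the same approach as the paper's own proof: invoke Lemma~\ref{keylemma-minibatch}, replace each $\Gamma_{t,s}$ by $\gamma_S$, telescope over $t$ using $R_{0,s+1}=\mathbb{E}[f(\tilde{\boldsymbol{x}}^s)]$ and $R_{m,s+1}=\mathbb{E}[f(\tilde{\boldsymbol{x}}^{s+1})]$, telescope over $s$, bound $\mathbb{E}[f(\tilde{\boldsymbol{x}}^S)]$ below by $f(\boldsymbol{x}^*)$, and finish by averaging over the uniform choice of $\boldsymbol{x}_{\mathrm{out}}$. Your explicit justification that $R_{t,s+1}-R_{t+1,s+1}\geq 0$ (needed to pass from $\Gamma_{t,s}$ to $\gamma_S$) is a nice bit of care that the paper leaves implicit.
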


This theorem shows the ${\bf O}(1/T)$  convergence rate of SVRG-SBB method under certain conditions. Such rate is consistent with that of the mini-batch SVRG method studied in \cite[Theorem 2]{pmlr-v48-reddi16} and faster than the convergence rate of SGD as ${\bf O}(1/\sqrt{T})$ established in \cite{Nemirovski1983,pmlr-v48-reddi16}.
Note that the condition \eqref{Eq:cond-gamma} is rather technical.
In the following, we provide several sufficient conditions of \eqref{Eq:cond-gamma}.
Specifically, we take
\begin{align}
\label{Eq:betas}
\beta_s = 4mb\eta_{\epsilon,s}^2L^3, \ s=0,\ldots,S-1.
\end{align}

\begin{theorem}
    \label{svrg_sbb_minibatch}
    Let $\{\{{\boldsymbol{x}}_t^s\}_{t=1}^{m}\}_{s=1}^S$ be a sequence generated by Algorithm \ref{alg:svrg-sbb}.
    Suppose that each $f_i$ is $L$-smoothness. For any given $\epsilon>0$, if
    \begin{align}
    \label{Eq:cond-m-b}
    b <\min\left\{ \frac{m\epsilon^2}{L^2\left(1+\sqrt{1+4\epsilon/L}\right)}, \frac{m\epsilon}{2L\left(1 + \sqrt{1+4L/\epsilon}\right)}\right\},
    \end{align}
    then for the output ${\boldsymbol{x}}_{\mathrm{out}}$ of Algorithm \ref{alg:svrg-sbb}, we have
    \begin{align}
        \label{eq:mini-rate}
        \mathbb{E}[\|\nabla f({\boldsymbol{x}}_{\mathrm{out}})\|^2] \leq \frac{4(f(\tilde{\boldsymbol{x}}^0)-f({\boldsymbol{x}}^*))}{T b \eta_{\epsilon,\min}},
    \end{align}
    where $\eta_{\epsilon,\min}:=\min_{0\leq s\leq S-1}\eta_{\epsilon,s}$, and $\boldsymbol{x}^*$ is a global minimum.
\end{theorem}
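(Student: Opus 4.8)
The plan is to deduce Theorem~\ref{svrg_sbb_minibatch} from the abstract estimate of Theorem~\ref{Thm:intermediate-result}: with the concrete choice $\beta_s=4mb\eta_{\epsilon,s}^2L^3$ of \eqref{Eq:betas} it is enough to verify that the mini-batch condition \eqref{Eq:cond-m-b} implies the technical hypothesis \eqref{Eq:cond-gamma} for every $s$ and $t$, and forces the quantitative lower bound $\gamma_S=\min_{s,t}\Gamma_{t,s}\ge\tfrac14 b\,\eta_{\epsilon,\min}$. First I would make the auxiliary constants explicit. Plugging \eqref{Eq:betas} into the linear recursion \eqref{eq:ct} ($c_{t,s}=\rho_s c_{t+1,s}+b\eta_{\epsilon,s}^2L^3$, $c_{m,s}=0$, with $\rho_s$ from \eqref{eq:rhos}) and telescoping gives the closed form
\[
c_{t,s}=b\eta_{\epsilon,s}^2L^3\sum_{j=0}^{m-1-t}\rho_s^{\,j}=b\eta_{\epsilon,s}^2L^3\,\frac{\rho_s^{\,m-t}-1}{\rho_s-1},
\]
so that $c_{0,s}=\max_{0\le t\le m}c_{t,s}$ dominates the whole epoch; it therefore suffices to bound $c_{0,s}\beta_s^{-1}$ and $b\eta_{\epsilon,s}(L+2c_{0,s})$ by suitable absolute constants, uniformly in $s$ and over all realizations of the random step sizes.

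The heart of the argument is the uniform control of $\rho_s$, hence of $\rho_s^m$ and of $c_{0,s}$. Every realization of the stabilized BB step satisfies the deterministic bound \eqref{eq:bound-sbb}, and, because \eqref{Eq:eta-epsilon-s} carries an extra $1/m$, one has $\tfrac1{m(L+\epsilon)}\le\eta_{\epsilon,s}\le\tfrac1{m\epsilon}$ for all $s$. I would feed these bounds, together with \eqref{Eq:cond-m-b}, into the identity
\[
\rho_s-1=b\eta_{\epsilon,s}\beta_s+2b\eta_{\epsilon,s}^2L^2=4mb^2\eta_{\epsilon,s}^3L^3+2b\eta_{\epsilon,s}^2L^2
\]
to show that $m(\rho_s-1)$ is bounded by a fixed constant, so that $\rho_s^m\le e^{m(\rho_s-1)}$ is bounded; combining this with $\rho_s-1\ge 2b\eta_{\epsilon,s}^2L^2$ in the closed form yields $c_{0,s}\le\kappa\,mb\eta_{\epsilon,s}^2L^3$ for an absolute $\kappa$, whence $c_{0,s}\beta_s^{-1}\le\kappa/4$ and $b\eta_{\epsilon,s}(L+2c_{0,s})\le b\eta_{\epsilon,s}L\bigl(1+2\kappa\,mb\eta_{\epsilon,s}L^2\bigr)$. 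The two entries of the minimum in \eqref{Eq:cond-m-b} then correspond to the two quadratic-in-$b$ requirements needed here: one controls $m(\rho_s-1)$ and hence the ``variance'' term $c_{t+1,s}\beta_s^{-1}$, the other controls the effective step $b\eta_{\epsilon,s}L$ and hence $b\eta_{\epsilon,s}(L+2c_{t+1,s})$; solving these quadratic inequalities is exactly what produces the $\sqrt{1+4\epsilon/L}$- and $\sqrt{1+4L/\epsilon}$-type expressions. Calibrating the constants so that $c_{t+1,s}\beta_s^{-1}+b\eta_{\epsilon,s}(L+2c_{t+1,s})\le\tfrac34$ for all $s,t$ establishes \eqref{Eq:cond-gamma}.

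The remainder is bookkeeping. From \eqref{eq:Gammas} and the bound just obtained,
\[
\Gamma_{t,s}=b\eta_{\epsilon,s}\left[1-c_{t+1,s}\beta_s^{-1}-b\eta_{\epsilon,s}(L+2c_{t+1,s})\right]\ge\tfrac14 b\eta_{\epsilon,s}\ge\tfrac14 b\,\eta_{\epsilon,\min}
\]
for every $s,t$, so $\gamma_S\ge\tfrac14 b\,\eta_{\epsilon,\min}$. Since \eqref{Eq:cond-gamma} holds, Theorem~\ref{Thm:intermediate-result} applies and gives
\[
\mathbb{E}[\|\nabla f(\boldsymbol{x}_{\mathrm{out}})\|^2]\le\frac{f(\tilde{\boldsymbol{x}}^0)-f(\boldsymbol{x}^*)}{T\gamma_S}\le\frac{4\,(f(\tilde{\boldsymbol{x}}^0)-f(\boldsymbol{x}^*))}{Tb\,\eta_{\epsilon,\min}},
\]
which is \eqref{eq:mini-rate}. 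I expect the main obstacle to lie in the second step: one has to fix all the absolute constants ($\kappa$, the bound imposed on $\rho_s^m$, and how the $3/4$ ``budget'' is split between the two contributions) so that the two quadratic inequalities they place on the mini-batch size $b$ are \emph{simultaneously} implied by the single, clean condition \eqref{Eq:cond-m-b} --- it is this balancing act that dictates the somewhat unusual form of \eqref{Eq:cond-m-b}, square roots included. A minor but necessary point is that $\eta_{\epsilon,s}$ is data-dependent and random, so all these estimates must be used in their worst-case deterministic form, which is precisely what \eqref{eq:bound-sbb} supplies.
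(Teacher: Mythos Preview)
Your plan is essentially the paper's proof: reduce to Theorem~\ref{Thm:intermediate-result} with $\beta_s$ as in \eqref{Eq:betas}, compute the closed form of $c_{t,s}$, control $\rho_s^m$, identify the two halves of \eqref{Eq:cond-m-b} with two quadratic constraints on $b/m$, and deduce $\gamma_S\ge\tfrac14\,b\,\eta_{\epsilon,\min}$.

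One slip to fix: combining an upper bound on $\rho_s^m$ with the \emph{lower} bound $\rho_s-1\ge 2b\eta_{\epsilon,s}^2L^2$ does \emph{not} give $c_{0,s}\le\kappa\,mb\eta_{\epsilon,s}^2L^3$. That route only yields $c_{0,s}\le b\eta_{\epsilon,s}^2L^3\cdot\dfrac{C}{2b\eta_{\epsilon,s}^2L^2}=\dfrac{CL}{2}$, and then $c_{0,s}/\beta_s$ is of order $1/(mb\eta_{\epsilon,s}^2L^2)$, which is not uniformly bounded in $m$. The right move is to bound the ratio $(\rho_s^m-1)/(\rho_s-1)$ directly. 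The paper does this with the elementary inequality $(1+x)^m\le 1+2mx$ for $0<x\le 1/m$ (its Lemma~\ref{lemma:analytic}), which gives $(\rho_s^m-1)/(\rho_s-1)\le 2m$ and hence the clean $c_{t,s}<\tfrac12\beta_s$. With this, the two intermediate requirements become exactly $m(\rho_s-1)\le 1$ and $b\eta_{\epsilon,s}(L+\beta_s)<\tfrac14$; substituting $\beta_s=4mb\eta_{\epsilon,s}^2L^3$ and taking the worst case $\eta_{\epsilon,s}\le 1/(m\epsilon)$ turns each into a quadratic in $b/m$ whose positive root is precisely one of the two entries in \eqref{Eq:cond-m-b}. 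So your ``calibration'' worry is resolved by that single inequality, and your $e^{m(\rho_s-1)}$ bound would work too once you apply $e^y-1\le(e-1)y$ on $[0,1]$ to the numerator rather than lower-bounding the denominator.
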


The proof of Theorem \ref{svrg_sbb_minibatch} is presented in Appendix C.
Given an $\epsilon>0$ and an update frequency $m$ (commonly taken as the multiple times of the total sample size $n$),
Theorem \ref{svrg_sbb_minibatch} shows that the proposed SVRG method converges to a stationary point at a sublinear rate if the objective function is $L$-smoothness and the mini-batch size $b$ is less than an upper bound. When $b=1$, the mini-batch version of SVRG-SBB reduces to the stochastic version of SVRG-SBB. By \eqref{eq:mini-rate}, the established rate of SVRG-SBB is the same with that of SVRG studied in \cite{pmlr-v48-reddi16}. By \eqref{eq:mini-rate} again, a larger $b$ adopted in the inner loop generally implies faster convergence, which is also verified by our latter experiments. Note that the upper bound of $b$ in \eqref{Eq:cond-m-b} is related to $m$ and the ratio $\frac{\epsilon}{L}$, and it is monotone increasing with respect to both $m$ and $\frac{\epsilon}{L}$. Particularly, when $\frac{\epsilon}{L}=1$, then the upper bound of $b$ in \eqref{Eq:cond-m-b} is $\frac{m}{2(1+\sqrt{5})}$,
and if $m$ is further taken as the multiple times of sample size $n$ (say, $m = 2(1+\sqrt{5}) n$), in this case, the upper bound of $b$ is the sample size $n$.
This implies that the choice of mini-batch size $b$ is generally very flexible when $\frac{\epsilon}{L}\approx 1$.
Moreover, if the curvature of the objective function is  lower bounded by some $\mu>0$, then according to the proof of Theorem \ref{svrg_sbb_minibatch} (in Appendix C), the parameter $\epsilon$ emerging in the upper bound of $b$ should be replaced by $\epsilon' = \mu +\epsilon$,
while when $\mu$ is moderately large, the parameter $\epsilon$ can be even  taken as $0$, and in this case, the upper bound of $b$ becomes
\[
b < \min \left\{\frac{m}{\kappa^2(1+\sqrt{1+4\kappa^{-1}})}, \frac{m}{2\kappa(1+\sqrt{1+4\kappa})} \right\},
\]
where $\kappa:= \frac{L}{\mu}$ represents the \textit{condition number} of the objective function.
Formally, we state these claims in the following corollary.

\begin{corollary}
\label{coro:sbb_0}
Under the conditions of Theorem \ref{svrg_sbb_minibatch}, if the Hessian $\nabla^2 f(\boldsymbol{x})$ exists and $\mu$ is the lower bound of the magnitudes of eigenvalues of $\nabla^2 f(\boldsymbol{x})$ for any bounded $\boldsymbol{x}$, the convergence rate \eqref{eq:mini-rate} still holds for Algorithm \ref{alg:svrg-sbb} with $\epsilon$ replaced by $\mu+\epsilon$. In addition, if $\mu>0$, then we can take $\epsilon=0$, and \eqref{eq:mini-rate} still holds for SVRR-SBB$_0$ with $\epsilon$ replaced by $\mu$.
\end{corollary}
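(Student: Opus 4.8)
The plan is to observe that the hypothesis $\epsilon>0$ enters the proof of Theorem~\ref{svrg_sbb_minibatch} (Appendix~C) only through the upper bound $\eta_{\epsilon,s}\le\frac{1}{m\epsilon}$ on the \textit{SBB} step size, and that the Hessian assumption sharpens this to $\eta_{\epsilon,s}\le\frac{1}{m(\mu+\epsilon)}$; the corollary then follows by the formal substitution $\epsilon\mapsto\mu+\epsilon$ throughout that argument, together with the observation that, when $\mu>0$, the sharpened bound survives at $\epsilon=0$.

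\emph{Step 1 (sharpened step-size bound).} Recall $\triangle\boldsymbol{y}_s=\nabla f(\tilde{\boldsymbol{x}}^s)-\nabla f(\tilde{\boldsymbol{x}}^{s-1})$. Since the generated iterates lie in a bounded region where, by hypothesis, the magnitudes of the eigenvalues of $\nabla^2 f$ are at least $\mu$, Taylor's theorem gives
\begin{equation*}
\triangle\boldsymbol{y}_s=\boldsymbol{H}_s\,\triangle\boldsymbol{x}_s,\qquad \boldsymbol{H}_s:=\int_0^1\nabla^2 f\big(\tilde{\boldsymbol{x}}^{s-1}+\tau\triangle\boldsymbol{x}_s\big)\,\mathrm{d}\tau,
\end{equation*}
whence $|\langle\triangle\boldsymbol{x}_s,\triangle\boldsymbol{y}_s\rangle|=|\triangle\boldsymbol{x}_s^\top\boldsymbol{H}_s\triangle\boldsymbol{x}_s|\ge\mu\|\triangle\boldsymbol{x}_s\|^2$. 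Plugging this into \eqref{Eq:eta-epsilon-s} yields $\eta_{\epsilon,s}\le\frac{1}{m(\mu+\epsilon)}$, and together with the lower bound $\eta_{\epsilon,s}\ge\frac{1}{m(L+\epsilon)}$ (which follows from $|\langle\triangle\boldsymbol{x}_s,\triangle\boldsymbol{y}_s\rangle|\le L\|\triangle\boldsymbol{x}_s\|^2$ by $L$-smoothness) one obtains $\frac{1}{m(L+\epsilon)}\le\eta_{\epsilon,s}\le\frac{1}{m(\mu+\epsilon)}$; for $\epsilon=0$ this gives $\frac{1}{mL}\le\eta_{0,s}\le\frac{1}{m\mu}$, the $\frac1m$-scaled form of \eqref{eq:bound-sbb0}.

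\emph{Step 2 (substitution and the case $\epsilon=0$).} In Appendix~C, the constants \eqref{eq:rhos}--\eqref{eq:Gammas}, the choice \eqref{Eq:betas}, the verification of \eqref{Eq:cond-gamma}, and the resulting mini-batch constraint \eqref{Eq:cond-m-b} depend on $\epsilon$ only through the pair $\eta_{\epsilon,s}\le\frac{1}{m\epsilon}$ and $\eta_{\epsilon,s}\ge\frac{1}{m(L+\epsilon)}$. Replacing the first bound by $\eta_{\epsilon,s}\le\frac{1}{m(\mu+\epsilon)}$ (Step~1) amounts to the substitution $\epsilon\mapsto\mu+\epsilon$ in every such estimate, so the conclusion \eqref{eq:mini-rate} continues to hold provided the mini-batch condition \eqref{Eq:cond-m-b} is modified by $\epsilon\mapsto\mu+\epsilon$. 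When $\mu>0$ the sharpened bound $\eta_{0,s}\le\frac{1}{m\mu}$ is already valid at $\epsilon=0$, so Algorithm~\ref{alg:svrg-sbb} may be run with $\epsilon=0$ (the \textit{SVRG-SBB}$_0$ variant): its step size stays in $[\frac{1}{mL},\frac{1}{m\mu}]$, and the substitution $\epsilon\mapsto\mu$ delivers \eqref{eq:mini-rate} for ${\boldsymbol{x}}_{\mathrm{out}}$ (with $\eta_{0,\min}$ in place of $\eta_{\epsilon,\min}$) under $b<\min\{\frac{m}{\kappa^2(1+\sqrt{1+4\kappa^{-1}})},\frac{m}{2\kappa(1+\sqrt{1+4\kappa})}\}$, $\kappa=L/\mu$, exactly as anticipated in the discussion preceding the corollary.

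\emph{Main obstacle.} The delicate point is the curvature lower bound $|\langle\triangle\boldsymbol{x}_s,\triangle\boldsymbol{y}_s\rangle|\ge\mu\|\triangle\boldsymbol{x}_s\|^2$. The averaged Hessian $\boldsymbol{H}_s$ is a convex combination of possibly indefinite matrices, and averaging matrices with rotating eigenvectors could, in principle, produce a near-zero quadratic form in the specific direction $\triangle\boldsymbol{x}_s$ even when each $\nabla^2 f$ along the segment has eigenvalue magnitudes at least $\mu$. Hence one really needs $\nabla^2 f$ to keep a fixed definiteness along the trajectory — i.e. $\boldsymbol{H}_s\succeq\mu\boldsymbol{I}$ or $\boldsymbol{H}_s\preceq-\mu\boldsymbol{I}$, equivalently $f$ locally strongly convex or strongly concave there — and/or the boundedness of the iterates themselves must be secured; once this is granted, the remainder is pure bookkeeping on the $\epsilon$-dependence of Appendix~C.
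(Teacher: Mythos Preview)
Your approach matches the paper's exactly: the paper does not give a separate proof of the corollary but simply remarks, in the paragraph preceding it, that ``according to the proof of Theorem~\ref{svrg_sbb_minibatch} (in Appendix~C), the parameter $\epsilon$ emerging in the upper bound of $b$ should be replaced by $\epsilon'=\mu+\epsilon$,'' and then specializes to $\epsilon=0$ when $\mu>0$. Your Step~1/Step~2 make this substitution argument explicit, and your ``main obstacle'' is a fair observation that the paper itself does not address---the stated hypothesis (eigenvalue \emph{magnitudes} bounded below by $\mu$) does not by itself force $|\triangle\boldsymbol{x}_s^\top\boldsymbol{H}_s\triangle\boldsymbol{x}_s|\ge\mu\|\triangle\boldsymbol{x}_s\|^2$ for the averaged Hessian, so both arguments implicitly require the additional sign-consistency you flag.
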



\subsection{Linear Convergence under PL Property}
\label{sc:linear-PL}
\begin{algorithm}
    \small
    \caption{SVRG-SBB for \eqref{eq:finit-sum} with PL Property}\label{alg:svrg-sbb1}
    \begin{algorithmic}
        \REQUIRE
        $\epsilon\geq0$, update frequency $m$, mini-batch size $b$, maximal number of iterations $S$ in every SVRG-SBB module,  initial point $\tilde{\boldsymbol{x}}^0 \in \mathbb{R}^{p}$, and the number of modules $K$
        \FOR{$ k=1$ to $ K$}
        \STATE $\tilde{\boldsymbol{x}}^k = \text{SVRG-SBB}(\tilde{\boldsymbol{x}}^{k-1},S,m,\{\eta_{\epsilon,s}\}_{s=0}^{S-1})$
        \ENDFOR
        \ENSURE $\tilde{\boldsymbol{x}}^K$
    \end{algorithmic}
\end{algorithm}

In the next, we consider the convergence of a modular version of the proposed SVRG-SBB method, that is, let every $S$ iterates of SVRG-SBB be one module, then the output is yielded after running several modules as described in Algorithm \ref{alg:svrg-sbb1}.
Note that the computational complexity of running $K$ modules in Algorithm \ref{alg:svrg-sbb1} is the same as that of running $K \times S$ iterations of the original SVRG-SBB, i.e., Algorithm \ref{alg:svrg-sbb}.
However, by exploiting such modification, we can show latter the linear convergence of Algorithm \ref{alg:svrg-sbb1} under the furthered Polyak-{\L}ojasiewicz (PL) property \cite{Polyak1963}, whose definition is stated as follows.

\begin{definition}
  A continuously differentiable function $f: \mathcal{X}\rightarrow\mathbb{R}$ is said to be $\lambda$-Polyak-{\L}ojasiewicz with some constant $\lambda>0$, if it satisfies the following PL inequality
  \begin{equation}
    \label{eq:pl_fun}
    \frac{1}{2}\|\nabla f({\boldsymbol{x}})\|^2\geq\lambda(f({\boldsymbol{x}})-f({\boldsymbol{x}}^*)),\ \forall\ {\boldsymbol{x}}\in\mathcal{X}
  \end{equation}
  where ${\boldsymbol{x}}^*=\underset{{\boldsymbol{x}}\in\mathcal{X}}{\arg\min}\ f(\boldsymbol{x})$.
\end{definition}

The PL inequality is widely used to derive the linear convergence of the existing methods in literature (say, \cite{Karimi2016,pmlr-v48-reddi16}). 
Some typical examples satisfying PL inequality include the strongly convex function, the square loss and logistic loss commonly used in machine learning, and some invex functions like $f(x)=x^2+3\sin^2(x)$ \cite{Karimi2016}.
According to \cite{Karimi2016}, the function $f(x)=x^2+3\sin^2(x)$ is nonconvex and satisfies the PL inequality with $\lambda = \frac{1}{32}$.
For more examples, we refer to \cite{Karimi2016} and references therein.


\begin{theorem}
    \label{svrg_sbb_pl}
    Let $\{\tilde{\boldsymbol{x}}^k\}_{k=1}^K$ be a sequence generated by Algorithm \ref{alg:svrg-sbb1}.
    Suppose that assumptions in Theorem \ref{svrg_sbb_minibatch} hold, and further that $f$ satisfies the Polyak-\L{}ojasiewicz inequality \eqref{eq:pl_fun} for some $\lambda>0$.
    If $\frac{1}{2}\lambda m S b \eta_{\epsilon,\min}  \geq \rho^{-1}$ for some $\rho \in (0,1)$, then we have
    \begin{align}
        \label{eq:pl-inner-rate}
        \mathbb{E}[\|\nabla f(\tilde{\boldsymbol{x}}^{k})\|^2] \leq {\rho^{-k}} \|\nabla f({\tilde{\boldsymbol{x}}^{0}})\|^2,
    \end{align}
    and
    \begin{align}
        \label{eq:pl-global}
        \mathbb{E}[f(\tilde{\boldsymbol{x}}^{k})-f(\boldsymbol{x}^{*})] \leq {\rho^{-k}}(f(\tilde{\boldsymbol{x}}^{0})-f(\boldsymbol{x}^{*})).
    \end{align}
\end{theorem}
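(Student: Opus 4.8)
The plan is to regard each invocation of SVRG-SBB inside Algorithm~\ref{alg:svrg-sbb1} as a black box governed by Theorem~\ref{svrg_sbb_minibatch}, and then to compose the $K$ resulting one-module estimates into a geometric rate by means of the PL inequality \eqref{eq:pl_fun} together with repeated conditioning. Since the hypotheses of Theorem~\ref{svrg_sbb_minibatch} (in particular the mini-batch bound \eqref{Eq:cond-m-b} and the choice \eqref{Eq:betas} of $\beta_s$) are assumed, and since $\eta_{\epsilon,t}\ge \frac{1}{L+\epsilon}$ uniformly by \eqref{eq:bound-sbb}, the hypotheses hold for every module regardless of which (random) point $\tilde{\boldsymbol{x}}^{k-1}$ is fed in; in particular one may use a single module-independent value of $\eta_{\epsilon,\min}$ (e.g. the deterministic lower bound $\frac{1}{L+\epsilon}$) throughout.

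First I would fix $k\in\{1,\dots,K\}$ and condition on $\tilde{\boldsymbol{x}}^{k-1}$. Because $\tilde{\boldsymbol{x}}^{k}$ is exactly the output $\boldsymbol{x}_{\mathrm{out}}$ of SVRG-SBB started at $\tilde{\boldsymbol{x}}^{k-1}$ with $S$ outer and $m$ inner iterations, Theorem~\ref{svrg_sbb_minibatch} gives
\[
\mathbb{E}\big[\|\nabla f(\tilde{\boldsymbol{x}}^{k})\|^2 \,\big|\, \tilde{\boldsymbol{x}}^{k-1}\big]
\;\le\; \frac{4\big(f(\tilde{\boldsymbol{x}}^{k-1})-f(\boldsymbol{x}^*)\big)}{mSb\,\eta_{\epsilon,\min}} .
\]
Applying the PL inequality \eqref{eq:pl_fun} at $\tilde{\boldsymbol{x}}^{k-1}$, i.e. $f(\tilde{\boldsymbol{x}}^{k-1})-f(\boldsymbol{x}^*)\le \frac{1}{2\lambda}\|\nabla f(\tilde{\boldsymbol{x}}^{k-1})\|^2$, and inserting it into the previous bound yields
\[
\mathbb{E}\big[\|\nabla f(\tilde{\boldsymbol{x}}^{k})\|^2 \,\big|\, \tilde{\boldsymbol{x}}^{k-1}\big]
\;\le\; \frac{2}{\lambda mSb\,\eta_{\epsilon,\min}}\,\|\nabla f(\tilde{\boldsymbol{x}}^{k-1})\|^2
\;\le\; \rho\,\|\nabla f(\tilde{\boldsymbol{x}}^{k-1})\|^2 ,
\]
where the last step is precisely the standing hypothesis $\tfrac12\lambda mSb\,\eta_{\epsilon,\min}\ge\rho^{-1}$, so the per-module contraction factor is $\rho<1$. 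Taking total expectation (tower property) and iterating over $k$ gives \eqref{eq:pl-inner-rate}. For \eqref{eq:pl-global} I would instead invoke the PL inequality at the output point $\tilde{\boldsymbol{x}}^{k}$, take conditional expectation, and again use the Theorem~\ref{svrg_sbb_minibatch} bound to obtain $\mathbb{E}[f(\tilde{\boldsymbol{x}}^{k})-f(\boldsymbol{x}^*)\mid \tilde{\boldsymbol{x}}^{k-1}]\le \rho\,(f(\tilde{\boldsymbol{x}}^{k-1})-f(\boldsymbol{x}^*))$, followed by the same iteration.

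The PL substitution and the geometric recursion are routine; the delicate points are, first, checking that the conditions of Theorem~\ref{svrg_sbb_minibatch} hold uniformly across all $K$ modules independently of the (random) warm starts $\tilde{\boldsymbol{x}}^{k-1}$ — which is exactly where the deterministic lower bound $\eta_{\epsilon,t}\ge \frac1{L+\epsilon}$ is needed so that a single admissible $\eta_{\epsilon,\min}$, and hence a single $\rho$, works for every module — and second, handling the nested randomness correctly (each module contributes both its internal mini-batch sampling and the uniform draw of $\boldsymbol{x}_{\mathrm{out}}$), so that the one-step contraction composes into the factor over $K$ modules without hidden cross-module dependence. I expect this uniform-validity bookkeeping, rather than any single inequality, to be the main obstacle.
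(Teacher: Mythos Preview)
Your proposal is correct and follows essentially the same approach as the paper's own proof: apply Theorem~\ref{svrg_sbb_minibatch} to one module, use the PL inequality at $\tilde{\boldsymbol{x}}^{k-1}$ (respectively at $\tilde{\boldsymbol{x}}^{k}$) to obtain a one-step contraction in $\mathbb{E}[\|\nabla f(\tilde{\boldsymbol{x}}^{k})\|^2]$ (respectively in $\mathbb{E}[f(\tilde{\boldsymbol{x}}^{k})-f(\boldsymbol{x}^*)]$), and iterate. Your explicit treatment of conditioning and of the module-uniform lower bound on $\eta_{\epsilon,s}$ is more careful than the paper's terse version, but the underlying argument is identical.
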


Theorem \ref{svrg_sbb_pl} shows that Algorithm \ref{alg:svrg-sbb1} converges exponentially fast to a global optimum if the objective function satisfies the PL inequality. The proof of this theorem is presented in Appendix D.

\section{Experiments}
\label{section:experiment}
In this section, we conduct a series of comprehensive experiments on synthetic data and real-world data to demonstrate the effectiveness of the proposed algorithms. Four objective functions including \textit{GNMDS}\cite{agarwal2007generalized}, \textit{CKL}\cite{tamuz2011adaptiive}, \textit{STE} and \textit{TSTE}\cite{vandermaaten2012stochastic} are taken into consideration. We notice that some new methods are proposed for ordinal embedding, such as \textit{SOE/LOE}\cite{Terada2014LocalOE} and \textit{MVE}\cite{amid2015multiview}. However, our main contribution is the optimization algorithm, \textit{SVRG-SBB}$_\epsilon$ and its mini-batch variant, which can also be applied to solve \textit{SOE/LOE} and \textit{MVE}. Moreover, the objective functions of \textit{SOE/LOE} and \textit{MVE} are very similar to \textit{GNMDS} and \textit{STE}. We compare the performance of stochastic methods (including \textit{SGD}, \textit{SVRG}, \textit{SVRG-SBB}$_0$, \textit{SVRG-SBB}$_\epsilon$, and \textit{SVRG-SBB}$_\epsilon$ \textit{mini-batch}) with that of deterministic method (projection gradient descent) for solving convex and non-convex formulations of these four objective functions.

\subsection{Simulations}

\begin{table*}
  \centering
  \begin{subtable}[t]{0.45\textwidth}
    \caption{GNMDS}
    \begin{tabular}{l|cccc}
      \toprule
        method                          & min       & mean     & max      & std       \\ \midrule
        cvx                             & -         & -        & -        & -         \\
        ncvx Batch                      & 4.3760    & 4.7466   & 5.4570   & 0.2966    \\
        ncvx SGD                        & -         & -        & -        & -         \\
        ncvx SVRG                       & 6.5280    & 7.9204   & 9.5780   & 0.8233    \\
        ncvx SVRG-SBB$_0$               & {0.5120}  & {0.6398} & {0.8360} & {0.0719}  \\
        ncvx SVRG-SBB$_\epsilon$-$1$    & {0.7260}  & {0.9119} & {1.1550} & {0.1024}  \\
        ncvx SVRG-SBB$_\epsilon$-$5$    & {0.4210}  & {0.5298} & {0.6720} & {0.0615}  \\
        ncvx SVRG-SBB$_\epsilon$-$10$   & {0.4010}  & {0.4810} & {0.6140} & {0.0553}  \\
        ncvx SVRG-SBB$_\epsilon$-$20$   & \textbf{0.3800}  & \textbf{0.4581} & \textbf{0.5730} & \textbf{0.0535}  \\
        ncvx SVRG-SBB$_\epsilon$-$50$   & {0.4423}  & {0.5162} & {0.6427} & {0.0548}  \\
        ncvx SVRG-SBB$_\epsilon$-$100$  & {0.7657}  & {1.0431} & {1.3730} & {0.1405}  \\
      \bottomrule
    \end{tabular}
    \label{tab:subtable_gnmds}
  \end{subtable}
  \vspace{\fill}
  \begin{subtable}[t]{0.45\textwidth}
    \caption{CKL}
    \begin{tabular}{l|cccc}
      \toprule
        method                          & min       & mean     & max      & std       \\ \midrule
        cvx                             & -         & -        & -        & -         \\
        ncvx Batch                      & 2.4600    & 2.5075   & 2.6490   & 0.0346    \\
        ncvx SGD                        & 1.8360    & 2.4086   & 3.4120   & 0.3210    \\
        ncvx SVRG                       & 2.0620    & 2.4075   & 2.9720   & 0.1910    \\
        ncvx SVRG-SBB$_0$               & 0.5130    & 0.7010   & 1.1740   & 0.1183    \\
        ncvx SVRG-SBB$_\epsilon$-$1$    & 1.0180    & 1.1720   & 1.4290   & 0.0929    \\
        ncvx SVRG-SBB$_\epsilon$-$5$    & 0.6130    & 0.7093   & 0.8680   & 0.0512    \\
        ncvx SVRG-SBB$_\epsilon$-$10$   & 0.5490    & 0.6484   & 0.7920   & 0.0499    \\
        ncvx SVRG-SBB$_\epsilon$-$20$   & 0.5250    & 0.6176   & 0.7560   & 0.0490    \\
        ncvx SVRG-SBB$_\epsilon$-$100$  & \textbf{0.5172}    & \textbf{0.6013}   & \textbf{0.7433}   & \textbf{0.0478}    \\
        ncvx SVRG-SBB$_\epsilon$-$200$  & 1.1380    & 1.3083   & 1.7200   & 0.1120    \\
      \bottomrule
    \end{tabular}
    \label{tab:subtable_ckl}
  \end{subtable}

  \hspace{\fill}

  \begin{subtable}[t]{0.45\textwidth}
    \caption{STE}
    \begin{tabular}{l|cccc}
      \toprule
        method                          & min       & mean     & max      & std       \\ \midrule
        cvx                             & -         & -        & -        & -         \\
        ncvx Batch                      & 3.4520    & 3.5765   & 3.7310   & 0.0676    \\
        ncvx SGD                        & 5.8640    & 6.6043   & 6.7690   & 0.2123    \\
        ncvx SVRG                       & 2.7930    & 3.2328   & 3.9710   & 0.2521    \\
        ncvx SVRG-SBB$_0$               & 0.4580    & 0.6644   & 0.8630   & 0.0880    \\
        ncvx SVRG-SBB$_\epsilon$-$1$    & 0.9100    & 1.0656   & 1.3040   & 0.0803    \\
        ncvx SVRG-SBB$_\epsilon$-$5$    & 0.5350    & 0.6354   & 0.7700   & 0.0492    \\
        ncvx SVRG-SBB$_\epsilon$-$10$   & 0.4920    & 0.5814   & 0.7340   & 0.0511    \\
        ncvx SVRG-SBB$_\epsilon$-$20$   & 0.4610    & 0.5511   & 0.6740   & 0.0447    \\
        ncvx SVRG-SBB$_\epsilon$-$100$  & \textbf{0.4600} & \textbf{0.5414}& \textbf{0.6618}& \textbf{0.0501} \\
        ncvx SVRG-SBB$_\epsilon$-$200$  & 0.9259    & 1.1346   & 1.3449   & 0.0854    \\
      \bottomrule
    \end{tabular}
    \label{tab:subtable_ste}
  \end{subtable}
  \vspace{\fill}
  \begin{subtable}[t]{0.45\textwidth}
    \caption{TSTE}
    \begin{tabular}{l|cccc}
      \toprule
        method                          & min       & mean     & max      & std       \\
        \midrule
        kernel                          & -         & -        & -        & -         \\
        ncvx Batch                      & 6.3860    & 6.9228   & 7.6280   & 0.4334    \\
        ncvx SGD                        & 5.7110    & 7.9055   & 9.2990   & 0.9766    \\
        ncvx SVRG                       & 2.8340    & 3.4525   & 4.3140   & 0.3741    \\
        ncvx SVRG-SBB$_0$               & 0.4360    & 0.7962   & 4.3630   & 0.8396    \\
        ncvx SVRG-SBB$_\epsilon$-$1$    & 0.5100    & 0.5935   & 0.7660   & 0.0628    \\
        ncvx SVRG-SBB$_\epsilon$-$5$    & 0.2930    & 0.3541   & 0.4520   & 0.0362    \\
        ncvx SVRG-SBB$_\epsilon$-$10$   & \textbf{0.3070} & \textbf{0.3590} & \textbf{0.4740} & \textbf{0.0412} \\
        ncvx SVRG-SBB$_\epsilon$-$20$   & 0.3120    & 0.4104   & 0.5310   & 0.0454    \\
        ncvx SVRG-SBB$_\epsilon$-$50$   & 0.3319    & 0.4421   & 0.5641   & 0.0468   \\
        ncvx SVRG-SBB$_\epsilon$-$100$  & 0.4973    & 0.5802   & 0.7531   & 0.0593    \\
      \bottomrule
    \end{tabular}
    \label{tab:subtable_tste}
  \end{subtable}
  \caption{\footnotesize Computational complexity (second) comparisons on the synthetic dataset. `-' represents that the generalization error of the method can not be  lower than a predefined threshold in our setting, e.g. $\textit{error}_{gen} \leq 0.15$, before the algorithm calls a fixed number of IFO subroutine. As the \textit{TSTE} adopts the heavy-tail kernel, its objective function is always non-convex. We note the Gram matrix formulation of TSTE as ``kernel''. The variants of the mini-batch size verify the theoretical analysis. If the mini-batch size is large than some value, the computational efficiency of the proposed SVRG method will get worse as $b$ increasing.}
  \label{tab:table1}
\end{table*}

In this subsection, we use a small-scale synthetic dataset to analyze the performance of these methods in an idealized setting. Here, we provide sufficient ordinal information without noise to construct the embedding $\boldsymbol{X}$ in $\mathbb{R}^p$. One metrics are adopted to evaluate the generalization of different algorithms. Furthermore, the computational complexity is evaluated to illustrate the convergence behavior of each optimization method.
\\\\
\textbf{Settings. }The triplets of this dataset involve $n=100$ points in $10$-dimension Euclidean space as $\{{\boldsymbol{x}}_i\}_{i=1}^{100}\subset\mathbb{R}^{10}$. These data points $\{{\boldsymbol{x}}_i\}_{i=1}^{100}$ are independent and identically distributed (\textit{i.i.d}) random variables as ${\boldsymbol{x}}_i\sim\mathcal{N}(\boldsymbol{\mu}, \frac{1}{20}\boldsymbol{I})$, and $\boldsymbol{I}\in \mathbb{R}^{10\times 10}$ is the identity matrix. As the convex formulation needs the data points to satisfy the ``zero mean / centered assumption'': $\sum_{i=1}^{100}{\boldsymbol{x}}_i = \boldsymbol{0}$, we set $\boldsymbol{\mu}=\boldsymbol{0}$. The possible triple-wise similarity comparisons are generated based on the Euclidean distances between these samples. As it is known that the generalization error of the estimated Gram matrix $\boldsymbol{G}$ can be bounded if the triplets sample complexity is ${\bf O}(pn\log n)$ \cite{NIPS2016_6554}, we randomly choose $|\mathcal{Q}|=10,000$ triplets as the training set and the test set $\mathcal{Q}'$ has the same number of random sampled triplets. The regularization parameter and step size settings for the convex formulation follow the default setting of the \textit{STE / TSTE} implementation\footnote{\scriptsize\raggedright\url{http://homepage.tudelft.nl/19j49/ste/Stochastic_Triplet_Embedding.html}}. Note that we do not choose the step size by line search or the halving heuristic for convex formulation. The embedding dimension is selected just to be equal to $p=10$ without variations.
\\\\
\textbf{Evaluation Metrics. }The metrics that we evaluate various algorithms include the generalization error and running time. We split all triplets into a training set and a test set. Suppose $\mathcal{Y}_{{\mathcal{Q}}'}$ is the true label of test set ${\mathcal{Q}}'$ and the estimated label set is $\hat{\mathcal{Y}}_{{\mathcal{Q}}'}$. We adopt the learned embedding $\boldsymbol{X}$ from partial triple comparisons set $\mathcal{Q}\subset[n]^3$ to estimate the partial order of the unknown triplets. The percentage of held-out triplets whose labels $\hat{\mathcal{Y}}_{{\mathcal{Q}}'}$ are consistence with the true labels $\hat{\mathcal{Y}}_{{\mathcal{Q}}'}$ based on the embedding $\boldsymbol{X}$ is used as the main metric for evaluating the generalization of embedding $\boldsymbol{X}$, $\textit{error}_{gen} = \frac{1}{|{\mathcal{Q}}'|}\underset{{q}'\in{\mathcal{Q}}'}{\sum}\mathbbm{1}_{y_{{q}'}\ \neq\ \hat{y}_{{q}'}}$. The running time is the time spend to make the test error smaller than $0.15$
\\\\
\textbf{Competitors. }We evaluate both convex and non-convex formulations of four objective functions. We establish two baselines as :
\begin{itemize}
  \item the convex objective functions solved by projection gradient descent whose results are denoted as ``cvx'',
  \item the non-convex objection function solved by batch gradient descent whose results are denoted as ``ncvx''.
\end{itemize}

\begin{figure*}[thb!]
	\centering
	\begin{subfigure}{0.24\textwidth}
	{
		\includegraphics[width=\textwidth]{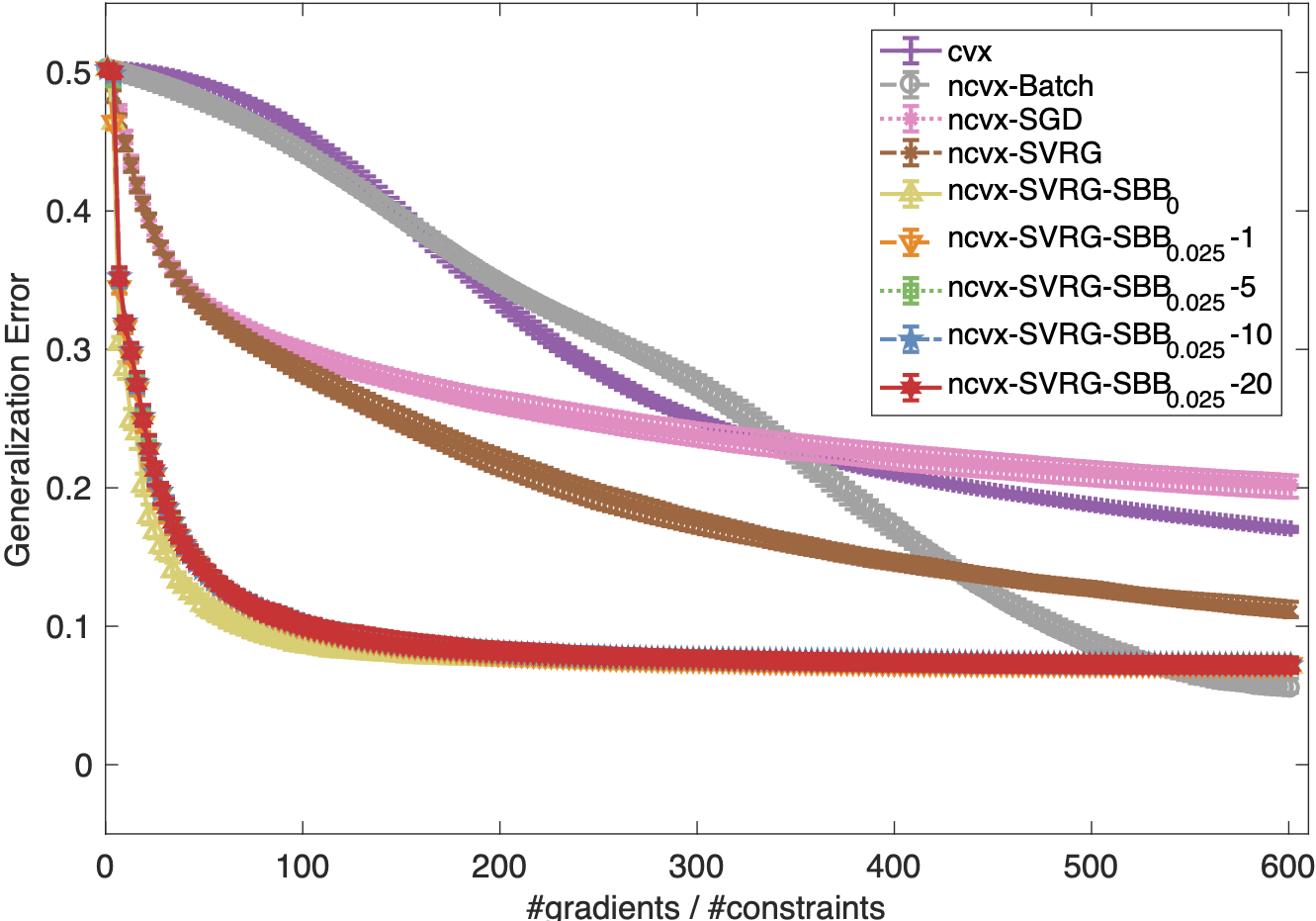}
		\caption{GNMDS}
		\label{fig:synthetic:gnmds}
	}
	\end{subfigure}
	\begin{subfigure}{0.24\textwidth}
	{
		\includegraphics[width=\textwidth]{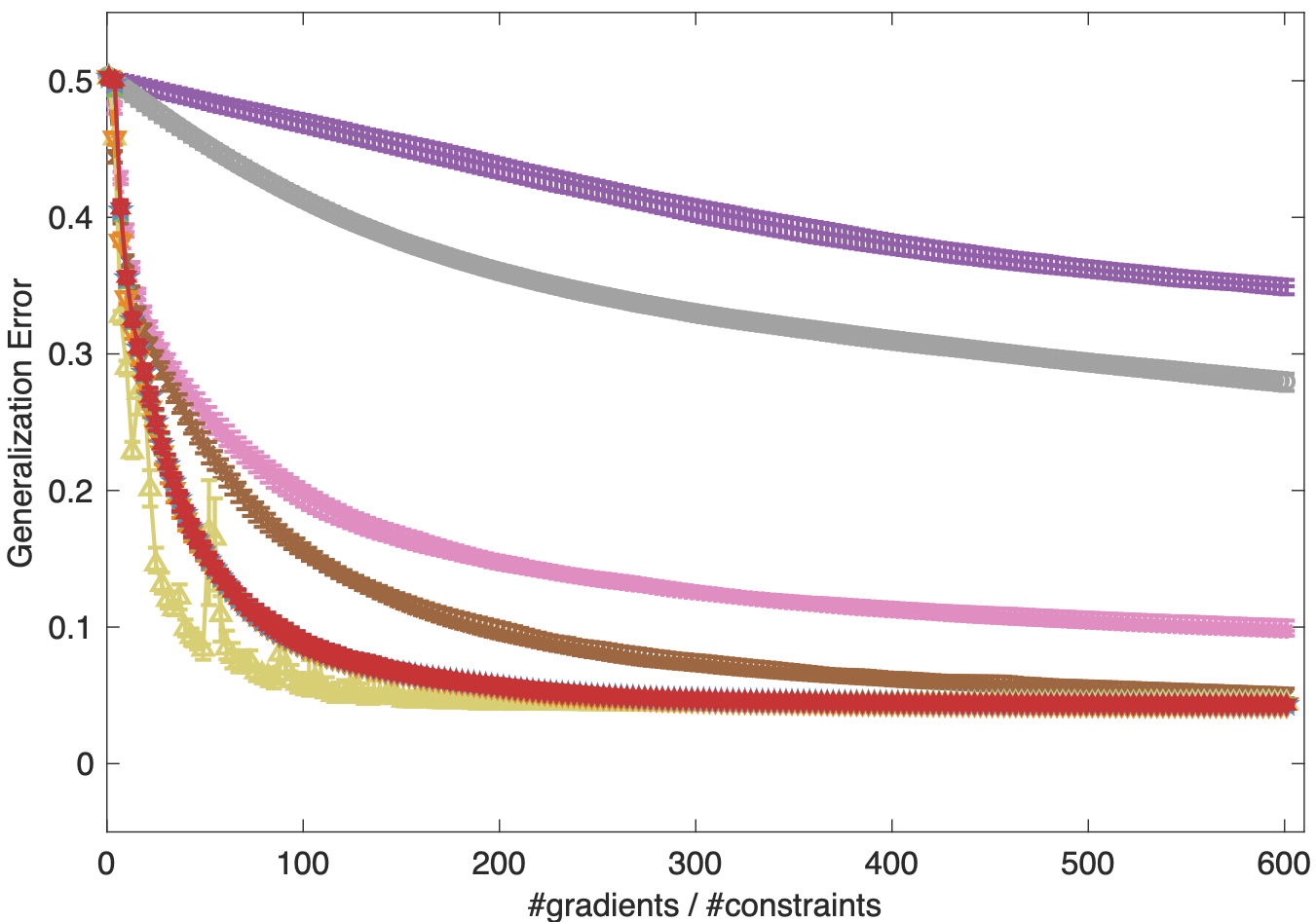}
		\caption{CKL}
		\label{fig:synthetic:ckl}
	}
	\end{subfigure}
	\begin{subfigure}{0.24\textwidth}
	{
		\includegraphics[width=\textwidth]{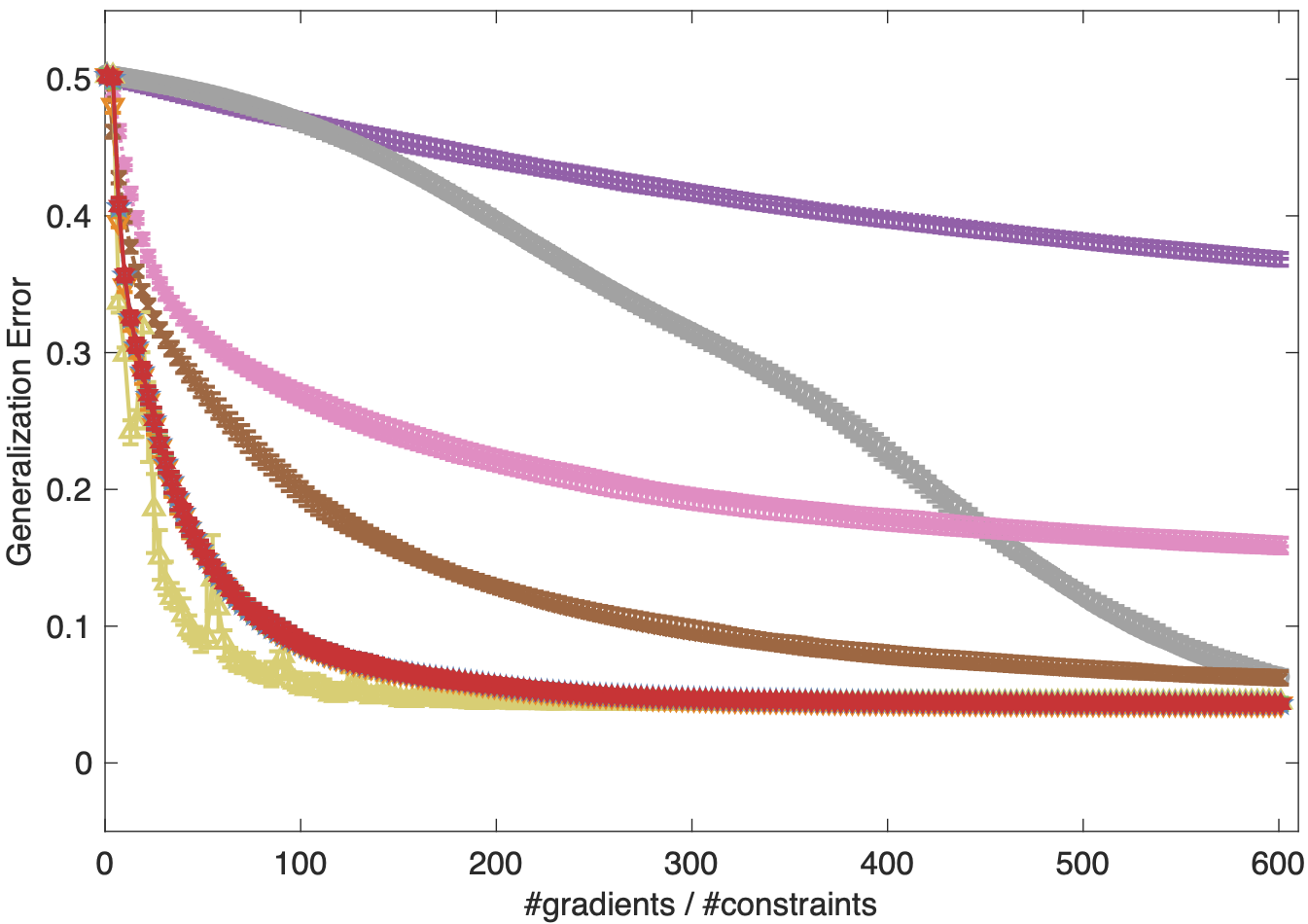}
		\caption{STE}
		\label{fig:synthetic:ste}
	}
	\end{subfigure}
	\begin{subfigure}{0.24\textwidth}
	{
		\includegraphics[width=\textwidth]{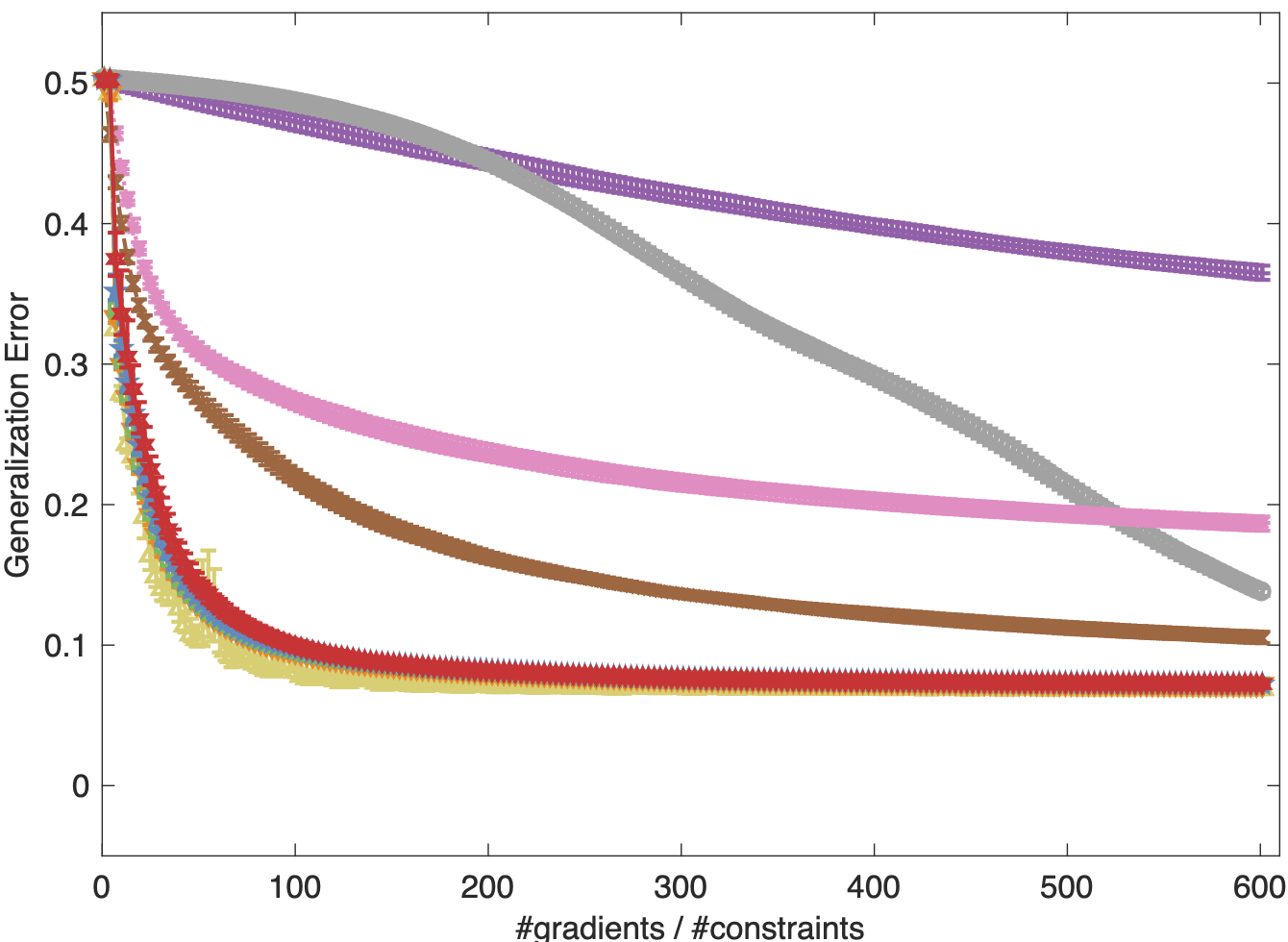}
		\caption{TSTE}
		\label{fig:synthetic:tste}
	}
	\end{subfigure}
	\caption{Generalization errors of SGD, SVRG, SVRG-SBB and batch methods on the synthetic dataset.}
	\label{fig:synthetic} 
\end{figure*}

We compare the performance of \textit{SVRG-SBB}$_\epsilon$ and its mini-batch variants with \textit{SGD}, fixed step size \textit{SVRG} (called \textit{SVRG} for short henceforth) as well as the two batch gradient descent methods. We compare these algorithms in the ``epoch'' sense which means that each method executes the gradient calculation by the same times in every epoch. To be concrete, as \textit{SVRG} and \textit{SVRG-SBB}$_\epsilon$ evaluate $2m+|\mathcal{Q}|$ times of (sub)gradient in each epoch, the batch and \textit{SGD} solutions query the same numbers of (sub)gradient. The mini-batch \textit{SVRG-SBB}$_\epsilon$ evaluates $2|\mathcal{I}_t|$ gradients in the inner loop where $|\mathcal{I}_t|$ is the size of mini-batch. We reduce the inner iteration of the mini-batch \textit{SVRG-SBB}$_\epsilon$ to $m/|\mathcal{I}_t|$ for fair comparisons.
\\\\
\textbf{Results. }In Figure~\ref{fig:synthetic}, the $x$-axis is the number of gradient calculation divided by the total number of training samples $|\mathcal{Q}|$. We set $m=|\mathcal{Q}|$ for \textit{SVRG}, \textit{SVRG-SBB}$_\epsilon$ and its min-batch variants. As a consequence, we evaluate the generalization error of each optimization method $3$ times in each epoch. The $y$-axis represents the generalization error. The results are based on $50$ trials with different initialization $\boldsymbol{X}_0$. The median of generalization error over $50$ trials with $[0.25, 0.75]$ confidence interval are plotted. The whole experiment results, generalization error and the computational complexity are shown in Figure \ref{fig:synthetic} and Table \ref{tab:table1}.

We observe the following phenomena from Figure \ref{fig:synthetic}. First of all, due to instability of SBB$_0$ step size, the SVRG-SBB$_0$ cause the generalization error to increase. The numerical `explosion' of SBB$_0$ step size leads to the failure of gradient decent method. This disadvantage of \textit{SBB}$_0$ is consistent with our theoretical results and insights. On the other hand, the results of the proposed \textit{SBB}$_\epsilon$ and its mini-batch variant do not vibrate during the whole process. This is the main motivation of the proposed stabilized method. Although the \textit{SBB}$_\epsilon$ method applies the conservative treatment and sacrifices some efficiency, this trade-off is valuable if the objective function does not hold the elegant properties, namely the condition number of Hessian matrix is not too large. Secondly, the \textit{SBB}$_\epsilon$ and its mini-batch variant outperform the SVRG incorporated with fixed step size by clear margins. The \textit{SBB}$_\epsilon$ method can choose more appropriate step size. Thus we can run the algorithm without adding much computational burden by tuning parameter. Just a fixed step size ensures the convergence of non-convex SVRG \cite{pmlr-v48-reddi16} but this particular step size is related to the Lipschitz constant of the objective function which is hard to obtain in practical problems, especially when the objective functions are non-convex. The \textit{SBB}$_\epsilon$ method is derived from BB step size and the latter one is obtained form the so-called ``secant equation'' which approximates the inverse of Hessian matrix by the identity matrix multiplied the desired step size. Such an approximation leads to the instability of \textit{BB} method in  stochastic non-convex optimization as the curvature condition \eqref{eq:curvature} does not always hold. A possible future direction is how to preserve the positive-definiteness of the inverse of the Hessian matrix without line search and extremely laborious computation in stochastic paradigm \cite{wang2017stochastic}. Moreover, all the stochastic methods including \textit{SGD}, \textit{SVRG} and \textit{SVRG-SBB}$_{\epsilon}$ ($\epsilon =0$ or $\epsilon>0$) converge fast at the initial several epochs and quickly get admissible results in terms of the relatively small generalization error.

Table \ref{tab:table1} shows the computational complexity achieved by batch and stochastic gradient descent with its variants, SGD, SVRR and SVRG-SBB$_\epsilon$ for convex and non-convex objective functions. All results are obtained with MATLAB$^\text{\textregistered{}}$ R2016b, on a desktop PC with Windows$^\text{\textregistered{}}$ $7$ SP$1$ $64$ bit, with $3.3$ GHz Intel$^\text{\textregistered{}}$ Xeon$^\text{\textregistered{}}$ E3-1226 v3 CPU, and $32$ GB $1600$ MHz DDR3 memory. It is clear to see that for all objective functions, SVRG-SBB$_\epsilon$ ($\epsilon =0$ or $\epsilon>0$) gains speed-up compared to the other methods. The superiority of SVRG-SBB$_\epsilon$ mini-batch in terms of the CPU time can also be observed from Table \ref{tab:table1}. Specifically, the speedup of SVRG-SBB$_\epsilon$ mini-batch over SVRG is about at least 4 times for all four models.

\subsection{Visualization of Eurodist Dataset}
The second empirical study is to visualize some objects in $2$d space based on their relative similarity comparisons.
\\\\
\textbf{Settings. }
The ``eurodist'' dataset describes the ``driving'' distances (in km) between $21$ cities of Europe, and is available in the stats library of \textbf{R}. In this dataset, there are $21,945$ possible quadruplets (i.e. $(i,j,l,k)$) in total. Here we adopt the ``local'' setting as only the triplets are utilized where $j=i$. A triplet $(i, j, k)$ represents a partial order as $d^2_{ij} < d^2_{ik}$, which indicates that ``the distance between cities $i$ and $j$ is smaller than the distance between cities $i$ and $k$'' and $d^2_{ij}$ is the road distance between cities $i$ and $j$ as $i,j\in\{1,...,21\}$. The main task of this dataset is to visualize the embedding of these $21$ cities in $2$-dimensional Euclidean space. We randomly sample $2,000$ triplets as the training set and the left are test set.
\begin{figure}[thb!]
	\centering
	\begin{subfigure}{0.20\textwidth}
	{
		\includegraphics[width=\textwidth]{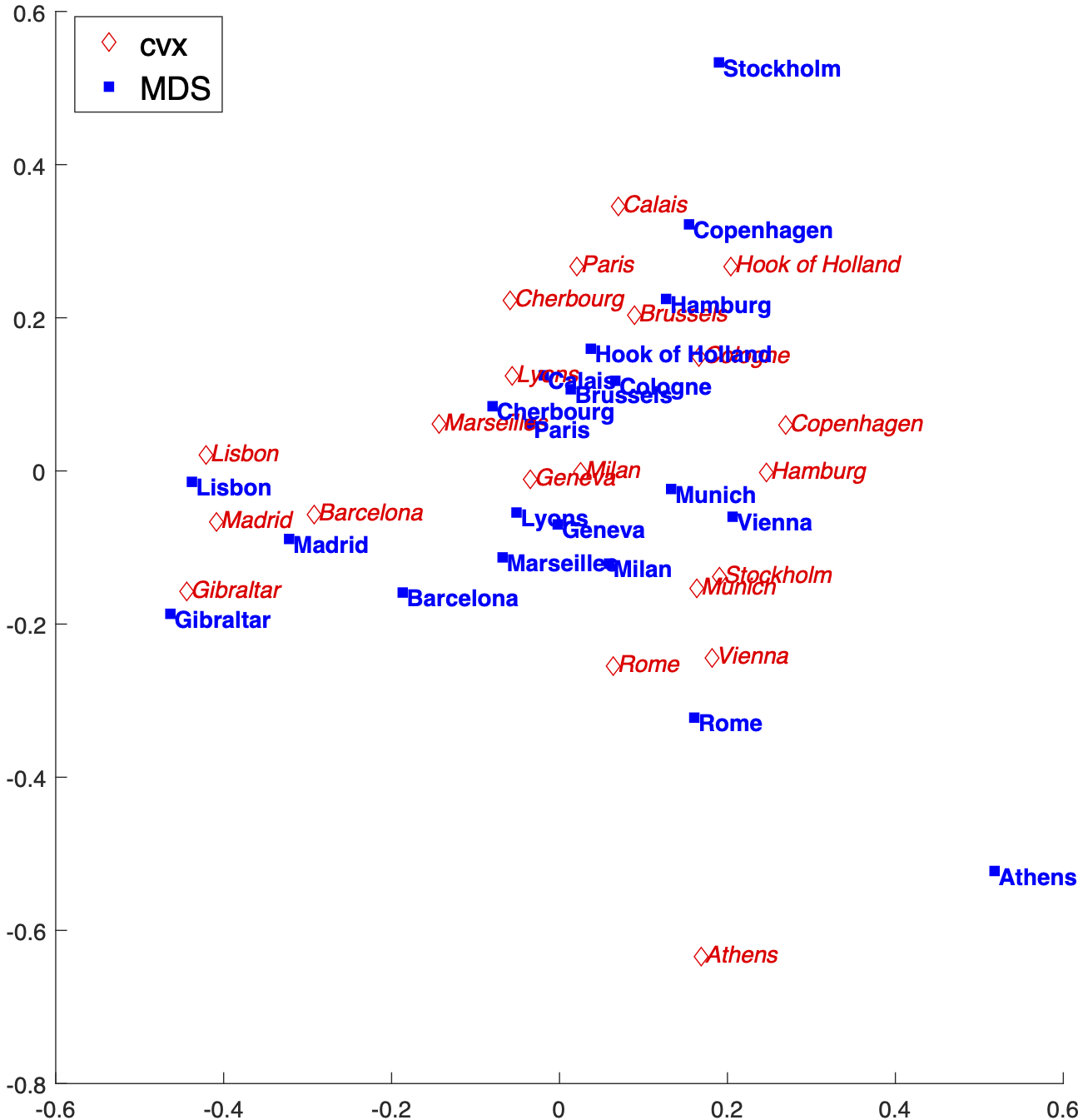}
		\caption{Convex Result}
		\label{fig:eurodist:cvx}
	}
	\end{subfigure}\vspace{0.5cm}
	\begin{subfigure}{0.20\textwidth}
	{
		\includegraphics[width=\textwidth]{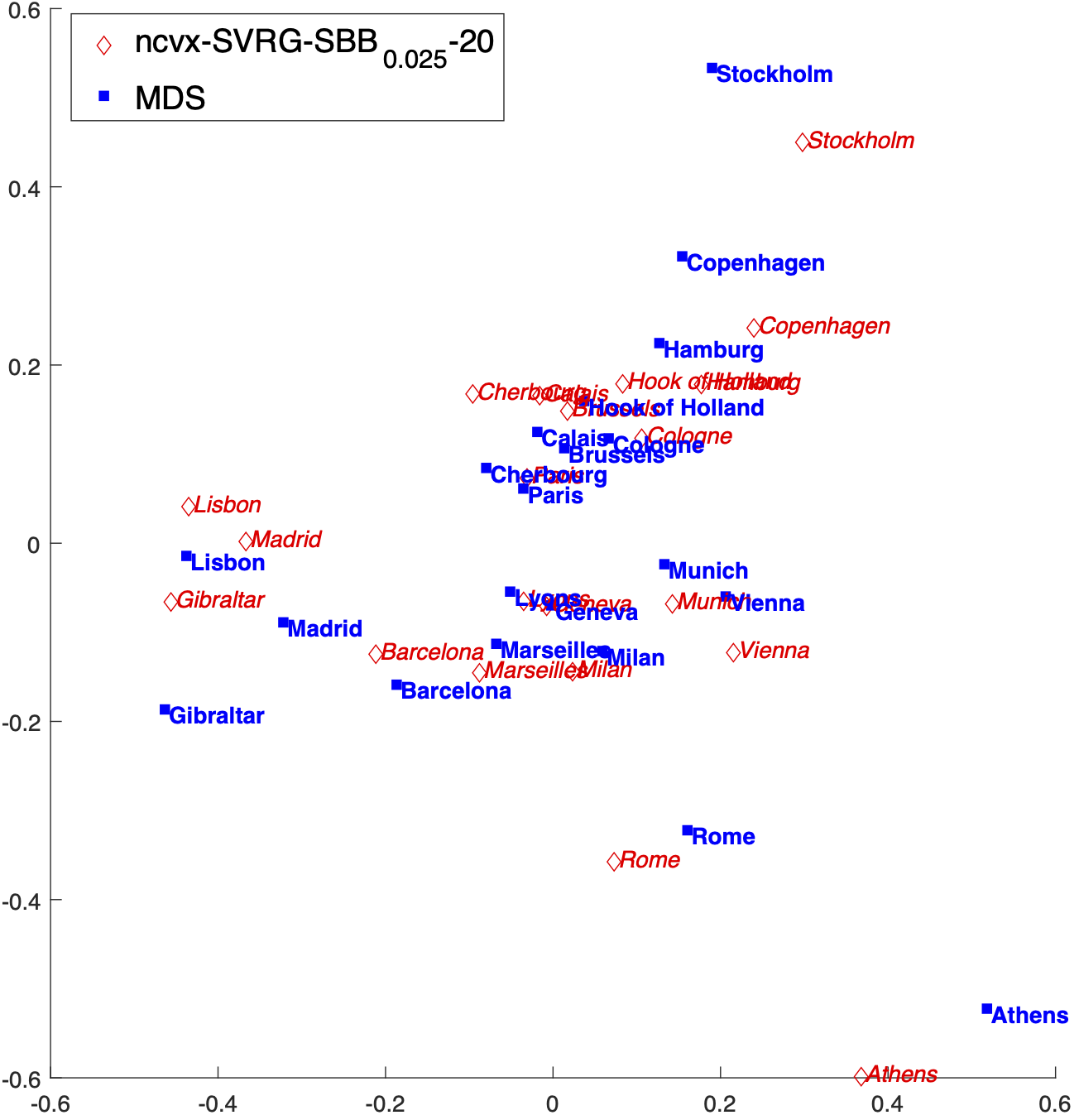}
		\caption{Non-Convex Result}
		\label{fig:eurodist:ncvx}
	}
	\end{subfigure}
	\caption{Visualization of the Eurodist dataset.}
	\label{fig:eurodist} 
\end{figure}
\\\\
\textbf{Competitors. }We compare the ordinal embedding results of all four models with the classical metric Multidimensional Scaling (\textit{MDS}) result. \textit{mMDS} is also known as Principal Coordinates Analysis (\textit{PCoA}) or Torgerson–Gower Scaling. It takes an input matrix giving dissimilarities between pairs of objects and outputs a coordinate matrix. Here we employ the road distance between a pair of two cities as their dissimilarities and obtain $2$d coordinates of each cities.  Note that there is no perfect embedding in the 2-dimensional space as the given distances are actually geodesic.
\\\\
\textbf{Results. }
Figure \ref{fig:eurodist} displays the Procrustes rotated embedding results of \textit{MDS} and ordinal embedding. Obviously, the full, explicit distance information helps \textit{MDS} to gain a better visualization. \textit{ODE} methods only adopt partial, relative comparisons and their visualizations are inferior to the competitor's result. However, the non-convex ordinal embedding methods still generate the reasonable representations of these $21$ cities. In the new coordinate system, all the positions are not contrary to geographical knowledge. Furthermore, the stochastic paradigm in \textit{SVRG-SBB}$_\epsilon$ dose not affect the quality of the embedding.

\subsection{Music Artists Similarity Comparison}
\begin{figure*}[thb!]
	\centering
	\begin{subfigure}{0.24\textwidth}
	{
		\includegraphics[width=\textwidth]{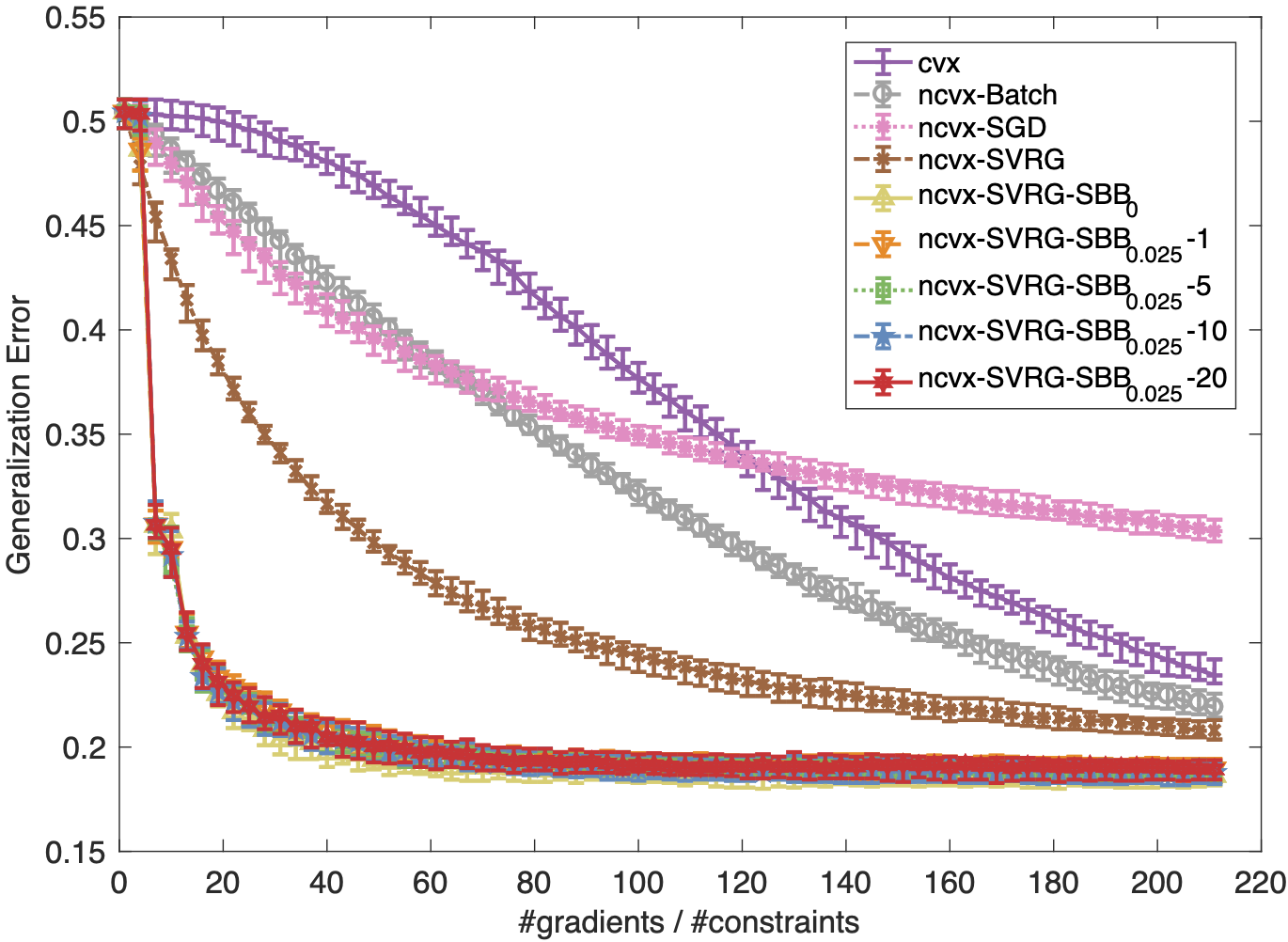}
		\caption{GNMDS}
		\label{fig:music:gnmds}
	}
	\end{subfigure}
	\begin{subfigure}{0.24\textwidth}
	{
		\includegraphics[width=\textwidth]{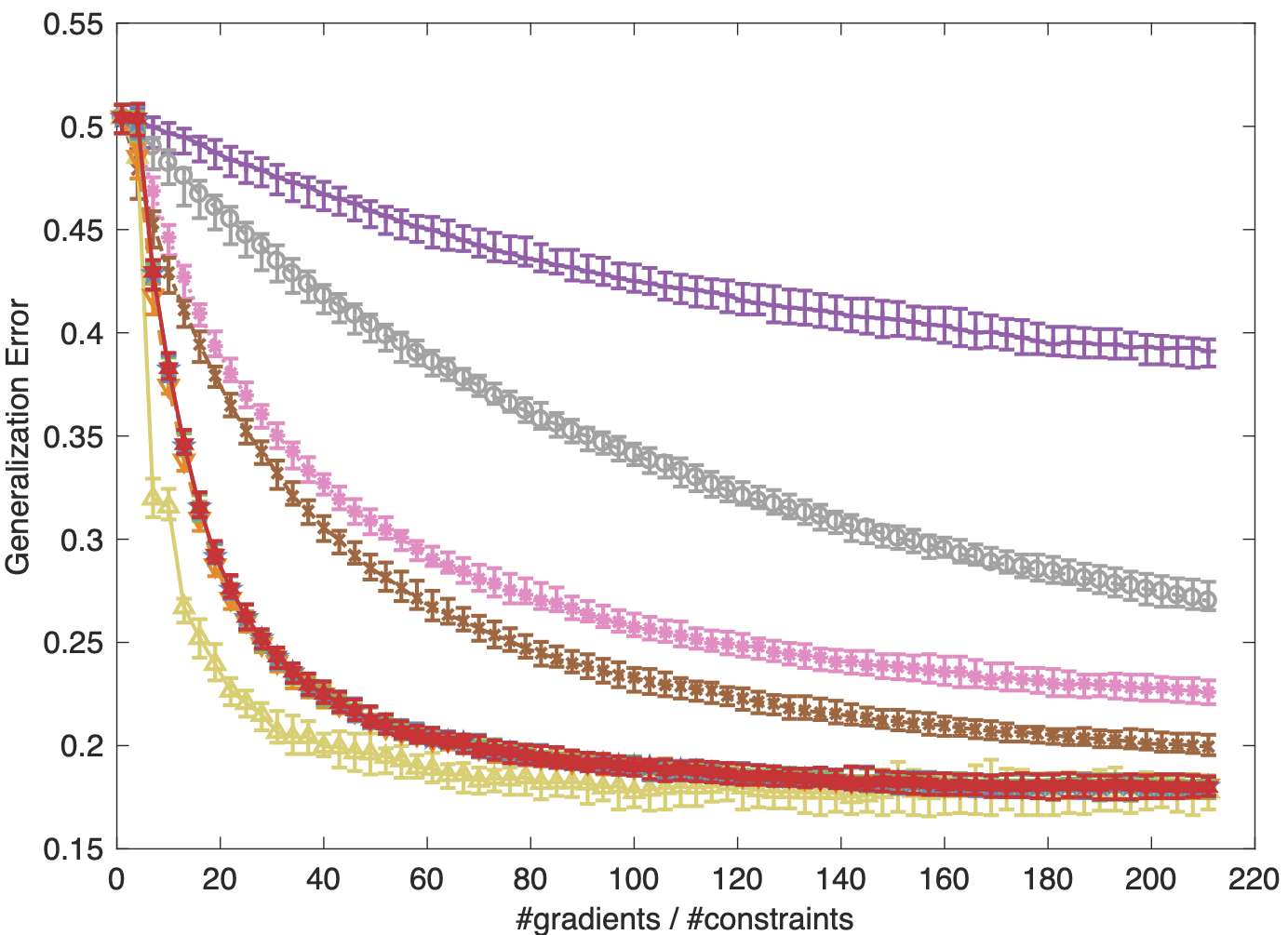}
		\caption{CKL}
		\label{fig:music:ckl}
	}
	\end{subfigure}
	\begin{subfigure}{0.24\textwidth}
	{
		\includegraphics[width=\textwidth]{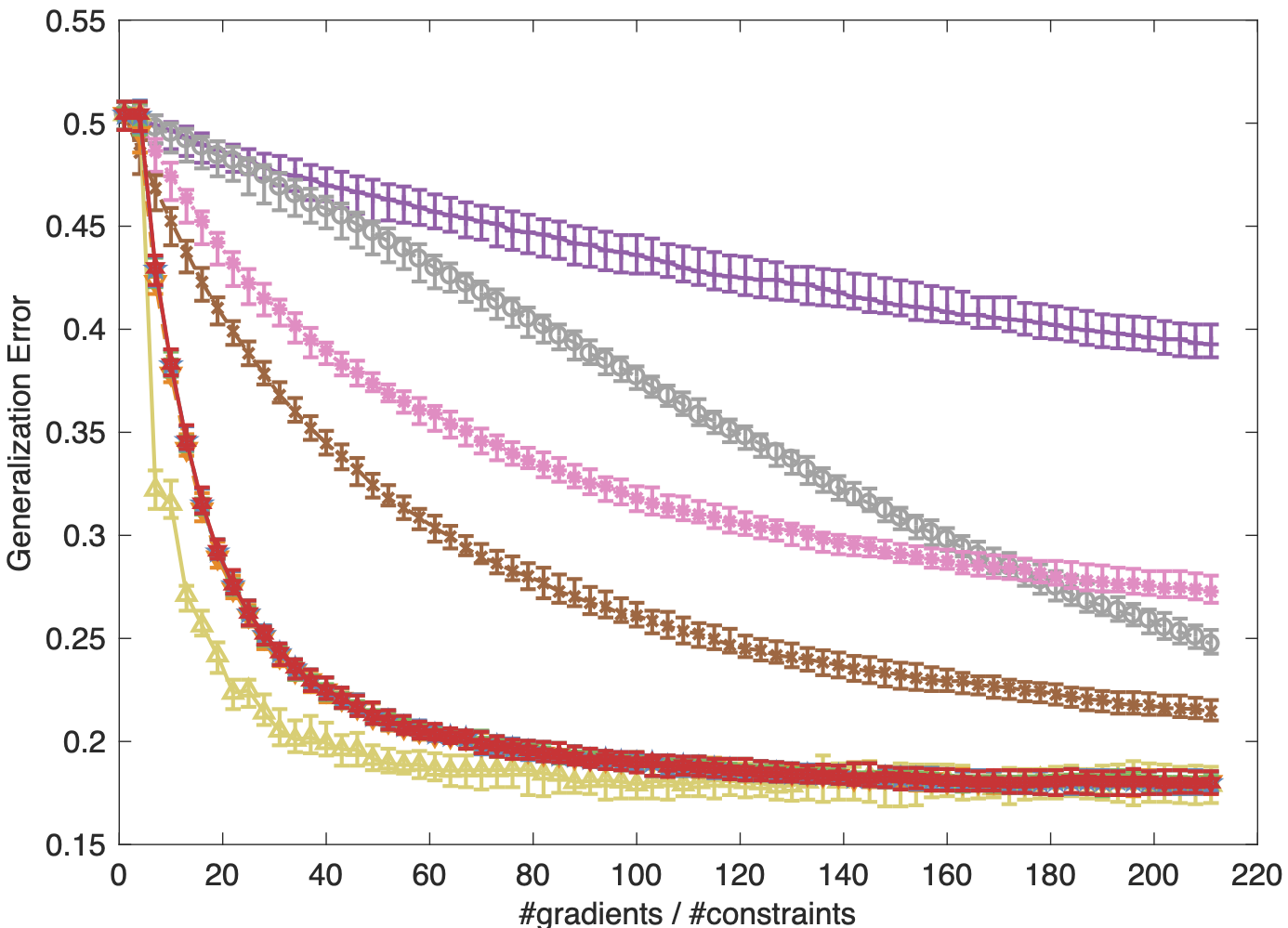}
		\caption{STE}
		\label{fig:music:ste}
	}
	\end{subfigure}
	\begin{subfigure}{0.24\textwidth}
	{
		\includegraphics[width=\textwidth]{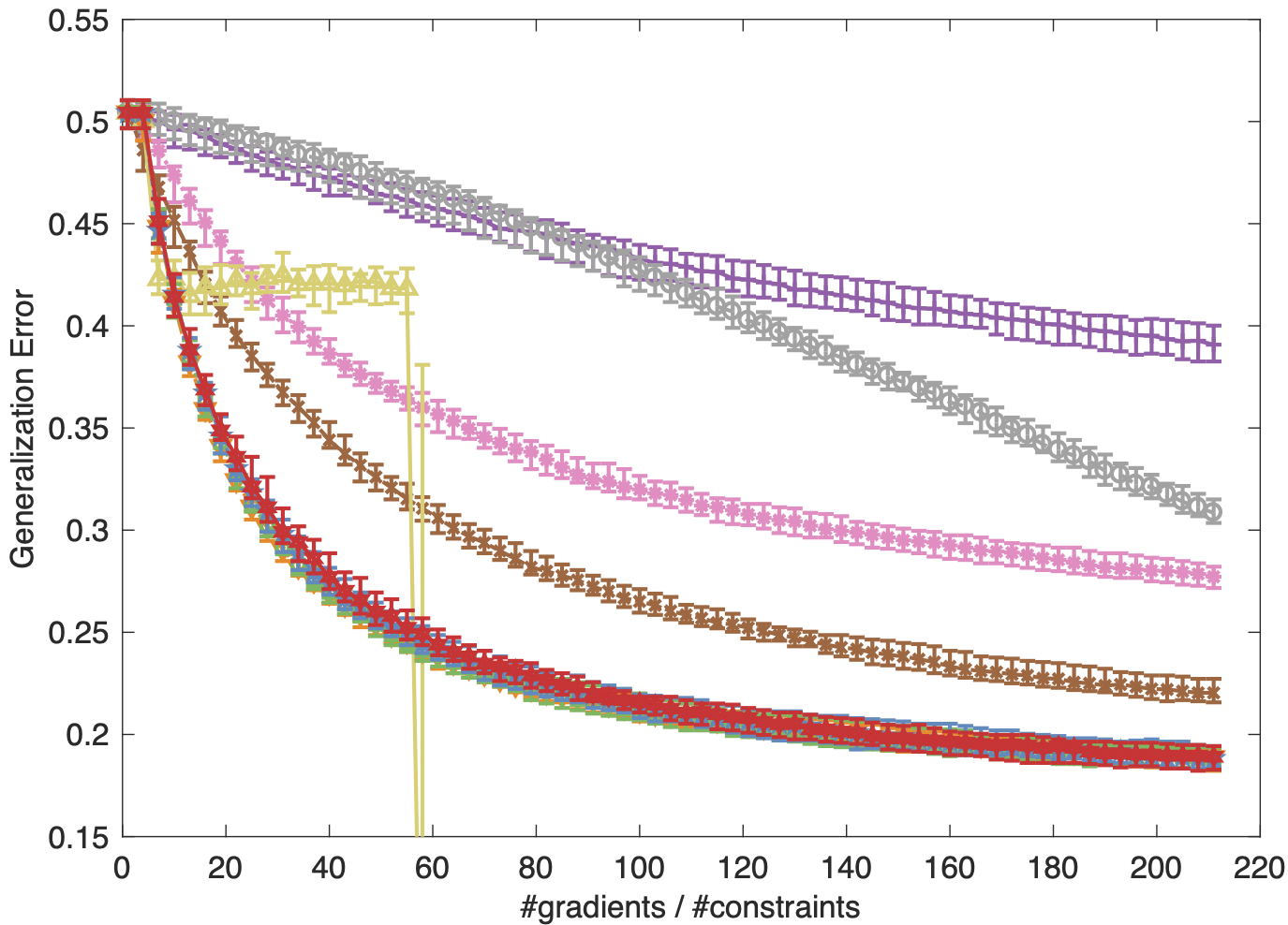}
		\caption{TSTE}
		\label{fig:music:tste}
	}
	\end{subfigure}
	\caption{Generalization errors of SGD, SVRG, SVRG-SBB and batch methods on the music artists dataset.}
	\label{fig:music} 
\end{figure*}

We implement all methods on a medium-scale real world dataset, called \textit{music artist similarity dataset}, which is collected by \cite{ellis2002quest} through a web-based survey.
\\\\
\textbf{Settings. }In this dataset, there are $412$ music artists involved in triple-wise comparisons based on music genre. The genre labels for all artists are gathered using Wikipedia\footnote{\url{https://www.wikipedia.org}}, to distinguish nine music genres (rock, metal, pop, dance, hip hop, jazz, country, gospel, and reggae). The similarity comparisons are labeled by $1,032$ participants. The number of triplets on the similarity between music artists is $213,472$. A triplet $(i, j, k)$ indicates that ``\textit{music artist $i$ is more similar to artist $j$ than artist $k$}''. Specifically, we use the data pre-processed by \cite{vandermaaten2012stochastic} via removing the inconsistent triplets from the original dataset. There are $9,107$ triplets for $N=400$ artists. We randomly sample $80$ percent of the comparisons as the training set and the left are the test data. The embedded dimension is $p=9$ as the number of genre category. All methods start with the same initialization $\boldsymbol{X}_0$ which is randomly generated.
\\\\
\textbf{Results. }
Each curve in Figure \ref{fig:music} shows the trend of test error of one method with respect to the epoch number. We execute 50 trials of each optimization method for the four objective functions. From Figure \ref{fig:music}, SBB$_\epsilon$ and its mini-batch variant can significantly speed up SVRG in terms of epoch number. Specially, the test error curves of four SVRG-SBB$_\epsilon$ ($\epsilon\geq 0$) methods decay much faster than those of SGD, SVRG and projection gradient descent at the initial epochs. We also observe that the SVRG-SBB$_0$ tends to failure as the step size is extremely large ($\sim10^{35}$). The SVRG-SBB$_\epsilon$ step size can  effectively avoid the occurrence of similar situations.

\subsection{Image Retrieval on SUN397}
\begin{table*}
  \centering
  \begin{subtable}[t]{0.45\textwidth}
    \caption{GNMDS}
    \begin{tabular}{l|cccc}
    \toprule
    \multicolumn{1}{c|}{\multirow{2}[4]{*}{method}} & \multicolumn{2}{c}{5\%} & \multicolumn{2}{c}{10\%} \\
    \cmidrule{2-5}                                  & MAP         & P         & MAP        & P           \\
    \midrule
    cvx                                             & 0.0259      & 0.0712    & 0.0255     & 0.0694      \\
    ncvx Batch                                      & 0.0474      & 0.1326    & 0.0386     & 0.1119      \\
    ncvx SGD                                        & 0.3120      & 0.4606    & 0.1865     & 0.3359      \\
    ncvx SVRG                                       & 0.3460      & 0.4949    & 0.2112     & 0.3631      \\
    ncvx SVRG-SBB$_0$                               & 0.4659      & 0.5783    & 0.2971     & 0.4408      \\
    ncvx SVRG-SBB$_\epsilon$-$1$                    & 0.4861      & 0.5993    & \textbf{0.3085}     & 0.4533      \\
    ncvx SVRG-SBB$_\epsilon$-$5$                    & 0.4861      & 0.5995    & \textbf{0.3085}     & \textbf{0.4536}      \\
    ncvx SVRG-SBB$_\epsilon$-$10$                   & 0.4867      & 0.5998    & 0.3083     & 0.4534      \\
    ncvx SVRG-SBB$_\epsilon$-$20$                   & \textbf{0.4872}      & \textbf{0.6005}    & \textbf{0.3085}     & 0.4535      \\
    \bottomrule
    \end{tabular}
    \label{tab:subtable_gnmds_sun}
  \end{subtable}
  \vspace{\fill}
  \begin{subtable}[t]{0.45\textwidth}
    \caption{CKL}
    \begin{tabular}{l|cccc}
    \toprule
    \multicolumn{1}{c|}{\multirow{2}[4]{*}{method}} & \multicolumn{2}{c}{5\%} & \multicolumn{2}{c}{10\%} \\
    \cmidrule{2-5}                                  & MAP         & P         & MAP        & P           \\
    \midrule
    cvx                                             & 0.0258      & 0.0704    & 0.0260     & 0.0711      \\
    ncvx Batch                                      & 0.0376      & 0.1087    & 0.0338     & 0.0992      \\
    ncvx SGD                                        & 0.4830      & 0.5926    & 0.1765     & 0.3192      \\
    ncvx SVRG                                       & 0.5445      & 0.6450    & 0.2149     & 0.3623      \\
    ncvx SVRG-SBB$_0$                               & 0.4662      & 0.5793    & 0.2037     & 0.3547      \\
    ncvx SVRG-SBB$_\epsilon$-$1$                    & 0.5645      & 0.6525    & 0.2805     & 0.4270      \\
    ncvx SVRG-SBB$_\epsilon$-$5$                    & 0.5652      & 0.6532    & 0.2809     & 0.4273      \\
    ncvx SVRG-SBB$_\epsilon$-$10$                   & \textbf{0.5653}      & \textbf{0.6533}    & \textbf{0.2810}     & \textbf{0.4274}      \\
    ncvx SVRG-SBB$_\epsilon$-$20$                   & \textbf{0.5653}      & \textbf{0.6533}    & \textbf{0.2810}     & \textbf{0.4274}      \\
    \bottomrule
    \end{tabular}
    \label{tab:subtable_ckl_sun}
  \end{subtable}

  \hspace{\fill}

  \begin{subtable}[t]{0.45\textwidth}
    \caption{STE}
    \begin{tabular}{l|cccc}
    \toprule
    \multicolumn{1}{c|}{\multirow{2}[4]{*}{method}} & \multicolumn{2}{c}{5\%} & \multicolumn{2}{c}{10\%} \\
    \cmidrule{2-5}                                  & MAP         & P         & MAP        & P           \\
    \midrule
    cvx                                             & 0.0257      & 0.0707    & 0.0262     & 0.0722      \\
    ncvx Batch                                      & 0.0304      & 0.0881    & 0.0306     & 0.0874      \\
    ncvx SGD                                        & 0.2820      & 0.4261    & 0.1948     & 0.3391      \\
    ncvx SVRG                                       & 0.5265      & 0.6302    & 0.3783     & 0.5105      \\
    ncvx SVRG-SBB$_0$                               & 0.4637      & 0.5799    & 0.0070     & 0.0555      \\
    ncvx SVRG-SBB$_\epsilon$-$1$                    & \textbf{0.6387}      & \textbf{0.7147}    & \textbf{0.4584}     & \textbf{0.5730}      \\
    ncvx SVRG-SBB$_\epsilon$-$5$                    & 0.6362      & 0.7129    & 0.4568     & 0.5719      \\
    ncvx SVRG-SBB$_\epsilon$-$10$                   & 0.6359      & 0.7127    & 0.4566     & 0.5719      \\
    ncvx SVRG-SBB$_\epsilon$-$20$                   & 0.6358      & 0.7126    & 0.4565     & 0.5717      \\
    \bottomrule
    \end{tabular}
    \label{tab:subtable_ste_sun}
  \end{subtable}
  \vspace{\fill}
  \begin{subtable}[t]{0.45\textwidth}
    \caption{TSTE}
    \begin{tabular}{l|cccc}
    \toprule
    \multicolumn{1}{c|}{\multirow{2}[4]{*}{method}} & \multicolumn{2}{c}{5\%} & \multicolumn{2}{c}{10\%} \\
    \cmidrule{2-5}                                  & MAP         & P         & MAP        & P           \\
    \midrule
    cvx                                             & 0.0257      & 0.0695    & 0.0261     & 0.0704      \\
    ncvx Batch                                      & 0.0270      & 0.0736    & 0.0273     & 0.0751      \\
    ncvx SGD                                        & 0.3864      & 0.5074    & 0.2381     & 0.3742      \\
    ncvx SVRG                                       & 0.7198      & 0.7746    & 0.5241     & 0.6221      \\
    ncvx SVRG-SBB$_0$                               & 0.0070      & 0.0555    & 0.0070     & 0.0555      \\
    ncvx SVRG-SBB$_\epsilon$-$1$                    & 0.8861      & \textbf{0.9034}    & 0.6859     & 0.7431      \\
    ncvx SVRG-SBB$_\epsilon$-$5$                    & \textbf{0.8898}      & 0.9030    & \textbf{0.6865}     & \textbf{0.7437}      \\
    ncvx SVRG-SBB$_\epsilon$-$10$                   & 0.8866      & 0.9000    & 0.6854     & 0.7432      \\
    ncvx SVRG-SBB$_\epsilon$-$20$                   & 0.8846      & 0.8995    & 0.6847     & 0.7426      \\
    \bottomrule
    \end{tabular}
    \label{tab:subtable_tste_sun}
  \end{subtable}
  \caption{Image retrieval performance (MAP and Precision@60) on SUN397}
  \label{tab:table2}
\end{table*}
We apply the ordinal embedding method with the proposed \textit{SVRG-SBB} algorithm on a real-world dataset, i.e., SUN 397. In the visual search task, we wish to see how the learned representation or embedding characterizes the ``relevance'' of the same image category and the ``discrimination'' of different image categories. Hence, we use the image representation obtained by ordinal embedding for image retrieval.
\\\\
\textbf{Settings. }We evaluate the effectiveness of the ordinal embedding methods for image retrieval on the SUN$397$ dataset. SUN$397$ consists of about $108,000$ images from $397$ scene categories. In SUN$397$, each image has a $1,600$-dimensional feature vector extracted by principle component analysis (\textit{PCA}) from $12,288$-dimensional Deep Convolution Activation Features \cite{Gong2014}. We form the training set by randomly sampling $1,080$ images from $18$ categories with $60$ images in each category. Only the training set is used for learning the representations from ordinal constraints and a nonlinear mapping from the original feature space to the embedded space. We denote the mapping as $\mathcal{M}:\mathbb{R}^{1600}\rightarrow\mathbb{R}^{18}$. The nonlinear mapping $\mathcal{M}$ is used to predict the embedded images in $\mathbb{R}^{18}$, which do not participate in the relative similarity comparisons. We use Regularized Least Square (\textit{RLS}) and \textit{RBF} kernel to solve the nonlinear mapping $\mathcal{M}$. The test set consists of $720$ images randomly chosen from $18$ categories with $40$ images in each category. We use labels of training images to generate the similarity comparisons. The ordinal constraints are generated like \cite{7410580}: we randomly sample two images $i,\ j$ which are from the same category and choose image $k$ from the left categories. As the semantic similarity between $i$ and $j$ in the same class is larger than the similarity between $i$ and $k$ in the different class, a triplet $(i, j, k)$ describes the relative similarity comparison. The total number of such triplets is $70,000$. Wrong triplets are then synthesized to simulate the human error in real-world data. We randomly sample $5\%, 10\%$ and $15\%$ triplets to exchange the positions of $j$ and $k$ in each triplet $(i,j,k)$.
\begin{figure*}[thb!]
  \centering
  \begin{subfigure}{0.20\textwidth}
  {
    \includegraphics[width=\textwidth]{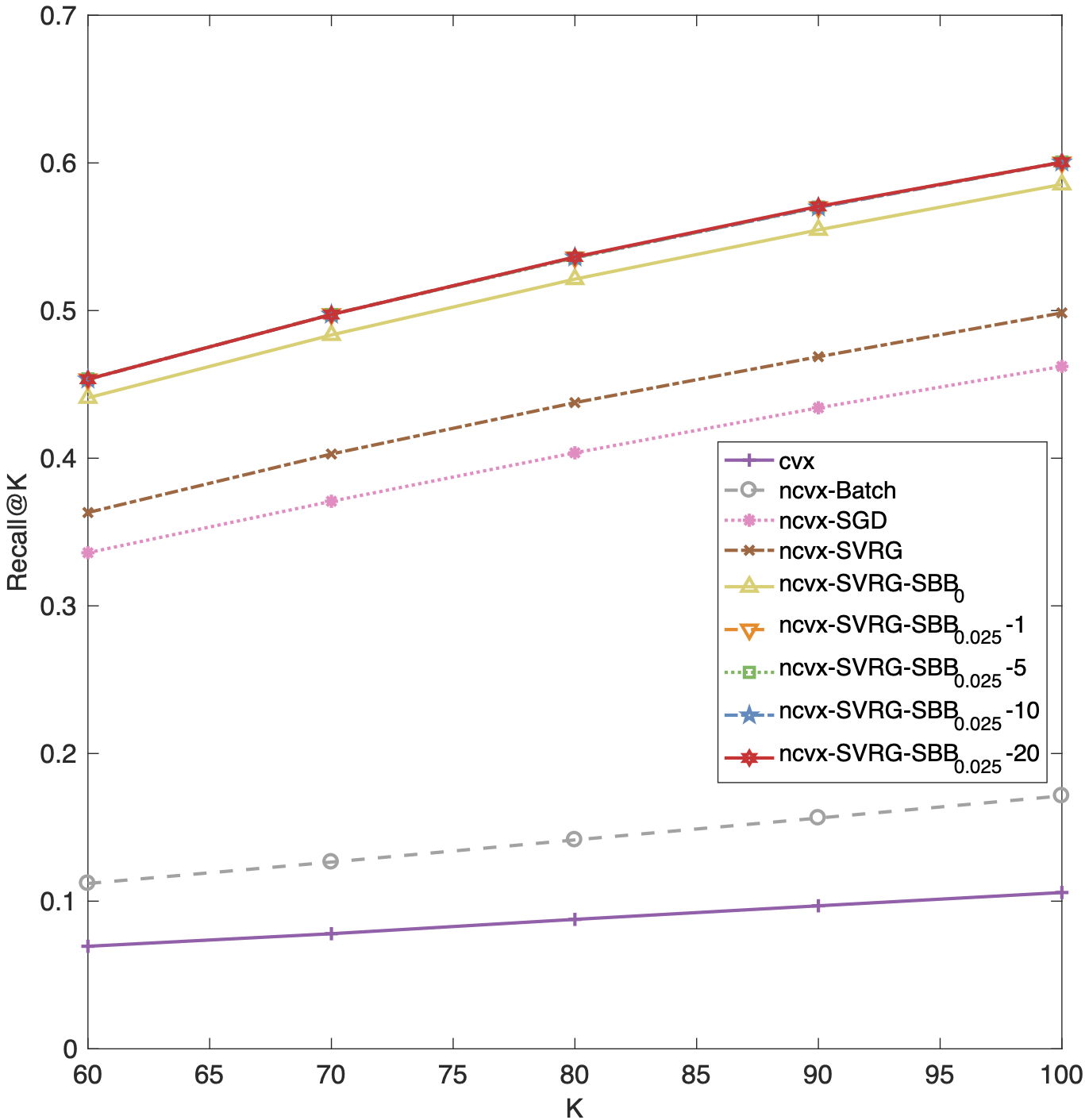}
    \caption{GNMDS}
    \label{fig:sun:gnmds}
  }
  \end{subfigure}
  \begin{subfigure}{0.20\textwidth}
  {
    \includegraphics[width=\textwidth]{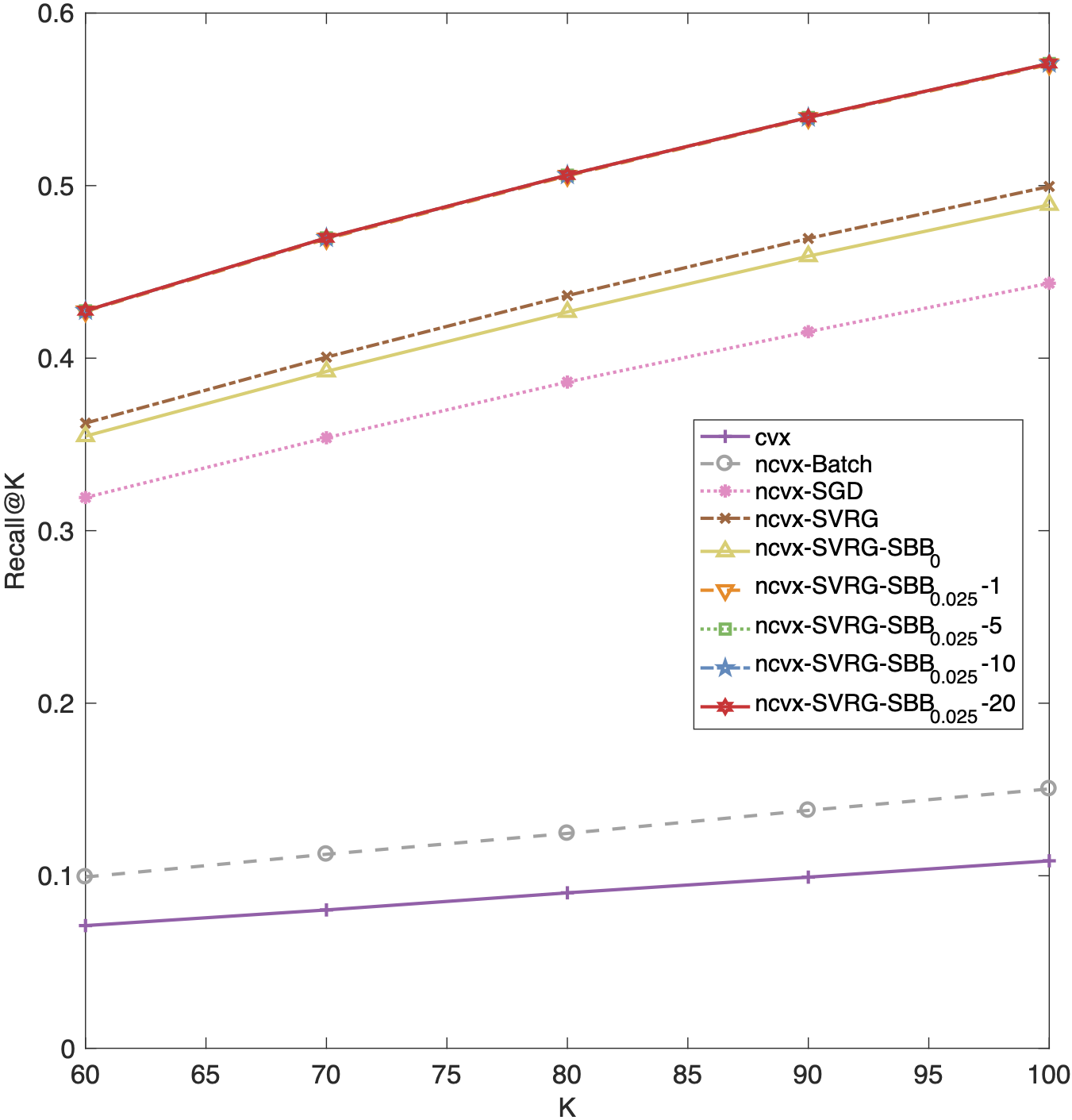}
    \caption{CKL}
    \label{fig:sun:ckl}
  }
  \end{subfigure}
  \begin{subfigure}{0.20\textwidth}
  {
    \includegraphics[width=\textwidth]{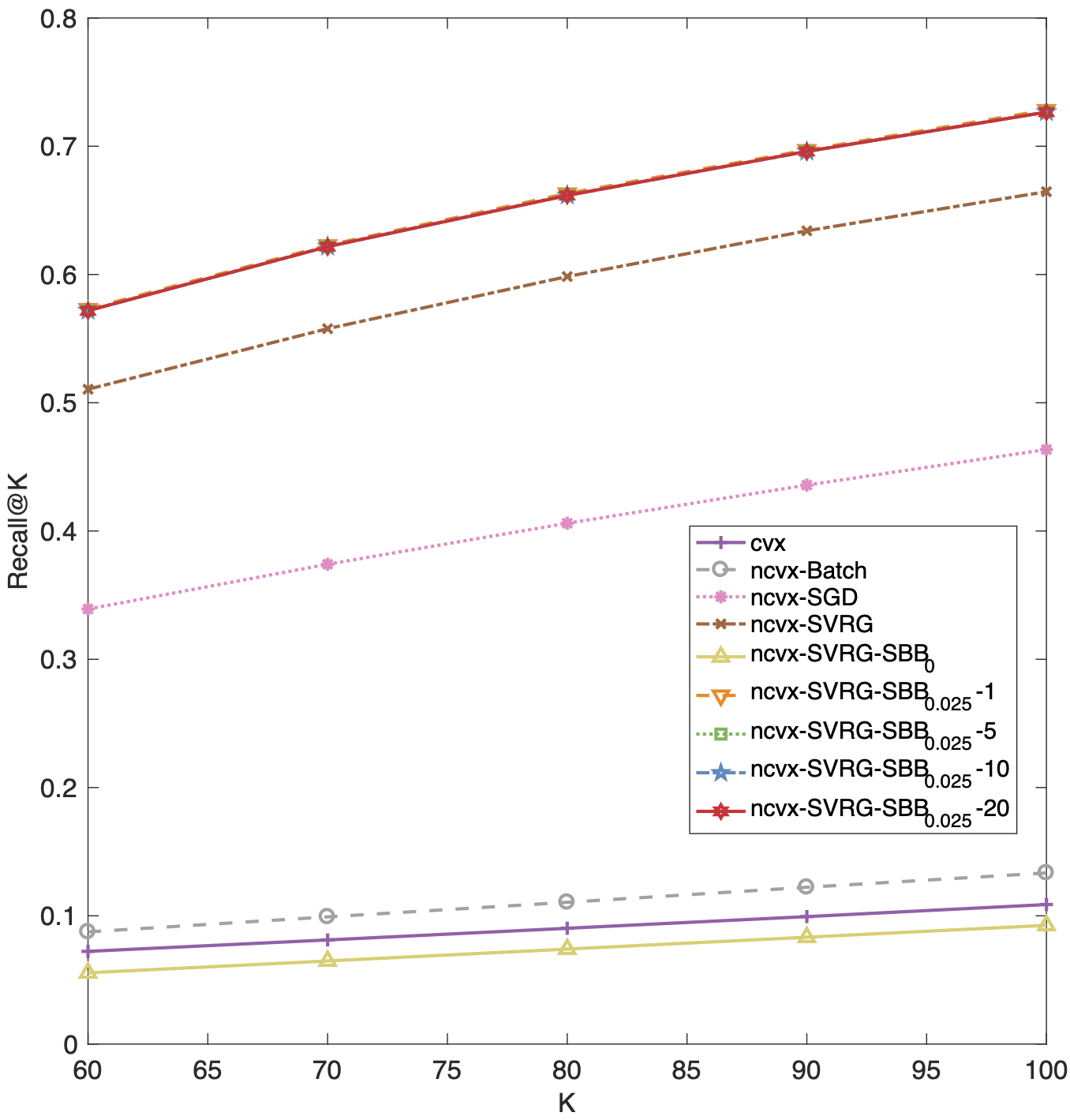}
    \caption{STE}
    \label{fig:sun:ste}
  }
  \end{subfigure}
  \begin{subfigure}{0.20\textwidth}
  {
    \includegraphics[width=\textwidth]{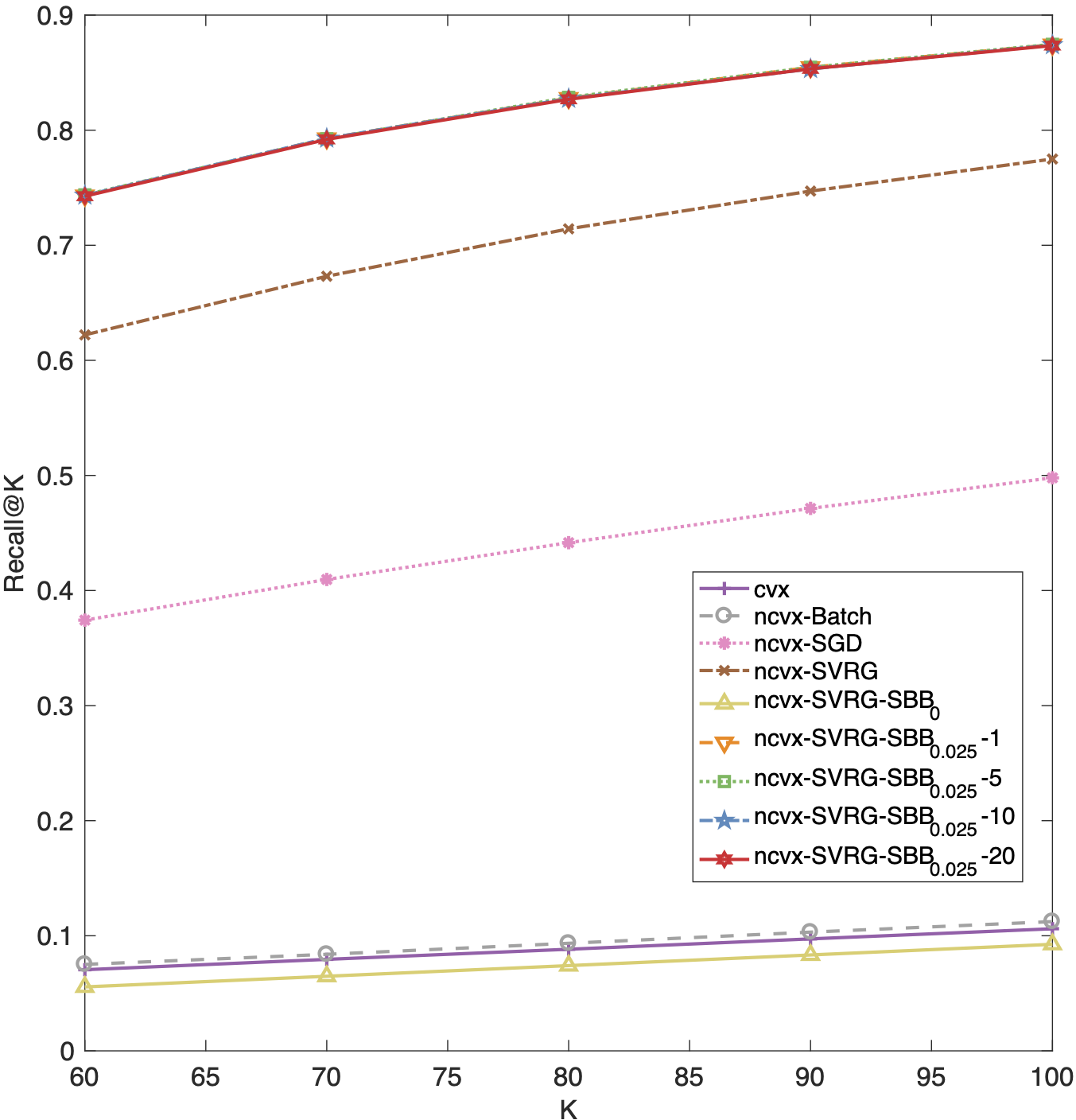}
    \caption{TSTE}
    \label{fig:sun:tste}
  }
  \end{subfigure}
  \caption{Recall@K with $10\%$ noise on SUN397.}
  \label{fig:sun} 
\end{figure*}
\\\\
\textbf{Evaluation Metrics. }To measure the effectiveness of various ordinal embedding methods for visual search, we consider three evaluation metrics, \textit{i.e.}, precision at top-K positions (\textit{Precision@K}), recall at top-K positions (\textit{Recall@K}), and Mean Average Precision (\textit{MAP}). Given the mapped feature $\boldsymbol{X}=\{\boldsymbol{x}_1, \boldsymbol{x}_2,\dots,\boldsymbol{x}_{720}\}\subset\mathbb{R}^{18}$ of test images and an image $i\in[n]$ belonging to the class $c_i$ as a query, we sort the images of training set according to the distances between their embedded feature in $\mathbb{R}^{18}$ and $\boldsymbol{x}_i$ in an ascending order as $\mathcal{R}_i$. True positives ($\textit{TP}^K_i$) are images correctly labeled as positives, which involve the images belonging to $c_i$ and listed within the top K positions in $\mathcal{R}_i$. False positives ($\text{FP}^K_i$) refer to negative examples incorrectly labeled as positives, which are the images belonging to $c_l(l\neq i)$ and listed within the top K in $\mathcal{R}_i$. True negatives ($\text{TN}^K_i$) correspond to negatives correctly labeled as negatives, which refer to the images belonging to $c_l(l\neq i)$ and listed after the top K in $\mathcal{R}_i$. Finally, false negatives ($\text{FN}^K_i$) refer to positive examples incorrectly labeled as negatives, which are relevant to the images belonging to class $c_i$ and listed after the top K in $\mathcal{R}_i$. We are able to define \textit{Precision@K} and \textit{Recall@K} as:
$
    \text{Precision@}K = \frac{1}{n}\sum_i^{n}p_i^K=\frac{1}{n}\sum_i^{n}\frac{\text{TP}^K_i}{\text{TP}^K_i+\text{FP}^K_i}
$
and
$
    \text{Recall@}K = \frac{1}{n}\sum_i^{n}r_i^K=\frac{1}{n}\sum_i^{n}\frac{\text{TP}^K_i}{\text{TP}^K_i+\text{FN}^K_i}.
$
These two measurements are both single-valued metric based on the top K ranking order of training images refered to the query image. It is also desirable to consider the total order of the training images which are in the same category as the query image. By computing precision and recall at every position in the ranked sequence for query $\boldsymbol{x}_i$, one can plot a precision-recall curve, plotting precision $p_i(r)$ as a function of recall $r_i$. Average Precision (\textit{AP}) computes the average value of $p_i(r)$ over the interval from $r_i=0$ to $r_i=1$:
$
	\text{AP}_i = \int_{0}^1 p_i(r_i)dr_i,
$
which is the area under precision-recall curve. This integral can be replaced with a finite sum over every position $s$ in the ranked sequence of the embedding:
$
\text{AP}_i = \sum_{s=1}^{40} p_i(s)\cdot\triangle r_i(s),
$
where $\triangle r_i(s)$ is the change in recall from items $s-1$ to $s$. The MAP used in this paper is defined as $\text{MAP} = \frac{1}{n}\sum_{i=1}^{n}\text{AP}_i$.
\\\\
\textbf{Results.} The experiment results are shown in Table \ref{tab:table2} and Figure \ref{fig:sun}. With $K$ varying from $40$ to $100$, we observe that non-convex \textit{SVRG-SBB}$_\epsilon$ consistently achieves the superior Precision@K, Recall@K and MAP results against the other methods with the same gradient calculation. The results illustrate that \textit{SVRG-SBB}$_\epsilon$ is more suitable for non-convex objective functions than \textit{SVRG-SBB}$_0$. \textit{SVRG-SBB}$_\epsilon$ has a very promising potential in practice, because it generates appropriate step sizes automatically while running the algorithm and the result is robust. Moreover, under our setting, all the ordinal embedding methods achieve reasonable results for image retrieval. It illustrates that high-quality relative similarity comparisons can be used for learning meaningful representation of massive data, thereby making it easier to extract useful information in other applications.

\section{Conclusions}

In this paper, we propose a stochastic non-convex framework for the ordinal embedding problem. A novel stochastic gradient descent algorithm called \textit{SVRG-SBB} is applied to solve this non-convex formulation. The proposed \textit{SVRG-SBB} is a variant of \textit{SVRG} method. It incorporate with the so-called stabilized \textit{BB} (\textit{SBB}) step size, a new, stable and adaptive step size introduced in this paper. The motivation of the \textit{SBB} step size is to overcome the instability of the original \textit{BB} step size when the strongly convexity is absent. We also establish the $O(1/T)$ convergence rate of \textit{SVRG-SBB}. Such a convergence rate is comparable to the existing best convergence results of \textit{SVRG} in the literature. Furthermore, we derive the analysis to mini-batch variants of \textit{SVRG-SBB}. We also analyze the \textit{PL} function on which \textit{SVRG-SBB} attains linear convergence to the global optimum. A series of simulations and real-world data experiments are implemented to demonstrate the effectiveness of the proposed \textit{SVRG-SBB} for the ordinal embedding problem. The proposed \textit{SVRG-SBB} outperforms most of the state-of-the-art methods from the perspective of computational cost.



{
  \small
  \bibliographystyle{IEEEtran}
  \bibliography{IEEEtran}
}

{
\onecolumn
\appendix
\section*{A. Proof of Lemma \ref{keylemma-minibatch}}
\label{appendix:Lemma 1}
To prove Lemma \ref{keylemma-minibatch}, we need the following lemma,
which establishes the bound of the norm of variance gradient ${\boldsymbol{u}}_t^{s+1}$.
\begin{lemma}[Bounded $\mathbb{E}\left(\|{\boldsymbol{u}}_t^{s+1}\|^2\right)$]
\label{lemm:bound-ut}
Under assumptions of Lemma \ref{keylemma-minibatch}, the following holds
\begin{align*}
&\mathbb{E}\left[\left\|\boldsymbol{x}_t^{s+1}\right\|^2\right]
\leq 2\mathbb{E}\left[\left\|\nabla f(\boldsymbol{x}_t^{s+1})\right\|^2\right] + \frac{2L^2}{b} \mathbb{E}\left[\left\|\boldsymbol{x}_t^{s+1} - \tilde{\boldsymbol{x}}^s\right\|^2\right].
\end{align*}
\end{lemma}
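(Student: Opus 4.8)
(Here the quantity to be bounded is the norm of the mini-batch variance-reduced gradient $\boldsymbol{u}_t^{s+1} = \frac{1}{b}\sum_{i_t\in\mathcal{I}_t}\big(\nabla f_{i_t}(\boldsymbol{x}_t^{s+1}) - \nabla f_{i_t}(\tilde{\boldsymbol{x}}^s)\big) + \nabla f(\tilde{\boldsymbol{x}}^s)$, which is the object appearing in the update of Algorithm~\ref{alg:svrg-sbb}.) The plan is to exploit the unbiasedness of this estimator together with the $L$-smoothness of each $f_i$. Set $\boldsymbol{\zeta}_i := \nabla f_i(\boldsymbol{x}_t^{s+1}) - \nabla f_i(\tilde{\boldsymbol{x}}^s)$, so that $\frac{1}{n}\sum_{i=1}^n \boldsymbol{\zeta}_i = \nabla f(\boldsymbol{x}_t^{s+1}) - \nabla f(\tilde{\boldsymbol{x}}^s) =: \bar{\boldsymbol{\zeta}}$. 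Since $\mathcal{I}_t$ is drawn uniformly with replacement, conditioned on $\boldsymbol{x}_t^{s+1}$ and $\tilde{\boldsymbol{x}}^s$ the $b$ indices in $\mathcal{I}_t$ are i.i.d.\ uniform on $[n]$, hence $\mathbb{E}_{\mathcal{I}_t}[\boldsymbol{u}_t^{s+1}] = \nabla f(\boldsymbol{x}_t^{s+1})$. First I would apply $\|\boldsymbol{a}+\boldsymbol{b}\|^2 \le 2\|\boldsymbol{a}\|^2 + 2\|\boldsymbol{b}\|^2$ with $\boldsymbol{a} = \nabla f(\boldsymbol{x}_t^{s+1})$ and $\boldsymbol{b} = \boldsymbol{u}_t^{s+1} - \nabla f(\boldsymbol{x}_t^{s+1})$, which reduces the claim to showing $\mathbb{E}\big[\|\boldsymbol{u}_t^{s+1} - \nabla f(\boldsymbol{x}_t^{s+1})\|^2\big] \le \tfrac{L^2}{b}\,\mathbb{E}\big[\|\boldsymbol{x}_t^{s+1} - \tilde{\boldsymbol{x}}^s\|^2\big]$.

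For the variance term, I would rewrite $\boldsymbol{u}_t^{s+1} - \nabla f(\boldsymbol{x}_t^{s+1}) = \frac{1}{b}\sum_{i_t\in\mathcal{I}_t}(\boldsymbol{\zeta}_{i_t} - \bar{\boldsymbol{\zeta}})$ (using $|\mathcal{I}_t|=b$). Because the summands are i.i.d.\ and centered, the cross terms vanish under $\mathbb{E}_{\mathcal{I}_t}$, giving $\mathbb{E}_{\mathcal{I}_t}\|\boldsymbol{u}_t^{s+1} - \nabla f(\boldsymbol{x}_t^{s+1})\|^2 = \frac{1}{b}\,\mathbb{E}_{i}\|\boldsymbol{\zeta}_i - \bar{\boldsymbol{\zeta}}\|^2 \le \frac{1}{b}\,\mathbb{E}_{i}\|\boldsymbol{\zeta}_i\|^2$, where the last inequality is the standard ``variance bounded by second moment'' estimate $\mathbb{E}\|Z - \mathbb{E}Z\|^2 \le \mathbb{E}\|Z\|^2$. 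Then the $L$-Lipschitz continuity of each $\nabla f_i$ (Eq.~\eqref{definition:lsmooth}) yields $\mathbb{E}_{i}\|\boldsymbol{\zeta}_i\|^2 = \frac{1}{n}\sum_{i=1}^n \|\nabla f_i(\boldsymbol{x}_t^{s+1}) - \nabla f_i(\tilde{\boldsymbol{x}}^s)\|^2 \le L^2 \|\boldsymbol{x}_t^{s+1} - \tilde{\boldsymbol{x}}^s\|^2$. Combining the two displays and taking the outer expectation over the randomness accumulated before iteration $t$ gives the stated bound.

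The only point that requires care is the bookkeeping of the conditional expectations: one must first condition on the current iterate $\boldsymbol{x}_t^{s+1}$ and the snapshot $\tilde{\boldsymbol{x}}^s$, then use that the mini-batch is sampled \emph{with replacement} so the $b$ summands are genuinely i.i.d.\ (this is exactly what produces the clean $1/b$ variance reduction rather than a finite-population correction), and only afterwards take the total expectation. Everything else is routine: the elementary inequality $\|\boldsymbol{a}+\boldsymbol{b}\|^2\le 2\|\boldsymbol{a}\|^2+2\|\boldsymbol{b}\|^2$, the variance-versus-second-moment bound, and $L$-smoothness. Notably, no convexity of $f$ or of the $f_i$ is used, which is consistent with the non-convex setting of the paper.
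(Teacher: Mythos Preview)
Your proposal is correct and follows essentially the same approach as the paper's proof: both decompose $\boldsymbol{u}_t^{s+1}$ as $\nabla f(\boldsymbol{x}_t^{s+1})$ plus a centered term, apply the elementary inequality $\|\boldsymbol{a}+\boldsymbol{b}\|^2\le 2\|\boldsymbol{a}\|^2+2\|\boldsymbol{b}\|^2$, then exploit the i.i.d.\ mini-batch sampling to get the $1/b$ factor, the variance-bounded-by-second-moment inequality, and finally $L$-smoothness of each $f_i$. The only difference is notational (the paper names the mini-batch average $\boldsymbol{v}_t^{s+1}$ rather than introducing your $\boldsymbol{\zeta}_i$), and your explicit handling of the conditional-then-total expectation is, if anything, slightly more careful than the paper's presentation.
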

\begin{proof}
Let $\boldsymbol{v}_t^{s+1} := \frac{1}{b} \sum_{i_t\in {\cal I}_t} \left(\nabla f_{i_t}\left(\boldsymbol{x}_t^{s+1}\right) - \nabla f_{i_t}\left(\tilde{\boldsymbol{x}}^{s}\right)\right)$, then $\boldsymbol{u}_t^{s+1} = {\boldsymbol{v}}_t^{s+1} + \nabla f\left(\tilde{\boldsymbol{x}}^s\right)$.
Thus,
\begin{equation*}
    \begin{aligned}
    & \mathbb{E}\left[\left\|{\boldsymbol{u}}_t^{s+1}\right\|^2\right]&=&\ \ \ \ \mathbb{E}\left[\left\|{\boldsymbol{v}}_t^{s+1} - \left(\nabla f\left({\boldsymbol{x}}_t^{s+1}\right) - \nabla f\left(\tilde{\boldsymbol{x}}^s\right)\right) + \nabla f\left(\boldsymbol{x}_t^{s+1}\right)\right\|^2\right]\\
    & &\leq&\ \ \ \ 2 \mathbb{E}\left[\left\|\nabla f(\boldsymbol{x}_t^{s+1})\right\|^2\right] + 2\mathbb{E}\left[\left\|{\boldsymbol{v}}_t^{s+1} - \mathbb{E}\left[{\boldsymbol{v}}_t^{s+1}\right]\right\|^2\right]\\
    & &=&\ \ \ \ 2 \mathbb{E}\left[\left\|\nabla f\left({\boldsymbol{x}}_t^{s+1}\right)\right\|^2\right]+\frac{2}{b^2} \mathbb{E} \left[\left\|\sum_{i_t\in {\cal I}_t}\left(\nabla f_{i_t}\left({\boldsymbol{x}}_t^{s+1}\right)-\nabla f_{i_t}\left(\tilde{\boldsymbol{x}}^s\right) - \mathbb{E}\left({\boldsymbol{v}}_t^{s+1}\right) \right)\right\|^2 \right]\\
    & &\leq&\ \ \ \ 2 \mathbb{E}\left[\left\|\nabla f\left({\boldsymbol{x}}_t^{s+1}\right)\right\|^2\right] + \frac{2}{b}\mathbb{E}\left[\left\|\nabla f_{i_t}\left({\boldsymbol{x}}_t^{s+1}\right)-\nabla f_{i_t}\left(\tilde{\boldsymbol{x}}^s\right)\right\|^2\right]\\
    & &\leq&\ \ \ \ 2 \mathbb{E}\left[\left\|\nabla f\left({\boldsymbol{x}}_t^{s+1}\right)\right\|^2\right] + \frac{2L^2}{b} \mathbb{E}\left[\left\|{\boldsymbol{x}}_t^{s+1} - \tilde{\boldsymbol{x}}^s\right\|^2\right],
    \end{aligned}
\end{equation*}
where the first inequality holds for the basic inequality, i.e., $\|\boldsymbol{a}+\boldsymbol{b}\|^2 \leq 2(\|\boldsymbol{a}\|^2+\|\boldsymbol{b}\|^2)$ for any two vectors of the same sizes, the second inequality holds for the fact that $i_t$ are drawn uniformly randomly and independently from $\{1,2,\ldots,n\}$ and noting that for a random variable $\xi$, $\mathbb{E}[\|\xi - \mathbb{E}[\xi]\|^2] \leq \mathbb{E}[\|\xi\|^2]$, and the final inequality holds for the $L$-smoothness of $f_{i_t}$.
\end{proof}

\noindent Based on this lemma, we prove Lemma \ref{keylemma-minibatch} as follows.

\begin{proof}[\textbf{Proof of Lemma \ref{keylemma-minibatch}}]
By the $L$-smoothness of each $f_i$ (implying the $L$-smoothness of $f$), there holds
\begin{equation*}
    \begin{aligned}
        & \mathbb{E}\left[f\left({\boldsymbol{x}}_{t+1}^{s+1}\right)\right]&\leq&\ \ \ \mathbb{E}\left[f\left({\boldsymbol{x}}_t^{s+1}\right)\right]+\frac{L}{2}\mathbb{E}\left[\left\|{\boldsymbol{x}}_{t+1}^{s+1} - {\boldsymbol{x}}_t^{s+1}\right\|^2\right]+\mathbb{E}\left[\left\langle \nabla f\left({\boldsymbol{x}}_t^{s+1}\right),\ {\boldsymbol{x}}_{t+1}^{s+1} - {\boldsymbol{x}}_t^{s+1} \right\rangle\right]\\
        & &=&\ \ \ \mathbb{E}\left[f\left({\boldsymbol{x}}_t^{s+1}\right)\right] - b\eta_{\epsilon,s}\mathbb{E}\left[\left\|\nabla f\left({\boldsymbol{x}}_t^{s+1}\right)\right\|^2\right]+\frac{Lb^2 \eta_{\epsilon,s}^2}{2} \mathbb{E}\left[\left\|{\boldsymbol{x}}_t^{s+1}\right\|^2\right], \label{eq:keyineq1}
    \end{aligned}
\end{equation*}
where the equality holds for
\begin{equation*}
        \mathbb{E}\left[\langle \nabla f\left({\boldsymbol{x}}_t^{s+1}\right), {\boldsymbol{x}}_{t+1}^{s+1} - {\boldsymbol{x}}_t^{s+1}\rangle\right]=-b\eta_{\epsilon,s}\mathbb{E}\left[\left\langle \nabla f\left({\boldsymbol{x}}_t^{s+1}\right), \mathbb{E}\left[{\boldsymbol{x}}_t^{s+1}\right]\right\rangle\right]=-b\eta_{\epsilon,s} \mathbb{E}\left[\left\|\nabla f\left({\boldsymbol{x}}_t^{s+1}\right)\right\|^2\right].
\end{equation*}
Moreover, note that
\begin{equation*}
    \label{eq:keyineq2}
    \begin{aligned}
        & \mathbb{E}\left[\left\|{\boldsymbol{x}}_{t+1}^{s+1} - \tilde{\boldsymbol{x}}^s\right\|^2\right]&=&\ \ \ \mathbb{E}\left[\left\|{\boldsymbol{x}}_{t+1}^{s+1} - {\boldsymbol{x}}_{t}^{s+1} + {\boldsymbol{x}}_{t}^{s+1}-\tilde{\boldsymbol{x}}^s\right\|^2\right]\\
        & &=&\ \ \ \mathbb{E}\left[\left\|{\boldsymbol{x}}_{t+1}^{s+1} - {\boldsymbol{x}}_{t}^{s+1}\right\|^2\right] + \mathbb{E}\left[\left\|{\boldsymbol{x}}_{t}^{s+1}-\tilde{\boldsymbol{x}}^s\right\|^2\right]+2\mathbb{E}\left[\left\langle {\boldsymbol{x}}_{t+1}^{s+1} - {\boldsymbol{x}}_{t}^{s+1}, {\boldsymbol{x}}_{t}^{s+1}-\tilde{\boldsymbol{x}}^s\right\rangle\right]\\
        & &=&\ \ \ \mathbb{E}\left[\left\|{\boldsymbol{x}}_t^{s+1} - \tilde{\boldsymbol{x}}^s\right\|^2\right] + b^2 \eta^2_{\epsilon,s}\mathbb{E}\left[\left\|{\boldsymbol{u}}_t^{s+1}\right\|^2\right]-2b\eta_{\epsilon,s}\mathbb{E}\left[\left\langle \nabla f\left({\boldsymbol{x}}_t^{s+1}\right), {\boldsymbol{x}}_t^{s+1} - \tilde{\boldsymbol{x}}^s\right\rangle\right]\\
        & &\leq&\ \ \ \mathbb{E}\left[\left\|{\boldsymbol{x}}_t^{s+1} - \tilde{\boldsymbol{x}}^s\right\|^2\right] + b^2 \eta^2_{\epsilon,s}\mathbb{E}\left[\left\|{\boldsymbol{u}}_t^{s+1}\right\|^2\right]+2b\eta_{\epsilon,s} \left[\frac{1}{2\beta_s}\mathbb{E}\left[\left\|\nabla f({\boldsymbol{x}}_t^{s+1})\right\|^2\right] + \frac{\beta_s}{2} \left\|{\boldsymbol{x}}_t^{s+1} - \tilde{\boldsymbol{x}}^s\right\|^2\right]\\
        & &=&\ \ \ \left(1+b\eta_{\epsilon,s}\beta_s\right)\mathbb{E}\left[\left\|{\boldsymbol{x}}_t^{s+1} - \tilde{\boldsymbol{x}}^s\right\|^2\right]+b\eta_{\epsilon,s}\beta_s^{-1}\mathbb{E}\left[\left\|\nabla f({\boldsymbol{x}}_t^{s+1})\right\|^2\right] + b^2\eta^2_{\epsilon,s}\mathbb{E}\left[\left\|{\boldsymbol{u}}_t^{s+1}\right\|^2\right],
    \end{aligned}
\end{equation*}
where the third equality holds for the iterate of the proposed SVRG method, i.e., ${\boldsymbol{x}}_{t+1}^{s+1} = {\boldsymbol{x}}_t^{s+1} -b\eta_{\epsilon,s}{\boldsymbol{u}}_t^{s+1}$, and the inequality holds for the Young's inequality with some positive constant $\beta_s$.

Now consider the Lyapunov function
\begin{align*}
R_{t,s+1} := \mathbb{E}\left[f\left({\boldsymbol{x}}_{t}^{s+1}\right)\right] + c_{t,s} \mathbb{E}\left[\left\|{\boldsymbol{x}}_t^{s+1} - \tilde{\boldsymbol{x}}^s\right\|^2\right],
\end{align*}
where $c_{t,s}$ is specified in \eqref{eq:ct}.
By \eqref{eq:keyineq1} and \eqref{eq:keyineq2}, there holds
\begin{equation*}
    \begin{aligned}
        & R_{t+1,s+1}&\leq&\ \ \ \left(1+b\eta_{\epsilon,s}\beta_s\right)c_{t+1,s}\mathbb{E}\left[\left\|{\boldsymbol{x}}_t^{s+1}-\tilde{\boldsymbol{x}}_s\right\|^2\right]-b \eta_{\epsilon,s}\left(1-c_{t+1,s}\beta_s^{-1}\right) \mathbb{E}\left[\left\|\nabla f\left({\boldsymbol{x}}_t^{s+1}\right)\right\|^2\right]\\
        & & &\ \ \ +b^2\eta^2_{\epsilon,s}\left(c_{t+1,s} + \frac{L}{2}\right)\mathbb{E}\left[\left\|{\boldsymbol{u}}_t^{s+1}\right\|^2\right]+\mathbb{E}\left[f\left({\boldsymbol{x}}_t^{s+1}\right)\right].
    \end{aligned}
\end{equation*}
Plugging the bound of $\mathbb{E}\left[\left\|u_t^{s+1}\right\|^2\right]$ established in Lemma \ref{lemm:bound-ut} into the above inequality yields
\begin{align*}
R_{t+1,s+1}\leq \mathbb{E}\left[f\left({\boldsymbol{x}}_t^{s+1}\right)\right] + c_{t,s} \mathbb{E}\left[\left\|{\boldsymbol{x}}_t^{s+1} - \tilde{\boldsymbol{x}}^s\right\|^2\right]-b\eta_{\epsilon,s} \left[1-\frac{c_{t+1,s}}{\beta_s} - 2b\eta_{\epsilon,s}\left(c_{t+1,s}+\frac{L}{2}\right) \right] \mathbb{E}\left[\left\|\nabla f\left({\boldsymbol{x}}_t^{s+1}\right)\right\|^2\right],
\end{align*}
which concludes this lemma via the definition of $\Gamma_{t,s}$ \eqref{eq:Gammas}.
\end{proof}
\vspace{-0.75cm}
\section*{B. Proof of Theorem 1}
\label{appendix:Theorem 1}
\begin{proof}
By Lemma \ref{keylemma-minibatch}, for any $0\leq s\leq S-1$,
\begin{align*}
\sum_{t=0}^{m-1} \mathbb{E}\left[\left\|\nabla f\left({\boldsymbol{x}}_t^{s+1}\right)\right\|^2\right] \leq \frac{R_{0,s+1} - R_{m,s+1}}{\gamma_S}.
\end{align*}
Noting that ${\boldsymbol{x}}_0^{s+1} = \tilde{\boldsymbol{x}}^s$, ${\boldsymbol{x}}_m^{s+1} = \tilde{\boldsymbol{x}}^{s+1}$ and $c_{m,s}=0$, the above inequality implies
\begin{align*}
\sum_{t=0}^{m-1} \mathbb{E}\left[\left\|\nabla f\left({\boldsymbol{x}}_t^{s+1}\right)\right\|^2\right] \leq \frac{\mathbb{E}\left[f\left(\tilde{\boldsymbol{x}}^s\right)\right] - \mathbb{E}\left[f\left(\tilde{\boldsymbol{x}}^{s+1}\right)\right]}{\gamma_S}.
\end{align*}
Summing over $s$ from $0$ to $S-1$, the above inequality yields
\begin{align*}
\frac{1}{m S}\sum_{s=0}^{S-1}\sum_{t=0}^{m-1} \mathbb{E}\left[\left\|\nabla f\left({\boldsymbol{x}}_t^{s+1}\right)\right\|^2\right] \leq \frac{f\left(\tilde{\boldsymbol{x}}^0\right) - f\left({\boldsymbol{x}}^{*}\right)}{m  S\gamma_S}.
\end{align*}
Using the above inequality and the definition of ${\boldsymbol{x}}_{\mathrm{out}}$ in Algorithm \ref{alg:svrg-sbb}, we conclude this theorem.
\end{proof}
\vspace{-0.5cm}
\section*{C. Proof of Theorem \ref{svrg_sbb_minibatch}}
\label{appendix:Theorem 2}

To prove this theorem, we need the following lemma.
\begin{lemma}
\label{lemma:analytic}
Given some positive integer $l\geq 2$, then for any $0<x\leq\frac{1}{l}$, the following holds $(1+x)^l \leq e^{l x} \leq 1 + 2 l x$.
\end{lemma}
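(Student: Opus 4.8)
The plan is to establish the two inequalities separately; both are elementary exponential estimates, so I would keep the argument short.

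For the left inequality $(1+x)^l \le e^{lx}$, I would invoke the standard bound $1 + t \le e^t$, which holds for every real $t$ (it is the tangent-line inequality for the convex function $t \mapsto e^t$ at $t = 0$). Setting $t = x > 0$ and raising both sides to the positive integer power $l$ gives $(1+x)^l \le (e^x)^l = e^{lx}$; note that no restriction on $x$ beyond positivity is used in this half.

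For the right inequality $e^{lx} \le 1 + 2lx$, the key reduction is the substitution $u := lx$: under the hypothesis $0 < x \le \tfrac{1}{l}$ we have $u \in (0,1]$, so it suffices to prove $e^u \le 1 + 2u$ for all $u \in (0,1]$. I would prove this by considering $g(u) := 1 + 2u - e^u$. One checks $g(0) = 0$, $g(1) = 3 - e > 0$, and $g''(u) = -e^u < 0$, so $g$ is concave on $[0,1]$; since a concave function dominates the chord through its endpoint values, $g(u) \ge (1-u)\,g(0) + u\,g(1) = u(3-e) \ge 0$ for all $u \in [0,1]$. Substituting $u = lx$ back then yields $e^{lx} \le 1 + 2lx$. (An equally quick alternative is the Taylor estimate $e^u - 1 - u = \sum_{k \ge 2} u^k/k! \le u^2 \sum_{k \ge 2} 1/k! = (e-2)u^2 \le u^2 \le u$ for $u \in (0,1]$, which gives the same conclusion.)

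I do not anticipate any real obstacle here: the statement is a routine pair of inequalities for the exponential function. The only point requiring mild attention is the bookkeeping that $l \ge 2$ together with $x \le \tfrac{1}{l}$ forces $lx \le 1$, so that the one-variable inequality $e^u \le 1 + 2u$ is invoked only on the interval $(0,1]$ where it is valid; outside that interval it fails, which is why the hypothesis $x \le \tfrac1l$ cannot be dropped.
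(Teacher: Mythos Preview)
Your proposal is correct and follows essentially the same route as the paper: the paper also derives $(1+x)^l \le e^{lx}$ from $\ln(1+x)\le x$ (equivalently your $1+t\le e^t$), and for the second inequality it too sets $z=lx\in(0,1]$ and considers $h(z)=1+2z-e^z$, asserting $h(z)\ge 0$ there. The only difference is that you actually justify this last nonnegativity via a concavity (or Taylor) argument, whereas the paper simply says ``it is easy to check.''
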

\begin{proof}
Note that $(1+x)^l = e^{l\cdot \ln(1+x)} \leq e^{lx}$,where the last inequality holds for $\ln(1+x)\leq x$ for any $x \in (0,1]$. Thus, we get the first inequality. Let $h(z) = 1+2z-e^z$ for any $z\in (0,1]$. It is easy to check that $h(z) \geq 0$ for any $z \in (0,1]$. Thus we get the second inequality.
\end{proof}

Based on this lemma, we show the proof of Theorem \ref{svrg_sbb_minibatch}.
\begin{proof}[\textbf{Proof of Theorem \ref{svrg_sbb_minibatch}}]
To prove this theorem, it only suffices to show that \eqref{Eq:cond-gamma} holds under the choice of $\beta_s$ \eqref{Eq:betas} and condition \eqref{Eq:cond-m-b} presented in this theorem.
To achieve this, we first provide two intermediate conditions implying \eqref{Eq:cond-gamma},
\begin{align}
& b\eta_{\epsilon,s}\beta_s + 2b\eta_{\epsilon,s}^sL^2 \leq \frac{1}{m}, \label{Eq:cond-inter1}\\
&b\eta_{\epsilon,s} \beta_s + b\eta_{\epsilon,s}L < \frac{1}{2}, \label{Eq:cond-inter2}
\end{align}
and then show that \eqref{Eq:cond-m-b} together with the choice of $\beta_s$ \eqref{Eq:betas} imply \eqref{Eq:cond-inter1}-\eqref{Eq:cond-inter2}.

\textbf{(a)} Here, we prove that \eqref{Eq:cond-inter1} and \eqref{Eq:cond-inter2} implies \eqref{Eq:cond-gamma}.
According to \eqref{eq:ct} and the initial condition ${c}_{m,s}=0$, we can easily check that for $t=m-1,\ldots,1,$
\begin{align*}
c_{t,s} = \frac{(\left(\rho_s\right)^{m-t}-1)\eta_{\epsilon,s}L^3}{\beta_s+2\eta_{\epsilon,s}L^2}.
\end{align*}
Noting that $\rho_s>1$ by its definition \eqref{eq:rhos}, then for any $t=1,\ldots, m-1$,
\begin{align}
\label{Eq:cts-bound}
c_{t,s} \leq c_{1,s} < \frac{(\left(\rho_s\right)^{m}-1)\eta_{\epsilon,s}L^3}{\beta_s+2\eta_{\epsilon,s}L^2}.
\end{align}
By \eqref{Eq:cond-inter1} and Lemma \ref{lemma:analytic},
\begin{align}
(\rho_s)^m
&= \left(1+b\eta_{\epsilon,s}\beta_s + 2b\eta_{\epsilon,s}^2L^2 \right)^m \nonumber\\
&\leq 1+2mb\eta_{\epsilon,s}(\beta_s + 2\eta_{\epsilon,s}L^2). \label{Eq:rhos-estimate}
\end{align}
Plugging \eqref{Eq:rhos-estimate} into \eqref{Eq:cts-bound}, and by the definition of \eqref{Eq:betas} imply
\begin{align}
\label{Eq:cond-ct-betas}
c_{t,s} < \frac{1}{2} \beta_s, \ t=1,\ldots,m, \ s=0,\ldots, S-1.
\end{align}
By \eqref{Eq:cond-ct-betas} and \eqref{Eq:cond-inter2}, there holds
\begin{align*}
b\eta_{\epsilon,s}(L+2c_{t+1,s})+c_{t+1,s}\beta_s^{-1}
<b\eta_{\epsilon,s}(L+\beta_s)+1/2<1.
\end{align*}
This yields \eqref{Eq:cond-gamma}.

\textbf{(b)}
In the next, we show that \eqref{Eq:cond-m-b} implies \eqref{Eq:cond-inter1} and \eqref{Eq:cond-inter2}.
By the definition of $\eta_{\epsilon,s}$ \eqref{Eq:eta-epsilon-s} and the $L$-smoothness of $f$, there holds
\begin{align}
\label{Eq:eta-epsilon-s-bound}
\frac{1}{mL}<\eta_{\epsilon,s}<\frac{1}{m\epsilon}.
\end{align}
Note that
\begin{align*}
&b\eta_{\epsilon,s}\beta_s + 2b\eta^2_{\epsilon,s}L^2
=4mb^2\eta_{\epsilon,s}^3L^3 + 2b\eta_{\epsilon,s}^2L^2
\leq\frac{1}{m}\left[\frac{4L^3}{\epsilon^3} \cdot \left(\frac{b}{m}\right)^2 + \frac{2L^2}{\epsilon^2} \cdot \frac{b}{m}\right]
< \frac{1}{m},
\end{align*}
where the first inequality holds for \eqref{Eq:eta-epsilon-s-bound}, and the final inequality follows from \eqref{Eq:cond-m-b} (i.e., $\frac{b}{m} < \frac{\epsilon^2}{L^2\left( 1+\sqrt{1+4\epsilon L^{-1}}\right)}$).
The above inequality yields \eqref{Eq:cond-inter1}.
Similarly, note that
\begin{equation}
    \label{Eq:cond2-verify}
    \begin{aligned}
    & & &\ \ b\eta_{\epsilon,s}\beta_s + b\eta_{\epsilon,s}L=4mb^2\eta_{\epsilon,s}^3L^3 + b\eta_{\epsilon,s}L\leq\frac{4L^3}{\epsilon^3}\cdot \left(\frac{b}{m}\right)^2 + \frac{L}{\epsilon} \cdot \left( \frac{b}{m} \right)<\frac{1}{4}<\frac{1}{2},
    \end{aligned}
\end{equation}
where the first inequality holds for \eqref{Eq:eta-epsilon-s-bound}, and the final inequality follows from \eqref{Eq:cond-m-b} (i.e., $b/m < \epsilon/2L(1+\sqrt{4L/\epsilon +1})$). The above inequality implies \eqref{Eq:cond-inter2}.

Furthermore, by \eqref{Eq:cond-ct-betas} and the definitions of $\Gamma_{t,s}$ \eqref{eq:Gammas} and $\beta_s$ \eqref{Eq:betas}, there holds
\begin{align*}
\Gamma_{t,s}
\geq b\eta_{\epsilon,s}\left[\frac{1}{2}-b\eta_{\epsilon,s}(L+\beta_s)\right]
\geq \frac{1}{4}b\eta_{\epsilon,s},
\end{align*}
where the second inequality follows from \eqref{Eq:cond2-verify}.
Thus,
\begin{align*}
\gamma_S
:= \min_{1\leq t\leq m,0\leq s\leq S-1} \Gamma_{t,s}
&\geq \frac{1}{4}b \min_{0\leq s \leq S-1}\eta_{\epsilon,s}.
\end{align*}
By Theorem 1, we conclude \eqref{eq:mini-rate}.
\end{proof}

\section*{D. Proof of Theorem \ref{svrg_sbb_pl}}
\begin{proof}
    According to Theorem \ref{svrg_sbb_minibatch}, the following holds
    begin
    \begin{equation}
        \label{Eq:gradient}
        \begin{aligned}
        \mathbb{E}[\|\nabla f(\tilde{{\boldsymbol{x}}}^{k})\|^2] \leq
         \frac{4\mathbb{E}[f(\tilde{{\boldsymbol{x}}}^{k-1})-f({\boldsymbol{x}}^*)]}{m S b \eta_{\epsilon,\min}}\leq \frac{2\mathbb{E}[\|\nabla f(\tilde{\boldsymbol{x}}^{k-1})\|^2]}{\lambda m S b \eta_{\epsilon,\min}}
        \leq  \rho \mathbb{E}[\|\nabla f(\tilde{\boldsymbol{x}}^{k-1})\|^2]
        \end{aligned}
    \end{equation}

   where the second inequality holds for the PL property. The above inequality implies \eqref{eq:pl-inner-rate}.
   By the PL property of $f$ again, we have
    \[
        \mathbb{E}[\|\nabla f(\tilde{{\boldsymbol{x}}}^{k})\|^2]\geq 2\lambda\mathbb{E}[f(\tilde{{\boldsymbol{x}}}^{k})-f({\boldsymbol{x}}^*)].
    \]
    This together with \eqref{Eq:gradient} yields \eqref{eq:pl-global}.
\end{proof}


Next, we provide some details including the convex and non-convex formulation of \textit{GOE}, some discussion of the \textit{GOE} framework and the proof for the lemma, propositions and theorems we proposed in the main paper. The numbering of equations and the reference follows that of the main paper.

\subsection*{E. Convex and Non-convex Formulation of GOE}
First, we revise some existing classification based ordinal embedding methods and verify that they are all the specific cases of generalized ordinal embedding \eqref{opt:nonconvex_goe}. As the dissimilarity functions in these method are all adopted as the squared Euclidean distance $\|\cdot\|^2_2$, we adopt the matrix $\boldsymbol{X}\in\mathbb{R}^{p\times n}$ to represent the embedding $\mathcal{X}$ for description clarity.

First of all, we introduce the Gram matrix $\boldsymbol{G}$ to linearize the distance calculation. We assume the matrix $\boldsymbol{X}=\{\boldsymbol{x}_1, \dots, \boldsymbol{x}_n\}\in\mathbb{R}^{p\times n}$ is centered at the origin as
\begin{equation}
  \label{eq:center_assum}
  \sum_{i=1}^n\ \boldsymbol{x}_i = \boldsymbol{0},
\end{equation}
and define the centering matrix $\boldsymbol{C}$
\begin{equation}
  \label{eq:center_matrix}
  \boldsymbol{C}= \boldsymbol{I}-\frac{1}{n}\boldsymbol{1}\cdot\boldsymbol{1}^\top,
\end{equation}
where $\boldsymbol{1}$ is a $n$-dimensional all-one column vector. With \eqref{eq:center_assum}, we immediately have $\boldsymbol{XC}=\boldsymbol{X}$. We call the Gram matrix $\boldsymbol{G}=\boldsymbol{X}^\top\boldsymbol{X}$ is also ``centered'' if $\boldsymbol{X}$ is a centered matrix which satisfies \eqref{eq:center_assum}, and it holds that
\begin{equation}
  \label{eq:center_gram}
  \boldsymbol{G}=\boldsymbol{X}^\top\boldsymbol{X}=\boldsymbol{C}^\top\boldsymbol{X}^\top\boldsymbol{XC}=\boldsymbol{C}^\top\boldsymbol{GC}.\\
\end{equation}
Given a centered matrix $\boldsymbol{X}$, we can establish a bijection between the Gram matrix $\boldsymbol{G}$ and the squared Euclidean distance matrix $\boldsymbol{D}$ as
\begin{subequations}
  \label{eq:bi_gram_distance}
  \begin{align}
    \boldsymbol{G}\ &=\ -\frac{1}{2}\boldsymbol{CDC},\\
    \boldsymbol{D}\ &=\ \textit{diag}(\boldsymbol{G})\cdot\boldsymbol{1}^\top-2\boldsymbol{G}+\boldsymbol{1}\cdot\textit{diag}(\boldsymbol{G})^\top,
  \end{align}
\end{subequations}
where the diagonal of $\boldsymbol{G}$ composes of the column vector $\textit{diag}(\boldsymbol{G})$. We refer \cite{dattorro2005convex} for the further properties of the squared Euclidean matrix $\boldsymbol{D}$. The bijection \eqref{eq:bi_gram_distance} indicates that
\begin{equation}
  \label{eq:distance_gram_elem}
  d^2(\boldsymbol{x}_i, \boldsymbol{x}_j) = g_{ii}-2g_{ij}+g_{jj},
\end{equation}
where $\boldsymbol{x}_i$ is the $i^{th}$ column of $\boldsymbol{X}$, $g_{ij}$ is the $(i,j)$ element of $\boldsymbol{G}$. Then, we can express the partial order $\{(\phi_{ij},\phi_{lk})\}$ or $\{(d^2_{ij}, d^2_{lk})\}$ as linear inequalities on the Gram matrix:
\begin{subequations}
  \begin{align}
    d^2_{ij} &< d^2_{lk}\ \ \\
    \Leftrightarrow g_{ii}-2g_{ij}+g_{jj} &< g_{ll}-2g_{lk}+g_{kk}.\label{eq:gram_constraint}
  \end{align}
\end{subequations}
Then we rewrite the \textit{GOE} problem \eqref{opt:nonconvex_goe} as
\begin{equation*}
  \underset{\boldsymbol{G}\in\mathbb{S}^{n}_+,\ \textit{rank}(\boldsymbol{G})\leq p}{\arg\min}\ \mathcal{L}_{\mathcal{Q},h}(\boldsymbol{G}, \mathcal{Y}_\mathcal{Q}), \tag{\ref{opt:convex_goe}}
\end{equation*}
where $\mathbb{S}^{n}_+$ is the $n$-dimensional positive semi-definite cone, the set of all symmetric positive semidefinite matrices in $\mathbb{R}^{n\times n}$; the rank constraint comes from the fact that $\textit{rank}(\boldsymbol{G})\leq \textit{rank}(\boldsymbol{X})\leq \min(n,p)=p$. With $\boldsymbol{G}=\boldsymbol{X}^\top\boldsymbol{X}$, $\boldsymbol{X}$ can be determined if we obtained $\boldsymbol{G}$ up to a unitary transformation.
Since the deviation between $d^2_{ij}$ and $d^2_{lk}$, $d^2_{ij}-d^2_{lk}$ which is the input of classifier $h$, can be calculated by the linear operations on $\boldsymbol{G}$, and the constraints like \eqref{eq:gram_constraint} are all linear, \eqref{opt:convex_goe} is always a convex optimization if we choose the convex classifier $h$ and the convex loss function $\ell$. This is the main advantage of optimizing $\boldsymbol{G}$ instead of $\boldsymbol{X}$.
We note
\begin{equation}
  \begin{aligned}
    & \Delta_q\boldsymbol{G} &=&\ \ \langle\boldsymbol{W}_q, \boldsymbol{G}\rangle = \textit{tr}(\boldsymbol{W}_q\boldsymbol{G})\\
    & &=&\ \ g_{ii}-2g_{ij}+g_{jj} - g_{ll}+2g_{lk}-g_{kk}
  \end{aligned}
\end{equation}
where $\boldsymbol{W}_q$ is a $n\times n$ matrix and has zero entry everywhere except on the entries corresponding to $q=(i,j,l,k)$ which has the form
\begin{equation}
  \boldsymbol{W}_q =
  \begin{blockarray}{crrrr}
   &i&j&l&k\\
    \begin{block}{c(rrrr)}
    i&1&-1&0&0\\
    j&-1&1&0&0\\
    l&0&0&-1&1\\
    k&0&0&1&-1\\
    \end{block}
  \end{blockarray}\ \ ,
\end{equation}
and $\langle \boldsymbol{W}_q, \boldsymbol{G}\rangle\triangleq\textit{tr}(\boldsymbol{W}_q\boldsymbol{G})=\textit{vec}(\boldsymbol{W}_q)^\top\textit{vec}(\boldsymbol{G})$ for any compatible matrices.

The Gram matrix $\boldsymbol{G}$ is introduced by the well-known Generalized Non-metric Multidimensional Scaling (\textit{GNMDS}) \cite{agarwal2007generalized}. \textit{GNMDS} obtains the embedding $\boldsymbol{X}$ by a ``SVM''-type algorithm. It adopts the soft-margin classifier and the hinge loss in \eqref{opt:convex_goe} as
\begin{equation}
  \label{opt:gnmds_convex}
  \begin{aligned}
    & &\underset{\{\xi_q\},\ \boldsymbol{G}}{\text{minimize}}&\ \ \underset{q\in\mathcal{Q}}{\sum}\ \xi_q,\\
    & &\text{subject to}&\ \ \Delta_q\boldsymbol{G}\leq 1 - \xi_q,\ \xi_q \geq 0\\
    & & &\ \ \boldsymbol{G}\succeq0,\ \textit{rank}(\boldsymbol{G})\leq p,\\
    & & &\ \sum_{i,j=1}^{n}\ g_{ij}=0,
  \end{aligned}
\end{equation}
where $\sum_{i,j=1}^{n}\ g_{ij}=0$ is the constraints for the `centered' $\boldsymbol{G}$ as $\boldsymbol{G}=\boldsymbol{X}^\top\boldsymbol{X}$ and
\begin{equation}
  0 = \left(\sum_{i=1}^n\boldsymbol{x}_i\right)^\top\left(\sum_{i=1}^n\boldsymbol{x}_i\right)=\sum_{i=1}^n\sum_{j=1}^n\boldsymbol{x}_i^\top\boldsymbol{x}_j=\sum_{i=1}^n\sum_{j=1}^ng_{ij}.
\end{equation}
The Crowd Kernel Learning (\textit{CKL}) \cite{tamuz2011adaptiive}, Stochastic Triplet Embedding (\textit{STE}) and \textit{t}-Distributed STE (\textit{TSTE}) \cite{vandermaaten2012stochastic} solve the \textit{GOE} problem by employing probabilistic models
\begin{equation}
  p^{\text{ckl}}_q = \frac{\exp(d^2_{lk})}{\exp(d^2_{ij})+\exp(d^2_{lk})},
\end{equation}
\begin{equation}
  p^{\text{ste}}_q = \frac{\exp(-d^2_{ij})}{\exp(-d^2_{ij})+\exp(-d^2_{lk})},
\end{equation}
and
\begin{equation}
  p^{\text{tste}}_q = \frac{\left(1+\frac{d^2_{ij}}{\alpha}\right)^{-\frac{\alpha+1}{2}}}{\left(1+\frac{d^2_{ij}}{\alpha}\right)^{-\frac{\alpha+1}{2}}+\left(1+\frac{d^2_{lk}}{\alpha}\right)^{-\frac{\alpha+1}{2}}}
\end{equation}
with threshold $t=0.5$ as the classifiers and the logistic loss like
\begin{equation}
  \label{opt:ckl_convex}
  \begin{aligned}
    & &\underset{\boldsymbol{G}}{\text{minimize}}&\ \ \underset{q\in\mathcal{Q}}{\sum}\ \log(1+\text{kernel}(\Delta_q\boldsymbol{G}))\\
    & &\text{subject to}&\ \ \boldsymbol{G}\succeq0,\ \textit{rank}(\boldsymbol{G})\leq p,\\
    & & &\ \sum_{i,j=1}^{n}\ g_{ij}=0.
  \end{aligned}
\end{equation}
where $\text{kernel}(\cdot)$ can be adopted as the Gaussian kernel and the Student-t kernel with $\alpha$ degrees of freedom.

Although the semi-definite positive programming \eqref{opt:convex_goe} is a convex optimization problem, there exist some disadvantages on obtaining the embedding $\boldsymbol{X}$ from the Gram matrix $\boldsymbol{G}$: (i) the positive semi-definite constraint on Gram matrix, $\boldsymbol{G}\succeq 0$, needs project $\boldsymbol{G}$ onto PSD cone $\mathbb{S}_+$, which is performed by the expensive singular value decomposition in each iteration due to the subspace spanned by the non-negative eigenvectors satisfies the constraint, is a computational bottleneck of optimization; (ii) the embedding dimension is $p\ll n$ and we hope that $\textit{rank}(\boldsymbol{G})\leq p$. If $\text{rank}(\boldsymbol{G})\gg p$, the freedom degree of $\boldsymbol{G}$ is much larger than $\boldsymbol{X}$ with over-fitting. Although $\boldsymbol{G}$ is a global optimal solution of \eqref{opt:convex_goe}, the subspace spanned by the largest $p$ eigenvectors of $\boldsymbol{G}$ also produce less accurate embedding. We can tune the regularization parameter $\lambda$ to force $\{\boldsymbol{G}_t\},\ t=1,\dots,T$ generated by the optimization algorithms to be low-rank and cross-validation is the most utilized technology. This also needs extra computational cost. In summary, projection and parameter tuning render gradient descent methods computationally prohibitive for learning the embedding $\boldsymbol{X}$ with ordinal information $\mathcal{Q}$. To overcome these challenges, we will exploit the non-convex and stochastic optimization techniques for the ordinal embedding problem. Here the non-convex formulation of ordinal embedding will replace the Gram matrix $\boldsymbol{G}$ with the distance matrix $\boldsymbol{D}$ which directly solving the embedded matrix $\boldsymbol{X}$
\begin{equation*}
  \underset{\boldsymbol{X}\in\mathbb{R}^{p\times n}}{\arg\min}\ \mathcal{L}_{\mathcal{Q},h}(\boldsymbol{X},\mathcal{Y}_{\mathcal{Q}}),\tag{\ref{opt:nonconvex_goe}}
\end{equation*}
where the loss and the classifier can be adopted as the same as the convex formulation. The instance for classier $h$ in \eqref{opt:nonconvex_goe} is $\Delta_q \boldsymbol{D}(\boldsymbol{X})$
\begin{equation}
  \begin{aligned}
    & \Delta_q \boldsymbol{D}(\boldsymbol{X}) &=&\ \ d^2_{ij}-d^2_{lk}\\
    & &=&\ \ \|\boldsymbol{x}_{i}-\boldsymbol{x}_{j}\|^2_2-\|\boldsymbol{x}_{l}-\boldsymbol{x}_{k}\|^2_2.
  \end{aligned}
\end{equation}

\section*{F. Discussion of GOE}
The existed literatures \cite{agarwal2007generalized,tamuz2011adaptiive,vandermaaten2012stochastic,53e99af7b7602d97023851bf,2015arXiv150102861A,Terada2014LocalOE,amid2015multiview} are all obtained the embedding $\mathcal{X}$ in the Euclidean space as it assume that the embedding $\mathcal{X}$ is lack of prior knowledge of $\mathcal{O}$ or the true dissimilarity function $\psi$ is unknown. For fair comparison, we focus on embedding $\mathcal{O}$ into Euclidean space $\mathbb{R}^p$ by classifying the instances indexed by $\mathcal{Q}$. That is to say, $\mathcal{X}\subset\mathbb{R}^p$ and the dissimilarity function $\phi$ of $\mathcal{X}$ is chosen as the squared Euclidean distance $d^2_{ij}=d^2(\boldsymbol{x}_i,\boldsymbol{x}_j)=\|\boldsymbol{x}_i-\boldsymbol{x}_j\|^2_2$. It is a very interesting direction to find helpful constraints in $\mathcal{O}$ or the prior knowledge of the true dissimilarity function $\psi$ in real applications. We leave this as one of our future works.

In practice, the class label set $\mathcal{Y}_{\mathcal{Q}}$ are typically obtained by the crowdsourcing system or questionnaire survey. The comparison results are inferred by combining answers from multiple human annotators. So these ordinal constraints could not be consistence with the true dissimilarity relationship of $\mathcal{O}$, which make $\mathcal{Y}_{\mathcal{Q}}$ contain noise. Here we focus on the problem which is provided single class label $y_q$ with a quadruplet $(i,j,l,k)$ to obtain the embedding. Using multiple inconsistence labels $\{y_q^1,\dots,y_q^s\}$ to estimate the embedding $\mathcal{X}$ is one of our future works.

The desired embedding dimension $p$ is a parameter of the ordinal embedding. It is well known that there exists a perfect embedding $\mathcal{X}$ estimated by any label set $\mathcal{Y}$ on the Euclidean distances in $\mathbb{R}^{n-2}$, even for the noisy constraints. Here we consider the \textit{low-dimensional} setting where $p\ll n$. The optimal or smallest $p$ for noisy ordinal constraints $\mathcal{Y}_{\mathcal{Q}}$ is another future work. The choices of $p$ in experiment section differ form the applications.

We notice that the label set $\mathcal{Y}_{\mathcal{Q}}$ carries the distance comparison information about $\boldsymbol{D}$, but $\boldsymbol{D}$ is invariant to the so-called \textit{similarity transformations, isotonic transformations or Procrustes transformations.} It means that the embedding $\boldsymbol{X}\in\mathbb{R}^{p\times n}$ obtained by $\mathcal{Y}_{\mathcal{Q}}$ is not unique as we rotate, reflect, translate, or scale $\boldsymbol{X}$ in Euclidean space and the new matrix ${\boldsymbol{X}}'$ also consist with the same constraints $\mathcal{Y}_{\mathcal{Q}}$. Without loss of generality, we assume the points $\boldsymbol{x}_1,\dots, \boldsymbol{x}_n$ are centered at the origin. Even disregarding similarity transformations, the ordinal embedding $\boldsymbol{X}$ is still not unique. Points of the embedding, $\{\boldsymbol{x}_1, \dots, \boldsymbol{x}_n\}$, can be perturbed slightly without changing their distance ordering, and so without violating any constraints. Kleindessner and von Luxburg \cite{53e99af7b7602d97023851bf} proved the long-standing conjecture that, under mind conditions, an embedding of a sufficiently large number of objects which preserves the total ordering of pairwise distances between all objects must place all objects to within $\varepsilon$ of their correct positions, where $\varepsilon\to0$ as $n\to\infty$. However, we focus on the dimension reduction setting as $p\ll n$ and $\mathcal{O}$ is always a finite set in real application. The embedding $\mathcal{X}$ or $\boldsymbol{X}$ could not be unique. Therefore, we adopt the classification metric to evaluate the quality of the estimated embedding.

\section*{F. Lipschitz Differentiability of GOE}
Note that the Lipschitz differentiability of the objective function is crucial for the establishment of the convergence rate of \textit{SVRG-SBB} in Theorem 1. In the following, we give a lemma to show that a part of aforementioned objective functions in the \textit{GOE} problem are Lipschitz differentiable.

\begin{lemma}
\label{lemma:lipschitz}
The ordinal embedding functions are Lipschitz differentiable for any bounded variable $\boldsymbol{X}$.
\end{lemma}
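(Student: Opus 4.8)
The plan is to exploit the layered structure shared by every objective in the \textit{GOE} family: each summand $f_q$, indexed by a quadruplet $q=(i,j,l,k)$, is the composition of a fixed smooth ``outer'' function with the two squared-distance coordinates $d_{ij}^2(\boldsymbol{X})=\|\boldsymbol{x}_i-\boldsymbol{x}_j\|_2^2$ and $d_{lk}^2(\boldsymbol{X})=\|\boldsymbol{x}_l-\boldsymbol{x}_k\|_2^2$, and each such squared distance is a quadratic polynomial in the entries of $\boldsymbol{X}$ whose Hessian (in the vectorized variable) is a \emph{constant} matrix. Concretely I would write $f_q(\boldsymbol{X}) = G_q\!\left(d_{ij}^2(\boldsymbol{X}),\, d_{lk}^2(\boldsymbol{X})\right)$, where $G_q$ is: for \textit{GNMDS}, a (smoothed) hinge $G_q(u,v)=\max\{0,\,1-(u-v)\}$; for \textit{CKL}/\textit{STE}, the corresponding logistic-type loss; for \textit{TSTE}, the loss built from the Student-$t$ kernel with $\alpha$ degrees of freedom. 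In every case $G_q$ is twice continuously differentiable on $\mathbb{R}_+^2$ (after the standard softplus-type smoothing of the hinge) with first and second partial derivatives bounded uniformly on every bounded subset of $\mathbb{R}_+^2$; verifying this for each specific loss/kernel is a short, direct case check that I would relegate to the computation.

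First I would record the elementary estimates for the inner map on the ball $\mathcal{B}_B := \{\boldsymbol{X}\in\mathbb{R}^{p\times n}: \|\boldsymbol{X}\|_F\le B\}$: since $\|\boldsymbol{x}_i\|_2\le B$ one gets $d_{ij}^2(\boldsymbol{X})\le 4B^2$, $\|\nabla d_{ij}^2(\boldsymbol{X})\| = 2\|\boldsymbol{x}_i-\boldsymbol{x}_j\|_2 \le 4B$, and $\nabla^2 d_{ij}^2$ is a fixed matrix (a signed sum of coordinate blocks, like $\boldsymbol{W}_q$ in the \textit{GOE} appendix) of operator norm $\le 4$. Hence on $\mathcal{B}_B$ the pair $(d_{ij}^2(\boldsymbol{X}), d_{lk}^2(\boldsymbol{X}))$ stays in the compact box $[0,4B^2]^2$, so $M_1 := \sup\{|\partial_u G_q|,|\partial_v G_q|\}$ and $M_2 := \sup\|\nabla^2 G_q\|$ over that box are finite constants depending only on $B$ (and on the loss parameters).

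Next I would differentiate twice via the chain rule. Putting $\boldsymbol{a}_q := \nabla d_{ij}^2(\boldsymbol{X})$, $\boldsymbol{b}_q := \nabla d_{lk}^2(\boldsymbol{X})$ (vectorized), one has $\nabla f_q(\boldsymbol{X}) = \partial_u G_q\, \boldsymbol{a}_q + \partial_v G_q\, \boldsymbol{b}_q$ and $\nabla^2 f_q(\boldsymbol{X}) = (\boldsymbol{a}_q\ \boldsymbol{b}_q)\,\nabla^2 G_q\,(\boldsymbol{a}_q\ \boldsymbol{b}_q)^\top + \partial_u G_q\,\nabla^2 d_{ij}^2 + \partial_v G_q\,\nabla^2 d_{lk}^2$. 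Using the bounds above, $\|\nabla^2 f_q(\boldsymbol{X})\| \le M_2(\|\boldsymbol{a}_q\|^2+\|\boldsymbol{b}_q\|^2) + 8M_1 \le 32 M_2 B^2 + 8 M_1 =: L_q$ for all $\boldsymbol{X}\in\mathcal{B}_B$. Since $\mathcal{B}_B$ is convex, the mean value inequality upgrades this to $\|\nabla f_q(\boldsymbol{X})-\nabla f_q(\boldsymbol{Y})\|\le L_q\|\boldsymbol{X}-\boldsymbol{Y}\|$ on $\mathcal{B}_B$; averaging over $q\in\mathcal{Q}$ gives the full objective an $L$-Lipschitz gradient on $\mathcal{B}_B$ with $L := \frac{1}{|\mathcal{Q}|}\sum_{q}L_q$.

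The step I expect to be the main obstacle is not the chain-rule bookkeeping but keeping the statement honest: the bound $L_q$ grows like $B^2$, so these objectives are genuinely \emph{not} globally Lipschitz-differentiable — this is exactly why the ``bounded $\boldsymbol{X}$'' hypothesis is indispensable, and the proof must make the $B$-dependence explicit rather than claiming a universal constant. A secondary wrinkle is the non-differentiability of the raw \textit{GNMDS} hinge; I would handle it either by invoking a standard smoothing so that $G_q\in C^2$, or by restricting the claim (as the surrounding text already does) to the smooth models \textit{CKL}/\textit{STE}/\textit{TSTE}, for which $G_q\in C^2$ with locally bounded derivatives is immediate.
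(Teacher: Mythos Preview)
Your argument is correct and arguably cleaner than the paper's, but the route is genuinely different. The paper does not use the composite decomposition $f_q = G_q(d_{ij}^2,d_{lk}^2)$ with a single chain-rule bound; instead it proceeds case by case, writing each objective in terms of $\boldsymbol{X}_q=(\boldsymbol{x}_i,\boldsymbol{x}_j,\boldsymbol{x}_l,\boldsymbol{x}_k)^\top$ and a fixed block matrix $\boldsymbol{M}$, then computing $\nabla_q f(\boldsymbol{X})$ and $\nabla_q^2 f(\boldsymbol{X})$ explicitly for \textit{STE}, \textit{GNMDS}, and \textit{TSTE} separately, and reading off that the Hessian entries (hence eigenvalues) are bounded when $\boldsymbol{X}$ is bounded. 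Your unified treatment buys a single estimate $\|\nabla^2 f_q\|\le 32M_2B^2+8M_1$ with an explicit $B$-dependence and no per-model algebra, while the paper's explicit formulas buy concrete expressions one can plug numbers into. On the \textit{GNMDS} hinge, the paper does \emph{not} smooth: it gives the piecewise Hessian, observes differentiability fails only on the measure-zero set $d_{ij}^2+1-d_{lk}^2=0$, calls this ``exceedingly rare,'' and proceeds; your two options (smooth or restrict to the $C^2$ models) are more rigorous than what the paper actually does.
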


\begin{proof}
  Let
  \begin{equation}
  \boldsymbol{X}_q =
  \begin{pmatrix}
  \boldsymbol{X}_1 \\
  \boldsymbol{X}_2
  \end{pmatrix} =
  \begin{pmatrix}
  \boldsymbol{x}_i \\
  \boldsymbol{x}_j \\
  \boldsymbol{x}_l \\
  \boldsymbol{x}_k
  \end{pmatrix}
  \end{equation}
  where $\boldsymbol{X}_1=[\boldsymbol{x}_i^T,\boldsymbol{x}_j^T]^T$, $\boldsymbol{X}_2=[\boldsymbol{x}_l^T,\boldsymbol{x}_k^T]^T$ and
  \begin{equation}
  \boldsymbol{M} =
  \left(\begin{array}{cc}
  \boldsymbol{M}_1 & \\
  & \boldsymbol{M}_2 \\
  \end{array}\right) =
  \left(\begin{array}{rrrr}
  \boldsymbol{I}& -\boldsymbol{I}&  &   \\
  -\boldsymbol{I}& \boldsymbol{I}&  &   \\
  &  & -\boldsymbol{I}& \boldsymbol{I} \\
  &  &  \boldsymbol{I}& -\boldsymbol{I}
  \end{array}\right)
  \end{equation}
  where $\boldsymbol{I}_{d\times d}$ is the identity matrix.

  The first-order gradient of \textit{STE} is
  \begin{equation}
  \nabla_q f^{\text{ste}}(\boldsymbol{X})=
  2\frac{\exp(d^2_{ij}(\boldsymbol{X})-d^2_{lk}(\boldsymbol{X}))}{1+\exp(d^2_{ij}(\boldsymbol{X})-d^2_{lk}(\boldsymbol{X}))}\boldsymbol{MX}_q
  \end{equation}
  and the second-order Hessian matrix of \textit{STE} is
  \begin{equation}
  \begin{aligned}\nabla^2_q f^{\text{ste}}(\boldsymbol{X})=\frac{4\exp(d^2_{ij}(\boldsymbol{X})-d^2_{lk}(\boldsymbol{X}))}{[1+\exp(d^2_{ij}(\boldsymbol{X})-d^2_{lk}(\boldsymbol{X}))]^2}\boldsymbol{MX}_q\boldsymbol{X}^T_q\boldsymbol{M}+\frac{2\exp(d^2_{ij}(\boldsymbol{X})-d^2_{lk}(\boldsymbol{X}))}{1+\exp(d^2_{ij}(\boldsymbol{X})-d^2_{lk}(\boldsymbol{X}))}\boldsymbol{M}.
  \end{aligned}
  \end{equation}
  All elements of $\nabla^2_q f^{\text{ste}}(\boldsymbol{X})$ are bounded. So the eigenvalues of $\nabla^2_q f^{\text{ste}}(\boldsymbol{X})$ are bounded. By the definition of Lipschitz continuity, \textit{STE} has Lipschitz continuous gradient with bounded $\boldsymbol{X}$. As the loss function of \textit{CKL} is very similar to \textit{STE}, we omit the derivation of \textit{CKL}.

  The first-order gradient of \textit{GNMDS} is
  \begin{equation}
  \nabla_q f^{\text{gnmds}}(\boldsymbol{X})=
  \left\{\begin{array}{cl}
  \boldsymbol{0}, &\ \text{if}\ d^2_{ij}(\boldsymbol{X})+1-d^2_{lk}(\boldsymbol{X})<0,\\
  2\boldsymbol{MX}_q, &\ \text{otherwise},
  \end{array}\right.
  \end{equation}
  and the second-order Hessian matrix of GNMDS is
  \begin{equation}
  \nabla^2_q f^{\text{gnmds}}(\boldsymbol{X})=
  \left\{\begin{array}{cl}
  \boldsymbol{0}, & \text{if}\ d^2_{ij}(\boldsymbol{X})+1-d^2_{lk}(\boldsymbol{X})<0,\\
  2\boldsymbol{M}, &\ \text{otherwise}.
  \end{array}\right.
  \end{equation}
  If $d^2_{ij}(\boldsymbol{X})+1-d^2_{lk}(\boldsymbol{X})\neq 0$ for all $q\in\mathcal{Q}$, $\nabla_q f^{\text{gnmds}}(\boldsymbol{X})$ is continuous on $\{\boldsymbol{x}_i, \boldsymbol{x}_j, \boldsymbol{x}_l, \boldsymbol{x}_k\}$ and the Hessian matrix $\nabla^2_q f^{\text{gnmds}}(\boldsymbol{X})$ has bounded eigenvalues. So \textit{GNMDS} has Lipschitz continuous gradient in some quadruple set as $\{\boldsymbol{x}_i, \boldsymbol{x}_j, \boldsymbol{x}_l, \boldsymbol{x}_k\}\subset\mathbb{R}^{p \times 4}$. As the special case of $q=(i,j,l,k)$ which satisfied $d^2_{ij}(\boldsymbol{X})+1-d^2_{lk}(\boldsymbol{X})=0$ is exceedingly rare, we split the embedding $\boldsymbol{X}$ into pieces $\{\boldsymbol{x}_i, \boldsymbol{x}_j, \boldsymbol{x}_l, \boldsymbol{x}_k\}$ and employ \textit{SGD}, \textit{SVRG} and \textit{SVRG-SBB} to optimize the objective function of \textit{GNMDS}. The empirical results are showed in the experiment section.

  Note $s_{ij} = \alpha+d^2_{ij}(\boldsymbol{X})$, and the first-order gradient of \textit{TSTE} is
  \begin{equation}
    \label{eq:gradient:student}
    \begin{aligned}
    & \nabla_q f^{\text{tste}}(\boldsymbol{X})=\frac{\alpha+1}{s_{ij}}\left(\begin{array}{cc}\boldsymbol{M}_1\boldsymbol{X}_1 & \\ & \boldsymbol{O} \\ \end{array}\right)-\frac{\alpha^{-\alpha}(\alpha+1)}{s_{ij}^{-\frac{\alpha+1}{2}}+s_{lk}^{-\frac{\alpha+1}{2}}}\left(\begin{array}{cc}s_{ij}^{-\frac{\alpha+3}{2}}\boldsymbol{M}_1\boldsymbol{X}_1 & \\ & s_{lk}^{-\frac{\alpha+3}{2}}\boldsymbol{M}_2\boldsymbol{X}_2 \\ \end{array}\right)
    \end{aligned}
  \end{equation}
  The second-order Hessian matrix of TSTE is
    \begin{equation}
    \label{eq:hessian:student}
    \begin{aligned}
    & \nabla^2_q f^{\text{tste}}(\boldsymbol{X}) &=&\ \ \ \frac{\alpha+1}{s_{ij}^2}\left(\begin{array}{cc}s_{ij}\boldsymbol{M}_1-2\boldsymbol{M}_1\boldsymbol{X}_1\boldsymbol{X}^T_1\boldsymbol{M}_1 & \\ & \boldsymbol{O} \\ \end{array}\right)-\frac{\alpha^{-\alpha}(\alpha+1)}{s_{ij}^{-\frac{\alpha+1}{2}}+s_{lk}^{-\frac{\alpha+1}{2}}}\left(\begin{array}{cc}s_{ij}^{-\frac{\alpha+3}{2}}\boldsymbol{M}_1 & \\ & s_{lk}^{-\frac{\alpha+3}{2}}\boldsymbol{M}_2\end{array}\right)\\
    & &+&\ \ \ \frac{\alpha^{-\alpha}(\alpha+1)^2}{\left(s_{ij}^{-\frac{\alpha+1}{2}}+s_{lk}^{-\frac{\alpha+1}{2}}\right)^2}\left(\begin{array}{cc}s_{ij}^{-(\alpha+3)}\boldsymbol{M}_1\boldsymbol{X}_1\boldsymbol{X}^T_1\boldsymbol{M}_1 & \\ & s_{lk}^{-(\alpha+3)}\boldsymbol{M}_2\boldsymbol{X}_2\boldsymbol{X}^T_2\boldsymbol{M}_2\end{array}\right)\\
    & &+&\ \ \ \frac{\alpha^{-\alpha}(\alpha+1)(\alpha+3)}{s_{ij}^{-\frac{\alpha+1}{2}}+s_{lk}^{-\frac{\alpha+1}{2}}}\left(\begin{array}{cc}s_{ij}^{-\frac{\alpha+5}{2}}\boldsymbol{M}_1\boldsymbol{X}_1\boldsymbol{X}_1^T\boldsymbol{M}_1 & \\ & s_{lk}^{-\frac{\alpha+5}{2}}\boldsymbol{M}_2\boldsymbol{X}_2\boldsymbol{X}_2^T\boldsymbol{M}_2\end{array}\right)
    \end{aligned}
    \end{equation}
  The boundedness of eigenvalues of The loss function of $\nabla^2_p f^{\text{tste}}(\boldsymbol{X})$ can infer that the \textit{TSTE} loss function has Lipschitz continuous gradient with bounded $\boldsymbol{X}$.

  We focus on  a special case of quadruple comparisons as $i=l$ and $\{i,j,i,k\}\subset[n]^3$ in the Experiment section. To verify the Lipschitz continuous gradient of ordinal embedding objective functions with $c=(i,j,i,k)$ as $i=l$, we introduction the matrix $\boldsymbol{A}$ as
  \begin{equation}
  \boldsymbol{A} =
  \left(\begin{array}{rrrr}
  \boldsymbol{I} & \boldsymbol{0} & \boldsymbol{0} & \boldsymbol{0}\\
  \boldsymbol{0} & \boldsymbol{I} & \boldsymbol{0} & \boldsymbol{I}\\
  \boldsymbol{0} & \boldsymbol{0} & \boldsymbol{0} & \boldsymbol{I}
  \end{array}\right).
  \end{equation}
  By chain rule for computing the derivative, we have
  \begin{equation}
  \begin{aligned}
  & \nabla f_{ijk}(\boldsymbol{X}) &=&\ \ \ \boldsymbol{A}\nabla f_{ijlk}(\boldsymbol{X}),\\
  & \nabla^2 f_{ijk}(\boldsymbol{X}) &=&\ \ \boldsymbol{A}\nabla^2 f_{ijlk}(\boldsymbol{X})\boldsymbol{A}^T.
  \end{aligned}
  \end{equation}
  where $l = i$. As $\boldsymbol{A}$ is a constant matrix and $\nabla^2 f_{ijlk}(\boldsymbol{X})$ is bounded, all elements of the Hessian matrix $\nabla^2 f_{ijk}(\boldsymbol{X})$ are bounded. So the eigenvalues of $\nabla^2 f_{ijk}(\boldsymbol{X})$ is also bounded. The ordianl embedding functions of \textit{CKL}, \textit{STE} and \textit{TSTE} with triplewise compsrisons have Lipschitz continuous gradient with bounded $\boldsymbol{X}$.
\end{proof}

\end{document}